\documentclass{article}

\usepackage{microtype}
\usepackage{graphicx}
\usepackage{subfigure}
\usepackage{booktabs} 

\usepackage{arxiv}
\usepackage[numbers,sort&compress]{natbib}

\usepackage{hyperref}

\usepackage{amsmath}
\usepackage{amssymb}
\usepackage{mathtools}
\usepackage{amsthm}

\usepackage[capitalize,noabbrev]{cleveref}

\theoremstyle{plain}
\newtheorem{theorem}{Theorem}[section]
\newtheorem{proposition}[theorem]{Proposition}
\newtheorem{lemma}[theorem]{Lemma}
\newtheorem{corollary}[theorem]{Corollary}
\theoremstyle{definition}

\newtheorem{assumption}[theorem]{Assumption}
\theoremstyle{remark}
\newtheorem{remark}[theorem]{Remark}

\usepackage[disable,textsize=tiny]{todonotes}
\usepackage{dsfont}
\usepackage{booktabs}
\usepackage{multirow}
\usepackage{tabularx}
\usepackage{wrapfig}
\usepackage[dvipsnames]{xcolor}
\usepackage{colortbl}
\usepackage{array}
\usepackage{pifont}
\usepackage{xspace}
\usepackage{makecell}
\usepackage{enumitem}
\usepackage[most]{tcolorbox}
\usepackage{float}
\usepackage{wrapfig}
\usepackage{caption}
\usepackage{placeins}
\usepackage{tikz}

\newcommand{\indep}{\mathrel{\perp\mspace{-10mu}\perp}} 
\newcommand{\lessconst}{\mathrel{\lesssim}}
\newcommand{\E}{\mathbb{E}}                
\newcommand{\var}{\mathrm{Var}}            
\newcommand{\1}{\mathds{1}}                
\newcommand{\diff}{\mathrm{d}}

\DeclareMathOperator*{\argmin}{arg\,min}

\definecolor{shallowyellow}{RGB}{255, 255, 210}  
\definecolor{shallowblue}{RGB}{210, 235, 255}    
\definecolor{darkgreen}{RGB}{0,100,0}
\definecolor{lightred}{RGB}{180,40,40}
\definecolor{lightgreen}{RGB}{0,120,0}
\definecolor{icmlgray}{RGB}{245,245,245}
\newtcolorbox{contributionbox}{
  colback=icmlgray,
  colframe=black,
  boxrule=0pt,
  arc=2pt,
  left=6pt,
  right=6pt,
  top=6pt,
  bottom=6pt,
  boxsep=0pt,
  breakable
}

\newcommand{\xmark}{\textcolor{lightred}{\ding{55}}}
\newcommand{\cmark}{\textcolor{ForestGreen}{\ding{51}}}

\newcommand{\methods}{LT-O-learners\xspace}
\newcommand{\LTT}{LT-T-learner\xspace}
\newcommand{\LTRA}{LT-RA-learner\xspace}
\newcommand{\LTIPW}{LT-IPW-learner\xspace}
\newcommand{\DR}{LT-O-DR-learner\xspace}
\newcommand{\TO}{LT-O-TO-learner\xspace}
\newcommand{\LO}{LT-O-LO-learner\xspace}
\newcommand{\DO}{LT-O-DO-learner\xspace}

\title{Orthogonal Learner for Estimating Heterogeneous Long-Term Treatment Effects}

\author{%
  Haorui Ma\thanks{Equal contribution. Correspondence to: Haorui Ma \texttt{<H.Ma@lmu.de>}, Stefan Feuerriegel \texttt{<feuerriegel@lmu.de>}.} \\
  AI in Management, LMU Munich\\
  Munich Center for Machine Learning\\
  Munich, Germany\\
  \And
  Dennis Frauen\footnotemark[1] \\
  AI in Management, LMU Munich\\
  Munich Center for Machine Learning\\
  Munich, Germany\\
  \And
  Valentyn Melnychuk\footnotemark[1] \\
  AI in Management, LMU Munich\\
  Munich Center for Machine Learning\\
  Munich, Germany\\
  \And
  Stefan Feuerriegel\footnotemark[1] \\
  AI in Management, LMU Munich\\
  Munich Center for Machine Learning\\
  Munich, Germany\\
}

\begin{document}

\maketitle

\begin{abstract}
Estimation of heterogeneous \textit{long-term} treatment effects (HLTEs) is relevant for personalized decision-making in marketing, economics, and medicine, where short-term observational datasets are often combined with long-term observational datasets. However, HLTE estimation is challenging due to limited overlap in treatment assignments or in long-term outcomes for certain subpopulations, which can lead to unstable HLTE estimates with large finite-sample variance. To address this challenge, we introduce the \textbf{LT-O-learners} (Long-Term Orthogonal Learners), a set of novel orthogonal learners for HLTE estimation in the canonical HLTE setting with surrogacy. The key idea of our \methods is to \textit{retarget} the loss via custom overlap weights that downweight low-overlap samples. We show that the retargeted loss recovers the true HLTE pointwise and satisfies Neyman-orthogonality. We further prove two key theoretical results: (i)~The nuisance error enters the error bound only through higher-order terms, which means our learners are robust to nuisance estimation error. (ii)~Under a linear function class, the retargeting effectively controls the asymptotic variance of the HLTE estimator via the overlap weights in low-overlap regimes. We conduct experiments on synthetic and real-world datasets to confirm the theoretical properties of our \methods, particularly robustness in low-overlap regimes. To our knowledge, ours are the first orthogonal learners for HLTE estimation robust to low overlap in long-term settings.
\end{abstract}

\section{Introduction}


Estimating heterogeneous \textit{long-term} treatment effects (HLTEs) from short-term studies is a common problem in marketing, economics, and medicine~\citep{Feuerriegel.2024, Yang.2024-LearningTheOptimalPolicy, Athey.2025}. In many real-world applications, the primary outcomes of interest materialize only after substantial delays or incur high measurement costs. As a result, short-term studies typically record only surrogate outcomes, and must be combined with auxiliary data sources that capture long-term outcomes in order to estimate HLTEs~\citep{Cai.2024-latent, Cai.2025-Individual, Chen.2025}.

\textbf{Example.} In job training, policy-makers seek long-term effects on employment (e.g., earnings in 3 years) but only observe employment status or earnings within a short time period~\citep{Hotz.2006}. Similarly, recommending a new therapy targets long-term benefits (e.g., years of life gained), while only short-term surrogates such as tumor shrinkage are available~\citep{Belin.2020}. Timely decisions thus require estimating heterogeneous long-term effects from short-term outcomes, aided by historical data that contain long-term outcomes but may lack treatment information.




Formally, the task of estimating long-term treatment effects is typically formalized as follows~\citep{Chen.2023, Yang.2024-Targeting, Athey.2025}: a short-term dataset $\mathcal{D}_1$ (either experimental or observational), in which treatment is observed alongside surrogate outcomes, and a long-term observational dataset $\mathcal{D}_2$, in which long-term outcomes are recorded but treatment assignment may be missing or unobserved (e.g., because it is a new medical treatment or marketing intervention that was not yet applied before). The two datasets have shared covariates, which then enable identification and estimation of long-term treatment effects under suitable assumptions. In this paper, we focus on the canonical setting with surrogacy and comparability assumptions~\citep{Athey.2025, Chen.2023, Yang.2024-Targeting, Kallus.2025}. Under identification, classic confounding adjustment strategies can be adapted to derive meta-learners\footnote{Meta-learners are model-agnostic algorithms that decompose heterogeneous treatment effect estimation into a collection of sub-regression problems \citep{Kunzel.2019, Curth.2021}.} for HLTE estimation, such as the \emph{long-term} T-learner, RA-learner, IPW-learner, and DR-learner (See Section~\ref{sec:standard_learners}, Appendix~\ref{app:analogy}). 



However, the meta-learners mentioned above fail to address challenges due to \textit{low overlap}, which can lead to unstable estimates with large finite-sample variance~\citep{Jesson.2020, Fisher.2023}. Broadly, overlap refers to the availability of comparable samples across relevant regimes for individuals with similar covariates. In the HLTE setting, overlap arises in two different forms. (i)~\textit{\textbf{Limited treatment overlap:}} in the short-term dataset $\mathcal{D}_1$, some subpopulations may have a low probability of receiving one of the treatment arms, leading to limited treatment overlap in finite samples. (ii)~\textit{\textbf{Limited long-term outcome overlap:}} unique to the long-term setting is that some subpopulations may have a low probability of observing long-term outcomes in the historical dataset $\mathcal{D}_2$. As a result, when overlap is weak in either dimension, HLTE meta-learners are prone to high finite-sample variance and unstable subgroup-level estimates (see Section~\ref{sec:standard_learners},~\ref{subsec:instantiation} and Appendix~\ref{app:variance}). 



\begin{wrapfigure}{r}{0.5\textwidth}
    \centering
    \vspace{-0.4cm}
    \captionsetup{width=\linewidth}
    \includegraphics[width=\linewidth]{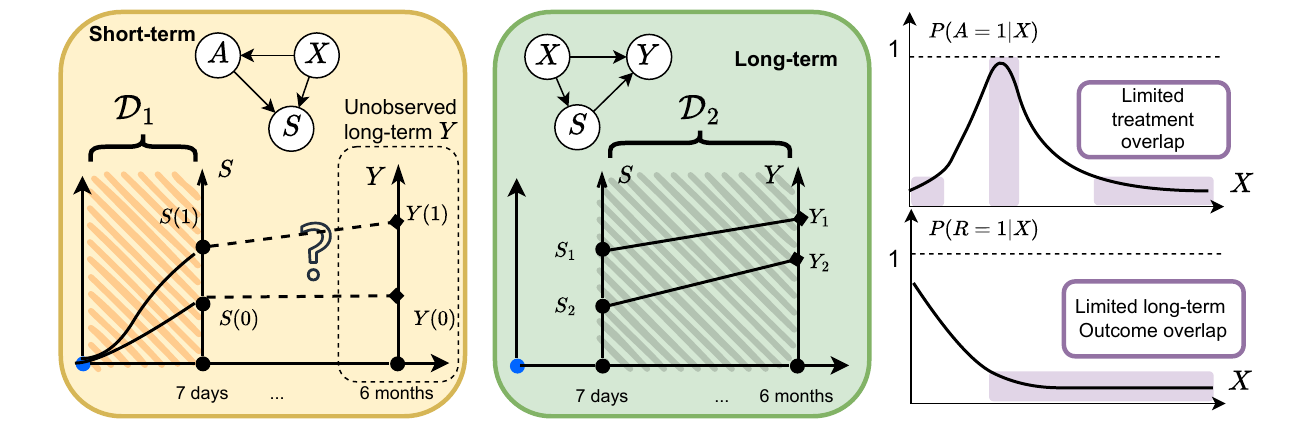}
    \caption{\textbf{Our HLTE setting:} The yellow and green box illustrate our two-sample setting. $\mathcal{D}_1$ and $\mathcal{D}_2$ share covariates $X$ and surrogates $S$. The surrogate index (i.e. $\E_{\mathcal{D}_2}[Y|S,X]$), learned in the long-term dataset, is used as a proxy for long-term outcomes in the short-term dataset. The left plot shows the two overlap issues.}
    \label{fig:concept}
    \vspace{-0.5cm}
\end{wrapfigure}

In this paper, we address these challenges by proposing the \textbf{\methods} (Long-Term Orthogonal Learners), a novel class of orthogonal meta-learners for estimating HLTEs from a short-term dataset $\mathcal{D}_1$ and a long-term observational dataset $\mathcal{D}_2$. The key idea is to \textit{retarget} the learning objective via overlap weights that downweight samples with weak treatment or long-term outcome overlap, yielding a loss that still recovers the true HLTE pointwise and satisfies Neyman-orthogonality. Because the efficiency-optimal weight is intractable---it depends on all nuisances and breaks orthogonality---we instead build a class of orthogonal learners with customizable weights, instantiated by four canonical members (\DR, \TO, \LO, \DO) that target different overlap regimes. We further prove the benefits of Neyman-orthogonality and overlap weighting with two respective theorems to highlight the theoretical strength. Importantly, our \methods are fully model-agnostic and can be instantiated with arbitrary machine learning models, making them broadly applicable in high-dimensional HLTE settings.



Our \textbf{contributions} are:\footnote{Code is available at \href{https://anonymous.4open.science/r/Orthogonal-learners-for-Long-term-effects-64DD}{https://anonymous.4open.science/r/Orthogonal-learners-for-Long-term-effects-64DD}. } (1)~We propose the \textit{\methods}, a novel class of orthogonal meta-learners for HLTE estimation that are robust to different forms of low overlap. (2)~We prove two theoretical results: (i)~the nuisance error enters the error bound only through higher-order terms; and (ii)~under a linear function class, the weighting scheme controls the asymptotic variance of the HLTE estimator in low-overlap regimes. (3)~We conduct experiments on synthetic and real-world datasets to confirm the theoretical properties of our \methods, particularly robustness in low-overlap regimes.


\vspace{-0.2cm}

\section{Related work}
\label{sec:related_work}

\textbf{Orthogonal learners for heterogeneous treatment effects (HTEs):} Several machine learning methods have been developed for the standard HTE setting with a single dataset \citep[e.g.,][]{Shalit.2017, Curth.2021}, where, unlike our HLTE setting, the outcome of interest is directly available in the randomized (or observational) data. State-of-the-art methods are model-agnostic two-stage estimators, in which nuisance functions are estimated in the first stage and where the target HTE is learned by minimizing an orthogonal loss function \citep{Foster.2023} in the second stage to predict the HTEs. Orthogonality ensures robustness w.r.t. nuisance estimation errors \citep{Chernozhukov.2018} and thus prevents plug-in bias~\citep{Kennedy.2022-Sempiparametric}.
Examples include the DR-learner \citep{Kennedy.2023}, and R-learner \citep{Nie.2021}. However, these learners are \textit{not} directly transferable from the HTE to the HLTE setting. 



\textbf{Data combination for causal inference:} A large body of work studies treatment effect estimation by combining multiple datasets, such as combinations of observational datasets~\citep[e.g.,][]{Yang.2020, Yuxin.2025}, combinations of experimental datasets~\citep[e.g.,][]{Debray.2015, Tan.2022, Brantner.2024-comparison}, or mixed combinations of both observational \textit{and} experimental datasets \citep[e.g.,][]{Hatt.2022, Wu.2022-integrative, Brantner.2023, Yao.2024combining, Yuxin.2025, parikh.2025double, Chen.2025, Cai.2025-Individual, Chen.2025-sequential}
, including for both average and heterogeneous treatment effects. For a detailed review, we refer to \citet{Colnet.2024}.
In these settings, the motivation for data combination is typically to leverage complementary information across datasets while assuming that treatment \textit{and} outcomes are observed in \textit{all} datasets. Hence, the works from this stream are different from our paper and \textit{not} applicable. One paper aims to relax the setting by assuming a combination where long-term outcomes are observed in only one dataset while treatments must still be observed in both \citep{Cai.2025-Individual}; however, this is still \textit{different} from our setting, where treatments are observed in only one dataset and long-term outcomes in the other dataset.

\textbf{Average long-term treatment effects (ALTEs):} A related but different setting aims at estimating \textit{average} long-term treatment effects (ALTEs). In this setting, ALTEs are typically identified by mediating the treatment effects via surrogate variables under suitable assumptions \citep[e.g.,][]{Prentice.1989, Frangakis.2002, Lauritzen.2004}. One approach is to build a ``surrogate index'' (i.e., a group of short-term endpoints) as the proxy for long-term outcomes under unconfoundedness \citep{Athey.2025, Chen.2023}. Subsequent works estimate ALTEs with modified settings and assumptions~\citep[e.g.,][]{Athey.2020a, Ghassami.2022, Goffrier.2023, Obradovic.2024,Cai.2024-latent, Imbens.2025, Huang.2026}. Other works also study ALTE estimation in the context of policy optimization tasks \citep[e.g.,][]{McDonald.2023, Park.2024bracketing, Yang.2024-Targeting, Yang.2024-LearningTheOptimalPolicy, Wang.2024, Saito.2024LOPE}. However, these works all target the ALTE, which only captures \textit{population-level} effects and which thus \textit{ignores} how long-term treatment effects vary across individuals or subpopulations. We summarize the different settings for long-term causal effects in Table~\ref{tab:lte-settings}.

\textbf{Heterogeneous long-term treatment effects (HLTEs):} There are some works studying HLTEs under alternative settings, such as \textit{dose–response} estimation~\citep{Singh.2022, Yang.2025-dose-response} (See Table~\ref{tab:lte-settings} for an overview of the long-term causal effects literature). These approaches focus on \textit{continuous} treatments and rely on different estimation strategies because of which they are \textit{not} applicable to the canonical HLTE setting with binary treatments considered in this paper. \citet{Huang.2024-Doingmore} constructs a noise-reduced surrogate index via separate imputation; however, this requires additional long-term variables other than $Y$ and is thus \emph{not} applicable to our setting. \citet{Chen.2025} proposes an EIF-based orthogonal HLTE estimator in the setting of Conditional Additive Equi-Confounding Bias~\citep{Ghassami.2022}. However, it requires observed treatment in both datasets and does not adjust for confounding in the canonical setting with the surrogacy assumption. Thus, it is \emph{not} applicable. Finally, these learners are \emph{not} designed to be robust to (i)~low treatment overlap and (ii)~low long-term outcome overlap in finite samples. 
 
 

 \textbf{Overlap-weighted estimators for ATEs/HTEs:} A separate line of work uses retargeted overlap weighting to stabilize ATE/HTE estimation in low-overlap regimes. \citet{Crump.2009} retarget the average treatment effect (ATE) to a weighted ATE on the well-supported subpopulation, thereby reducing finite-sample variance at the price of changing the estimand. For heterogeneous effects, R-learner~\citep{Nie.2021} implicitly weights its loss by the treatment propensity $\pi(1-\pi)$~\citep{Fisher.2023}, and \citet{Morzywolek.2023} generalize this to a general class of treatment-propensity-weighted orthogonal learners; \citet{Hess.2026} further extend treatment-overlap weighting to the longitudinal HTE estimation. However, these works are \emph{not} tailored for the HLTE setting, and \emph{do not} address the limited overlap in observing long-term outcomes.


\textbf{Research gap:} Existing methods for HLTE estimation fail to address the challenge due to (i)~limited treatment overlap and (ii)~limited long-term outcome issue in finite samples (see Table~\ref{tab:hlte-learners}). To the best of our knowledge, we are the first to provide a general class of orthogonal learners that is robust in low-overlap regimes. 

\vspace{-0.2cm}

\section{Problem formulation} 
\label{sec:formulation}

\textbf{Setup:} We follow the canonical long-term outcome setting \citep{Athey.2025, Chen.2023}, but adapt it from ALTEs to HLTEs. We thus consider two datasets: a short-term dataset $\mathcal{D}_1$ and a long-term historical observational dataset $\mathcal{D}_2$, with $N_1$ and $N_2$ units respectively.  $\mathcal{D}_1$ and $\mathcal{D}_2$ differ in terms of which variables are observed (see Figure~\ref{fig:concept}). 

Let $A \in \{0,1\}$ be the treatment variable, $X\in \mathcal{X}\subset\mathbb{R}^{d_x}$ be pre-treatment covariates, $S \in \mathcal{S}\subset \mathbb{R}^{d_s}$ be the short-term surrogate variable, $Y\in\mathbb{R}$ be the long-term outcome, and $R \in \{0,1\}$ be the dataset indicator, denoting whether the observation belongs to $\mathcal{D}_1$ ($R=0$) or $\mathcal{D}_2$ ($R=1$).

The covariates $X$ and short-term surrogates $S$ are observed in both $\mathcal{D}_1$ and $\mathcal{D}_2$. The long-term outcome $Y$ is observed only in $\mathcal{D}_2$. The treatment $A$ is only observed in the short-term dataset $\mathcal{D}_1$ but \textit{not} in $\mathcal{D}_2$ (e.g., because this is a new treatment that was not applied previously). Formally, we can combine $\mathcal{D}_1$ and $\mathcal{D}_2$ to build a unified dataset with $N=N_1+N_2$ units:
$
    \mathcal{D}=\mathcal{D}_1 \cup \mathcal{D}_2=\{(X_i, (1 - R_i)A_i, S_i, R_i, R_iY_i)\}_{i=1}^N$.

\textbf{Nuisance functions:} Our method relies on a set of nuisance functions: 

\vspace{-0.5cm}
\footnotesize
\begin{align}
    &\pi_s(s,x)=\Pr(A=1\mid S = s,X = x,R = 0), \;\;\; \pi(x)=\Pr(A=1\mid X = x,R = 0),\\
    &\rho_s(s,x)=\Pr(R=1\mid S = s,X = x),  \;\;\;   \rho(x)=\Pr(R=1\mid X = x),\\
    &h(s,x)=\mathbb E[Y\mid S = s,X = x,R = 1],  \;\;\; \mu(a,x)=\mathbb E \big[h(S,X)\mid A = a,X = x,R = 0\big] .
\end{align}
\vspace{-0.5cm}
\normalsize

We let $\eta$ denote the complete set of nuisance functions, i.e.,
\begin{equation}
\label{eq:def-eta}
    \eta = (\pi(x), \pi_s(s,x), \rho(x),\rho_s(s,x),h(s,x), \mu(a,x)).
\end{equation}
\textbf{Causal estimand:} We build upon the potential outcome framework~\citep{Rubin.1972} and aim to estimate the HLTE in the short-term dataset, given by
\begin{align}
\label{tau0-def}
    \tau^0(x)=\E\left[Y(1)-Y(0) \mid X=x,R=0\right].
\end{align}
This estimand captures how long-term treatment effects vary at the individual or subgroup level as a function of covariates $X$, thus allowing for fine-grained heterogeneity in long-term responses to inform personalized decision-making. In contrast, population-level estimands such as the ALTE as in \citet{Athey.2025} target average effects across the entire population and do not characterize which individuals or subpopulations benefit most from the treatment.


\textbf{Identifiability: } We use the following standard assumptions from~\citet{Athey.2025} to ensure the identification. 
\begin{assumption}[Consistency]
\label{assumption-consistency} \emph{For all $a \in \{0,1\}$, we have $S(a)=S, \,Y(a)=Y$ whenever $A=a$.}
\end{assumption}
\begin{assumption}[Positivity] 
\label{assumption-positivity} \textit{The treatment overlap is nondegenerate: $0< P(A=1 \mid X=x, R=0)<1, \; \forall x\in \mathcal{X}$.}
\end{assumption}

\begin{assumption}[$\mathcal{D}_1$ unconfoundedness]
\label{assumption-unconf} \emph{For all $a \in \{0,1\}$, we have} $A \indep S(a), Y(a) \mid X, R=0$.
\end{assumption}

\begin{assumption}[Surrogacy~\citep{Prentice.1989}]
\label{assumption-surrogacy} \emph{The long-term effect is fully mediated by the surrogate variable in $\mathcal{D}_1$:} $A \indep Y \mid S, X,R=0$ and $0 < P(R=1 \mid S=s,X=x) < 1$.
\end{assumption}
\begin{assumption}[Long-term outcome comparability]
\label{assumption-compara} \emph{The distribution of $Y$ is invariant to whether it belongs to $\mathcal{D}_1$ or $\mathcal{D}_2$:} $R \indep Y \mid S, X$.
\end{assumption}
Assumptions~\ref{assumption-consistency}, \ref{assumption-positivity}, \ref{assumption-unconf} are standard in the observational causal inference literature~\citep{Rubin.1974, Imbens.1994, Kallus.2025, Athey.2025}. Assumption~\ref{assumption-surrogacy} is widely adopted in prior literature on long-term effects~\citep{Athey.2025, Chen.2023, Yang.2024-Targeting, Kallus.2025}. Assumption~\ref{assumption-compara} requires that the conditional distribution of $Y$ given $(S,X)$ is transportable between $\mathcal{D}_1$ and $\mathcal{D}_2$, which is a mild assumption that is common in most data combination tasks~\citep{Athey.2025, Chen.2023, Yang.2024-Targeting}. When all the above assumptions are satisfied, we call the problem the \emph{standard surrogacy model}~\citep{Chen.2023}. Under the standard surrogacy model, $h(S,X)=\mathbb E[Y\mid S, X, R = 1]$ is an unbiased proxy of $\mathbb{E}[Y(1) \mid S, X, R=0]$ and $\mathbb{E}[Y(0) \mid S,X,R=0]$ on $\mathcal{D}_1$\footnote{\citet{Athey.2025} refers to $h$ as the surrogate index.}. We can then identify the HLTEs via (see Appendix~\ref{app:identification} for a formal proof):
\begin{align}
    & \tau^0(x) =\mathbb{E}[Y(1)-Y(0) \mid X=x,R=0] \label{eq:identification-tau0} \\
              =&\mathbb{E}\bigl[h(S,X)\mid A=1,X=x,R=0\bigr] - \mathbb{E}\bigl[h(S,X)\mid A=0,X=x,R=0\bigr] = \mu(1,x)-\mu(0,x).
    \notag
\end{align}
\textbf{Note on relaxing assumptions:} Surrogacy is a strong assumption, which requires $S$ to fully mediate the effect of $A$ on $Y$, and $A \indep Y \mid S, X, R=0$ cannot be checked from $\mathcal{D}_1$ alone~\citep{Prentice.1989, Freedman.1992}; \citet{Athey.2025} argue that high-dimensional, multi-modal surrogates -- e.g., comprehensive job skills or multi-biomarker oncology panels -- make surrogacy substantively more plausible. We adopt both assumptions without relaxation, but make their impact transparent: in Appendix~\ref{app:bias-surrogacy-violation} we derive a pointwise bias decomposition for the T-learner (a direct instantiation of the identification) under joint violation (in analogy with the ALTE analysis of \citet{Athey.2025}), and Corollary~\ref{cor:sensitivity} turns this into a closed-form sensitivity bound computable from $\mathcal{D}_1$ data alone given a user-supplied bound on the residual direct effect.




\section{T-, RA-, and IPW-learners for HLTE estimation}
\label{sec:standard_learners}


In this section, we briefly introduce how the T-, RA-, and IPW-learners for the standard HTE estimation setting\footnote{This refers to the setting where there is only a single dataset that contains fully observed $(X,A,Y)$ with no unobserved confounding~\citep{Kunzel.2019}.} can be adapted to the HLTE setting (see Appendix~\ref{app:analogy} for details). Under the surrogacy model, $h$ is an unbiased proxy of the unobserved $Y$ in $\mathcal{D}_1$. As a result, we can simply replace $Y$ by $h(S,X)$ in the standard HTE estimators to construct the analogous HLTE learners. While new in our setting, these learners serve as natural baselines and provide useful intuition for the challenges of HLTE estimation, and thus motivate our \methods. We summarize these learners in Table~\ref{tab:hlte-learners}.

\begin{wraptable}{r}{0.5\textwidth}
\centering
\vspace{-0.3cm}
\caption{\textbf{Overview of meta-learners for HLTE estimation.} Our \methods are the only approaches that are both Neyman-orthogonal and robust to different types of low-overlap scenarios.}
\label{tab:hlte-learners}
\resizebox{0.5\textwidth}{!}{%
\begin{tabular}{l l
    >{\columncolor{shallowyellow}}c
    @{\hspace{8pt}}
    >{\columncolor{shallowblue}}c
    >{\columncolor{shallowblue}}c}
\toprule
\textbf{Meta-learner}
& \makecell[l]{\textbf{Analogy in}\\\textbf{standard HTE}}
& \makecell{\textbf{Neyman}\\\textbf{orthogonal}}
& \makecell{\textbf{Robust to low}\\\textbf{treatment overlap}}
& \makecell{\textbf{Robust to low}\\\textbf{outcome overlap}} \\
\midrule
\LTT
& T-learner~\citep{Kunzel.2019}
& \xmark & \xmark & \xmark \\
\LTRA
& RA-learner~\citep{Curth.2021}
& \xmark & \xmark & \xmark \\
\LTIPW
& IPW-learner~\citep{Curth.2021}
& \xmark & \xmark & \xmark \\
\midrule
\textbf{\DR} (\emph{ours}, $\omega\equiv 1$)
& DR-learner~\citep{Kennedy.2023}
& \cmark & \xmark & \xmark \\
\textbf{\TO} (\emph{ours}, $\omega=\pi^2(1-\pi)^2$)
& R-learner~\citep{Nie.2021}
& \cmark & \cmark & \xmark \\
\textbf{\LO} (\emph{ours}, $\omega=\rho$)
& --
& \cmark & \xmark & \cmark \\
\textbf{\DO} (\emph{ours}, $\omega=\pi^2(1-\pi)^2\rho$)
& --
& \cmark & \cmark & \cmark \\
\bottomrule
\end{tabular}
}
\vspace{-0.3cm}
\end{wraptable}

$\bullet$\,\textbf{Long-term T-learner (\LTT):} The simplest approach is to estimate the conditional response functions $\mu_a(x)=\E\left[h(S,x)|A=a, X=x,R=0\right], \;a\in\{0,1\}$. Then, we obtain the \emph{plug-in} HLTE estimator: $\hat\tau^0_{T}(x)=\hat\mu_1(x) - \hat\mu_0(x)$. We name it the long-term T-learner following the standard HTE setting~\citep{Kunzel.2019}. 
%
%
%

$\bullet$\,\textbf{Long-term RA-learner (\LTRA):} The RA-learner follows a two-stage pseudo-outcome regression. It fits estimates $(\hat{h}, \hat\mu_1, \hat\mu_0)$ in the first stage to construct the regression-adjusted pseudo-outcome$\mathcal{T}_\mathrm{RA}= A(\hat{h}(S,X)-\hat\mu_0(X)) + (1-A)(\hat\mu_1(X)-\hat{h}(S,X))$. In the second stage, it regresses the regression-adjusted pseudo-outcome on $\mathcal{D}_1$:
\begin{align}
\label{ra-loss}
    \mathcal{L}_{\mathrm{RA}}=\E\left[\1(R=0)(\mathcal{T}_\mathrm{RA} - g(X))^2\right].
\end{align}


$\bullet$ \,\textbf{Long-term IPW-learner (\LTIPW):} The IPW-learner uses inverse-propensity-based pseudo-outcome:
$\mathcal{T}_\mathrm{IPW} = \left(\frac{A}{\hat\pi(X)}-\frac{1-A}{1-\hat\pi(X)}\right) \hat{h}(S,X)$, and the second stage loss is defined as:
\begin{align}
\label{ipw-loss}
    \mathcal{L}_{\mathrm{IPW}}=\E\left[\1(R=0)(\mathcal{T}_\mathrm{IPW} - g(X))^2\right]
\end{align}

\textbf{Why do we need orthogonal learners:}
The \LTT is subject to plug-in bias~\citep{Kennedy.2022-Sempiparametric}. While the \LTRA and \LTIPW mitigate plug-in bias with a two-stage procedure, they are still sensitive to the nuisance estimation errors $\hat\eta-\eta$.  To overcome these limitations, current state-of-the-art two-stage methods for treatment effect estimation use the so-called \emph{orthogonal learners}~\citep{Chernozhukov.2018, Foster.2023}. A learner with loss function $\mathcal{L}(g, \eta)$ is defined as orthogonal if the Gateaux derivative of the loss w.r.t. the nuisance parameters vanishes at the truth:
\begin{align}
    D_\eta D_g \mathcal{L}(g, \eta)[\hat{g}-g, \hat\eta - \eta] = 0. 
\end{align}
Intuitively, orthogonality ensures that the gradient of the loss function is insensitive to perturbations in the nuisance parameters~\citep{Chernozhukov.2018}. This allows faster convergence of the target estimator even with a slow convergence of the nuisances. We prove this property for our \methods in Theorem~\ref{thm:error-bound}.

\section{Our \methods}
\label{sec:method}

\textbf{Main idea of our \methods:} Existing studies have shown that orthogonal learners without sample reweighting are usually hurt by low overlaps under finite samples~\citep{Fisher.2023, Frauen.2025-model, Frauen.2025-survival} (see Appendix~\ref{app:variance} for the variance analysis of the unweighted \DR). Our goal is to design learners that are both orthogonal and robust against low-overlap regimes. To do this, the key idea is to downweight the samples with (i)~low treatment or (ii)~low long-term outcome overlap. 
Conceptually, this corresponds to optimizing a weighted oracle risk of the form
\begin{equation*}
\label{eq:oracle-omega-weighted-loss}
\mathcal{L}_\omega^{\text{oracle}}(g)=\mathbb{E}\bigl[\omega(\pi(X), \rho(X))\cdot(\tau^0(X)-g(X))^2 \mid R = 0\bigr],
\end{equation*}
where $\omega(\pi(X), \rho(X))>0$ represents a \textit{weighting function} that \textit{retargets} the objective to downweight the samples with low overlap while preserving the target HLTE $\tau^0(x)$. A simple example of such weighting function is $\omega(X) = \pi(x)(1-\pi(x)) \rho(x)$, where $\pi(x)(1-\pi(x))$ downweights samples with low treatment overlap and $\rho(x)$ downweights samples with low long-term outcome overlap.

However, the loss $\mathcal{L}_\omega^{\text{oracle}}$ is an oracle objective because it depends on the true HLTE $\tau^0(x)$, which is our target estimand and thus unobserved in practice. Hence, our goal is to construct a loss function $\mathcal{L}_\omega(g;\eta)$ that is equivalent to this oracle objective and can be optimized using observed data. This requires a non-trivial derivation to ensure that the resulting loss (i) shares the same minimizer as the oracle loss and (ii)~is Neyman-orthogonal w.r.t. the nuisance functions.

\textbf{Overview:} In Section~\ref{subsec:ortho-weight-loss}, we orthogonalize the oracle weighted loss and describe the training algorithm. In Section~\ref{subsec:instantiation}, we motivate the weighted instantiations through optimal efficiency bound. Finally, in Section~\ref{subsec:theoretical-benefits}, we establish the theoretical benefits of orthogonality and weighting.

\subsection{Orthogonalizing the weighted oracle loss}
\label{subsec:ortho-weight-loss}

In this section, we derive the orthogonal loss that has the same minimizer as the weighted oracle loss, under true nuisance $\eta$ and model class $\mathcal{G}$. Following a similar approach as in~\citep{Morzywolek.2023}, we first derive the EIF for the \emph{$\omega$-weighted} average long-term treatment effect (WALTE)\footnote{See Appendix~\ref{app:theory} for a theoretical background.}, which is defined as:
\begin{equation}
\label{eq:walte-def}
    \tau^0_\omega =\frac{\E\left[\omega(X)\tau^0(X)\mid R=0\right]}{\E\left[\omega(X)\mid R=0\right]}:= \frac{N_\omega}{D_\omega}.
\end{equation}
Here, for simplicity, we denote $\omega(x)=\omega(\pi(x),\rho(x))$. See Appendix~\ref{app:EIF-weighted-ATE} for the EIF.


Since the Gateaux derivative of the EIF w.r.t. the nuisances vanishes at the truth~\citep{Chernozhukov.2018}, an EIF-based pseudo-outcome naturally produces a Neyman-orthogonal loss, which we show below.


\begin{contributionbox}
\textbf{LT-O-learners:} Let $Z$ be $(X,A,S)$ when $R=0$ and $(X, S, Y)$ when $R=1$. Based on the derived EIF, we propose our orthogonal loss as follows:
\begin{equation}
\label{eq:weighted-ortho-loss-tau0}
\mathcal{L}_\omega(g, \eta) = \mathbb{E} \left[ \omega^*(Z;\eta) g(X)^2 -2\mathcal{T}_\mathrm{LT}(Z;\eta) \cdot g(X)\right],
\end{equation}
where $\omega^*$ is the \textit{LT weighting function}, defined as:
\footnotesize
\begin{align}
    \label{omega-star-def}
    \omega^*(Z;\eta) = \1(R=0) \omega(X) + \Omega(Z;\eta), \text{ where }
    \Omega(Z;\eta) =\1(R=0)\frac{\partial \omega}{\partial \pi} (A - \pi)
    + (1-\rho) \frac{\partial \omega}{\partial \rho} (R - \rho), 
\end{align}
\normalsize
and the \textit{LT pseudo-outcome} $\mathcal{T}_\mathrm{LT}(Z;\eta)$ is
\footnotesize
\begin{align}
\label{Tau-LT-def}
    \mathcal{T}_\mathrm{LT}(Z;\eta) &=  \1(R=0) \omega(X) \hat{\tau}_\mathrm{AIPW}(Z;\eta)
     +  \1(R=1) \omega(X) \psi_\mathrm{obs}(Z;\eta) + (\mu(1,X)-\mu(0,X)) \Omega(Z;\eta),\\
    \text{with } & \hat{\tau}_\mathrm{AIPW}(Z;\eta) = \mu(1,X) - \mu(0,X) + \frac{A-\pi(X)}{\pi(X)(1-\pi(X))}(h(S,X) - \mu(A,X)) , \label{tau-AIPW-def} \\
    & \psi_\mathrm{obs} = \frac{1-\rho_s(S,X)}{\rho_s(S,X)} \left( \frac{\pi_s(S,X) - \pi(X)}{\pi(X)(1-\pi(X))} \right)
    \cdot\big( Y - h(S,X) \big) \label{psi-obs-def}
\end{align}
Here, $\hat{\tau}_\mathrm{AIPW}(Z;\eta)$ is the augmented IPW (AIPW) score, and $\psi_\mathrm{obs}(Z;\eta)$ is the bias-correction term. 
\end{contributionbox}
\normalsize
Intuitively, $\omega^*$ consists of a main part $\omega(x)$ and a residual-based $\Omega(Z;\eta)$\footnote{$\Omega(Z;\eta)$ can be viewed as a mean $0$ term that corrects the first-order bias of the weight estimator $\omega{(\hat\pi(X),\hat\rho(X))}$.} and thus $\omega^*\approx \omega$ on average when $R=0$. Take $\omega=\pi^2(1-\pi)^2\rho$ as an example: When either (a) treatment overlap is low (small $\pi(1-\pi)$) or (b) long-term outcome overlap is low (small $\rho$), the sample is downweighted in the pseudo-outcome regression in Eq.~(\ref{eq:weighted-ortho-loss-tau0}) via $\omega^*\approx \omega$. In fact, we have the following (see Appendix~\ref{app:proofs} for detailed proofs):
\begin{align}
    \E\left[\omega^*(Z;\eta)\mid X\right] = (1-\rho(X))\omega(X), \;\;\;
    \E\left[\mathcal{T}_\mathrm{LT}(Z;\eta) \mid X\right] = (1-\rho(X))\omega(X) \tau^0(X).  \notag
\end{align}
Hence, when we optimize $\mathcal{L}_\omega(g, \eta)$, we are always solving for $g^*(X)=\E\left[\mathcal{T}_{LT} \mid X\right] / \E\left[\omega^*\mid X\right]=\tau^0(X), \; \forall X\in \mathcal{X}$ regardless of the weighting function. 

Below, we show that the proposed loss function $\mathcal{L}_\omega(g, \eta)$ satisfies two crucial properties: (1) Neyman-orthogonality, and (2) equivalence with the oracle loss $\mathcal{L}_\omega^{\text{oracle}}(g)$.

\begin{theorem}[Neyman-orthogonality]
    \label{thm:orthogonality}
    The loss $\mathcal{L}_\omega(g, \eta)$ is orthogonal w.r.t. the nuisance $\eta$.
    \vspace{-0.1cm}
\end{theorem}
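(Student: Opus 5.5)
We verify the defining condition $D_\eta D_g \mathcal{L}_\omega(g,\eta)[\hat g-g,\hat\eta-\eta]=0$ at the true nuisance $\eta$. The loss is quadratic in $g$, so the first step is to compute the $g$-derivative in direction $\delta_g=\hat g-g$:
\begin{equation*}
D_g\mathcal{L}_\omega(g,\eta)[\delta_g]=2\,\E\bigl[\bigl(\omega^*(Z;\eta)g(X)-\mathcal{T}_\mathrm{LT}(Z;\eta)\bigr)\delta_g(X)\bigr].
\end{equation*}
Because $g$ and $\delta_g$ depend only on $X$, iterated expectations show it suffices to prove that, for every $x$ and every nuisance direction $\delta_\eta=\hat\eta-\eta$,
\begin{equation*}
\frac{\partial}{\partial t}\Bigl\{\E\bigl[\omega^*(Z;\eta+t\delta_\eta)\mid X=x\bigr]g(x)-\E\bigl[\mathcal{T}_\mathrm{LT}(Z;\eta+t\delta_\eta)\mid X=x\bigr]\Bigr\}\Big|_{t=0}=0 .
\end{equation*}
We split this into a weight part and a pseudo-outcome part, and perturb each nuisance component $(\pi,\pi_s,\rho,\rho_s,h,\mu)$ separately, since the Gateaux derivative is linear in $\delta_\eta$.

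\emph{Weight part.} Only $\pi$ and $\rho$ enter $\omega^*$. Perturbing $\pi$ gives
\begin{equation*}
\1(R=0)\,\partial_\pi\omega\,\delta_\pi+\1(R=0)\,\partial_\pi^2\omega\,\delta_\pi(A-\pi)-\1(R=0)\,\partial_\pi\omega\,\delta_\pi+(\text{terms}\propto R-\rho).
\end{equation*}
The first and third terms cancel. The remaining terms have conditional mean zero because $\E[A-\pi\mid X,R=0]=0$ and $\E[R-\rho\mid X]=0$. For $\rho$ we need the identity $\E[\1(R=0)\mid X]=1-\rho$, which gives $\partial_\rho\{(1-\rho)\omega\}=-\omega+(1-\rho)\partial_\rho\omega$. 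Perturbing $(1-\rho)\partial_\rho\omega\,(R-\rho)$ produces $-(1-\rho)\partial_\rho\omega\,\delta_\rho$ plus mean-zero terms. The point to check is that $\E[\1(R=0)\omega]$ does not depend on $\rho$ through the weight, so its derivative is $\1(R=0)\,\partial_\rho\omega\,\delta_\rho$. In conditional expectation this equals $(1-\rho)\,\partial_\rho\omega\,\delta_\rho$, which cancels the correction term. Hence $\E[\omega^*\mid X]$ has zero derivative.

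\emph{Pseudo-outcome part.} This is the main obstacle. The three blocks of $\mathcal{T}_\mathrm{LT}$ interact: the AIPW score on $R=0$ and the $\psi_\mathrm{obs}$ correction on $R=1$ must jointly neutralize perturbations in $h$, $\mu$, $\pi_s$ and $\rho_s$. I would treat the components in the following order.

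\begin{enumerate}[leftmargin=*]
\item[(i)] \emph{Perturbing $\mu$.} The perturbation of the AIPW score is $\delta_\mu(1)-\delta_\mu(0)-\frac{A-\pi}{\pi(1-\pi)}\delta_\mu(A)$. This has conditional mean zero given $X$ and $R=0$, as in the standard AIPW argument. The term $(\mu(1)-\mu(0))\Omega$ contributes $(\delta_\mu(1)-\delta_\mu(0))\Omega$, which has mean zero since $\E[\Omega\mid X]=0$.
\item[(ii)] \emph{Perturbing $h$.} On $R=0$ the perturbation is $\frac{A-\pi}{\pi(1-\pi)}\delta_h(S,X)$. Conditioning on $(S,X,R=0)$ turns this into $\frac{\pi_s-\pi}{\pi(1-\pi)}\delta_h$, weighted by $\Pr(R=0\mid S,X)=1-\rho_s$. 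On $R=1$ the perturbation is $-\frac{1-\rho_s}{\rho_s}\frac{\pi_s-\pi}{\pi(1-\pi)}\delta_h$, weighted by $\rho_s$. Both are multiplied by $\omega(X)$, and the two cancel exactly. This is the crux of the construction.
\item[(iii)] \emph{Perturbing $\pi_s$ and $\rho_s$.} These enter only $\psi_\mathrm{obs}$, multiplied by $Y-h$. By comparability (Assumption~\ref{assumption-compara}), $\E[Y-h\mid S,X,R=1]=0$, so these derivatives vanish.
\item[(iv)] \emph{Perturbing $\pi$ and $\rho$.} In $\hat\tau_\mathrm{AIPW}$ and $\psi_\mathrm{obs}$ the derivatives with respect to $\pi$ are multiplied by $h-\mu(A,X)$ or $Y-h$, whose conditional means vanish. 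This uses $\E[h-\mu(A,X)\mid A,X,R=0]=0$. What remains are the derivatives of $\omega(X)$ in the first two blocks, $\1(R=0)\,\partial\omega\,\hat\tau_\mathrm{AIPW}$ and $\1(R=1)\,\partial\omega\,\psi_\mathrm{obs}$, together with the derivative of $\Omega$ multiplied by $\tau^0$. Since $\E[\1(R=1)\psi_\mathrm{obs}\mid X]=0$ and $\E[\1(R=0)\hat\tau_\mathrm{AIPW}\mid X]=(1-\rho)\tau^0$, these reproduce exactly the weight-part computation multiplied by $\tau^0(x)$, and so they vanish by the same cancellation.
\end{enumerate}

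Collecting the pieces, both conditional-mean derivatives vanish. Therefore $D_\eta D_g\mathcal{L}_\omega=0$, which proves Theorem~\ref{thm:orthogonality}.
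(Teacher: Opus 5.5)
Your proposal is correct and follows essentially the paper's own argument. You perturb each nuisance component separately and remove every term using the conditional mean-zero properties of $A-\pi$, $R-\rho$, $h-\mu(A,X)$ and $Y-h$, and the crux is the $h$-perturbation cancelling between the AIPW block and $\psi_{\mathrm{obs}}$. The differences are organizational: you condition on $(S,X)$ where the paper uses a density ratio, and you treat $g(x)\,\E[\omega^*\mid X]$ and $\E[\mathcal{T}_\mathrm{LT}\mid X]$ separately where the paper keeps them in one score, which is equivalent.

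Two minor cleanups. Drop the identity $\partial_\rho\{(1-\rho)\omega\}=-\omega+(1-\rho)\partial_\rho\omega$: the true $\Pr(R=0\mid X)$ is not perturbed, so no $-\omega\,\delta_\rho$ term arises, and your subsequent computation correctly omits it. Also, $\E[Y-h\mid S,X,R=1]=0$ holds by the definition of $h$, not by comparability.
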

Theorem~\ref{thm:orthogonality} establishes that the functional derivative of $\mathcal{L}_\omega(g, \eta)$ is Neyman-orthogonal w.r.t. the nuisance $\eta$, i.e. $D_\eta D_g \mathcal{L}_\omega(g, \eta)[\hat{g}-g, \hat\eta - \eta] = 0.$ This means, nuisance estimation error $\delta_\eta=\hat\eta-\eta$ only has second-order impact on the gradient, which makes the learner robust against first-order nuisance estimation error.

\begin{theorem}[Oracle equivalence]
    \label{thm:equiv-loss}
    The minimizer of the loss $\mathcal{L}_\omega(g, \eta)$ within a functional space $\mathcal{G}$ is also the minimizer of the oracle weighted loss, namely:
    \begin{equation}
        \argmin_{g\in \mathcal{G}} \mathcal{L}_\omega(g, \eta) = \argmin_{g \in \mathcal{G}} \mathcal{L}_\omega^{\mathrm{oracle}}(g).
    \end{equation}
    A direct implication is that, if $\tau^0(\cdot)\in \mathcal{G}$, then minimizing our orthogonal loss $\mathcal{L}_\omega(g, \eta)$ yields the true HLTE estimator.
\end{theorem}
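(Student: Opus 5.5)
The plan is to show that, at the true nuisance $\eta$, the loss $\mathcal{L}_\omega(g,\eta)$ differs from a positive multiple of $\mathcal{L}_\omega^{\mathrm{oracle}}(g)$ only by an additive constant that does not depend on $g$. Equal argmins over any $\mathcal{G}$ then follow at once.

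I will use the two conditional-moment identities stated just before Theorem~\ref{thm:orthogonality}. They are
\[
\E[\omega^*(Z;\eta)\mid X]=(1-\rho(X))\omega(X),\qquad \E[\mathcal{T}_\mathrm{LT}(Z;\eta)\mid X]=(1-\rho(X))\omega(X)\tau^0(X).
\]
I would verify these briefly. For $\omega^*$, note that $\E[\1(R=0)(A-\pi(X))\mid X]=(1-\rho(X))\,\E[A-\pi(X)\mid X,R=0]=0$ and $\E[R-\rho(X)\mid X]=0$, so $\E[\Omega\mid X]=0$. For $\mathcal{T}_\mathrm{LT}$, the $R=0$ AIPW part has conditional mean $(1-\rho)\omega\,(\mu(1,X)-\mu(0,X))=(1-\rho)\omega\tau^0$ by the identification in Eq.~(\ref{eq:identification-tau0}). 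The term $\psi_{\mathrm{obs}}$ has mean zero given $(S,X,R=1)$ because $h(S,X)=\E[Y\mid S,X,R=1]$, and the $\Omega$ term has mean zero since $\mu(1,X)-\mu(0,X)$ is $X$-measurable. The function $g(X)$ depends only on $X$, so by the tower property
\[
\mathcal{L}_\omega(g,\eta)=\E\bigl[(1-\rho(X))\omega(X)\bigl(g(X)^2-2\tau^0(X)g(X)\bigr)\bigr].
\]

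Next I rewrite this as a conditional expectation on $R=0$. Since $P(R=0\mid X)=1-\rho(X)$, for any integrable $f(X)$ we have $\E[(1-\rho(X))f(X)]=\E[\1(R=0)f(X)]=P(R=0)\,\E[f(X)\mid R=0]$. Completing the square then gives
\[
\mathcal{L}_\omega(g,\eta)=P(R=0)\bigl\{\mathcal{L}_\omega^{\mathrm{oracle}}(g)-\E[\omega(X)\tau^0(X)^2\mid R=0]\bigr\}.
\]
The subtracted term is free of $g$ and $P(R=0)>0$. The two objectives are therefore related by a strictly increasing affine map, and their minimizers over $\mathcal{G}$ coincide, including the case where the argmin is a set.

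For the implication, if $\tau^0\in\mathcal{G}$ then $\tau^0$ attains $\mathcal{L}^{\mathrm{oracle}}_\omega=0$, which is the minimum because $\omega>0$. Moreover, any minimizer equals $\tau^0$ almost everywhere on the $R=0$ covariate distribution, again since $\omega>0$. The main obstacle is the conditional-mean computation for $\mathcal{T}_\mathrm{LT}$, specifically the bookkeeping of the $\1(R=0)$ and $\1(R=1)$ factors. Two points need care: the $R=1$ term $\psi_{\mathrm{obs}}$ must vanish by conditioning on $(S,X,R=1)$ rather than on $X$ alone, and the mixing weights $P(R\mid X)$ must be tracked correctly. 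I also need integrability, i.e.\ finite second moments and bounded $\omega$, so that the expectations exist.
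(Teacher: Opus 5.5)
Your proposal is correct and follows essentially the same argument as the paper. Both rest on three facts: $\E[\Omega\mid X]=0$; $\E[\1(R=1)\psi_{\mathrm{obs}}\mid X]=0$, obtained by conditioning on $(S,X,R=1)$; and $\E[\hat{\tau}_{\mathrm{AIPW}}\mid X,R=0]=\tau^0(X)$. Both then convert the factor $1-\rho(X)$ into conditioning on $R=0$ and complete the square.

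The only difference is ordering. The paper first rewrites $\mathcal{L}_\omega$, up to a constant, as $\E[\1(R=0)\omega(X)(\hat{\tau}_{\mathrm{AIPW}}-g(X))^2]$ and then kills the cross term. You take conditional expectations first, which is slightly more direct. One small correction: the AIPW step first uses that the augmentation term has mean zero given $(X,R=0)$, by the definition of $\mu$. Only afterwards does identification give $\mu(1,X)-\mu(0,X)=\tau^0(X)$.
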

\begin{proof}
    See Appendix~\ref{app:proof-neyman} and \ref{app:proof-equiv} for the proofs of Theorems~\ref{thm:orthogonality} and \ref{thm:equiv-loss}, respectively.
\end{proof}
Theorem~\ref{thm:orthogonality} and Theorem~\ref{thm:equiv-loss} together show that the proposed loss $\mathcal{L}_\omega(g, \eta)$ is indeed the orthogonalized version of the oracle weighted loss $\mathcal{L}_\omega^{\text{oracle}}(g)$, and it preserves the same minimizer when $\mathcal{G}$ is well-specified. In practice, we minimize an empirical surrogate of $\mathcal{L}_\omega(g, \hat\eta)$ via $K$-fold cross-fitting~\citep{Chernozhukov.2018, Kennedy.2023, Morzywolek.2023, Foster.2023}: we partition $\mathcal{D}=\{Z_i\}_{i=1}^N$ into $K=5$ folds; for each fold, we fit the nuisances $\hat\eta=(\hat\pi, \hat\pi_s,\hat\rho,\hat\rho_s, \hat{h}, \hat\mu)$ on the remaining $K-1$ folds to evaluate $\hat\omega^*_i=\omega^*(Z_i;\hat\eta)$ and $\hat{\mathcal{T}}_i=\mathcal{T}_\mathrm{LT}(Z_i;\hat\eta)$, and then solve
\vspace{-0.3cm}
\begin{align}
\label{eq:empirical-loss}
    \hat{g}=\argmin_{g\in\mathcal{G}} \sum_{i=1}^N \hat\omega^*_i g(X_i)^2- 2\hat{\mathcal{T}}_i g(X_i).
\end{align}
\vspace{-0.7cm}
\subsection{Instantiations}
\label{subsec:instantiation}

\textbf{Is there an optimal $\omega$?} We derive the optimal weighting function $\omega_\mathrm{opt}$ that minimizes the efficiency bound of the WALTE in Eq.~(\ref{eq:walte-def}) (see Appendix~\ref{sec:optimality}). However, such $\omega_\mathrm{opt}(x)$ involves computing terms with expectations of the nuisances, which will break the key orthogonality of the loss. To achieve a tractable weighting function, we derive a closed-form upper bound to $\omega_\mathrm{opt}$ (see Appendix~\ref{sec:lower_bound}):
\begin{align}
\label{eq:upper-bound-optimal-weight}
\omega_\mathrm{opt}(x) \;\leq\; \left(\frac{C_t(x)^2}{\pi(x)(1-\pi(x))} + \frac{C(x)^2}{\rho(x)\,\pi(x)^2(1-\pi(x))^2}\right)^{\!-1} \lesssim \rho(x)\,\pi(x)^2(1-\pi(x))^2.
\end{align}
From Eq.~(\ref{eq:upper-bound-optimal-weight}), we can see the variance comes from two sources: (1)~treatment overlap, which contributes a $\pi(x)^2 (1 - \pi(x))^2$ term, and (2)~outcome overlap, which contributes a $\rho(x)$ term. This naturally motivates the choices of $\omega$ summarized below (see Appendix~\ref{app:instantiations} for full derivations.):

\begin{contributionbox}

\begin{itemize}[leftmargin=*, itemsep=2pt]
\item[\textcolor{Orange}{$\blacktriangle$}] \textbf{\DR} ($\omega\equiv 1$): no overlap weighting; recovers the EIF of ALTE~\citep{Athey.2025, Chen.2023}.
\item[\textcolor{LimeGreen}{\ding{58}}] \textbf{\LO} ($\omega=\rho$): targets low long-term outcome overlap.
\item[\textcolor{ForestGreen}{$\bigstar$}] \textbf{\TO} ($\omega=\pi^2(1-\pi)^2$): targets low treatment overlap.
\item[\textcolor{PineGreen}{$\blacksquare$}] \textbf{\DO} ($\omega=\pi^2(1-\pi)^2\rho$): targets joint treatment and outcome overlap.
\end{itemize}
\end{contributionbox}
\emph{Problems of the \DR:} Although orthogonal, the \DR does not apply any sample-weighting strategy, which leads to large finite-sample variance~\citep{Fisher.2023}. We analyze the variance of the DR pseudo-outcome (defined in Appendix~\ref{app:DR-LT-learner}) and show why it inflates under low overlap in Appendix~\ref{app:variance}. As a result, the \DR is not robust to low overlap, motivating the weighted instantiations.

\textbf{Note:} We point out that $\omega_\mathrm{opt}$ is only optimal in the sense of minimizing the efficiency bound, \emph{not} the HLTE estimation error. In fact, finding the optimal weight for general heterogeneous treatment effects remains an open question. This motivates a class of learners with customizable weights. In practice, we recommend choosing a suitable weight based on the estimated overlap scores $\hat\pi$ and $\hat\rho$.


\subsection{Theoretical benefits}
\label{subsec:theoretical-benefits}

In this section, we show the benefits of orthogonality and weighting with two theorems:
\begin{theorem}[Error bound\footnote{The proof adapts the technique of \citet{Morzywolek.2023} to the long-term surrogate setting.}]
\label{thm:error-bound}
    Let $\hat{g}$ be the minimizer of the empirical loss $\mathcal{L}_{\omega,n}(g, \hat\eta)$ and $g_0$ be the minimizer of $\mathcal{L}_\omega(g, \eta)$ under the true nuisance. Under the assumptions in Appendix~\ref{app:error-bound-assumptions},
$    \|\hat{g}-g_0\|^2 \lesssim R_g + \|\hat\mu-\mu\|_4^2(\|\hat\rho-\rho\|_4^2 + \|\hat\pi-\pi\|_4^2)
    + \|\hat{h}-h\|_4^2( \|\hat\pi_s-\pi_s\|_4^2 + \|\hat\rho_s-\rho_s\|_4^2) + \|\hat\pi-\pi\|_4^4 + \|\hat\rho-\rho\|_4^4$,
    where $R_g=\mathcal{L}_\omega(\hat{g}, \hat{\eta}) - \mathcal{L}_\omega(g_0, \hat{\eta})$ is the quasi-oracle rate.
\end{theorem}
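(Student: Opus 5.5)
The argument follows the orthogonal statistical learning template of \citet{Foster.2023} as specialized by \citet{Morzywolek.2023}: strong convexity of the loss in $g$, plus a second-order remainder controlled by Neyman-orthogonality (Theorem~\ref{thm:orthogonality}). Write $\mathcal{L}_\omega(g,\eta)=\E[\omega^* g^2-2\mathcal{T}_\mathrm{LT}g]$. It is quadratic in $g$ with curvature $\E[\omega^*(Z;\eta)\mid X]=(1-\rho(X))\omega(X)$. Under the appendix assumptions ($\omega$ and $1-\rho$ bounded away from zero on the relevant support, or the norm taken as the $\omega(1-\rho)$-weighted $L_2$ norm), strong convexity gives
\[
\tfrac{\lambda}{2}\|\hat g-g_0\|^2\le \mathcal{L}_\omega(\hat g,\eta)-\mathcal{L}_\omega(g_0,\eta)-D_g\mathcal{L}_\omega(g_0,\eta)[\hat g-g_0].
\]
The first-order term is nonnegative because $g_0$ minimizes over a convex $\mathcal{G}$, and it vanishes exactly when $g_0=\tau^0$.

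The first step passes from $\eta$ to $\hat\eta$. I add and subtract $\mathcal{L}_\omega(\cdot,\hat\eta)$ and use the fact that the loss is exactly quadratic in $g$, so a second-order Taylor expansion in the nuisance direction is exact in $g$. This yields
\[
\tfrac{\lambda}{2}\|\hat g-g_0\|^2\le R_g+\Big|\bigl(D_g\mathcal{L}_\omega(g_0,\hat\eta)-D_g\mathcal{L}_\omega(g_0,\eta)\bigr)[\hat g-g_0]\Big| + \Big|\E\big[(\hat\omega^*-\omega^*)(\hat g-g_0)^2\big]\Big| .
\]
The last term is absorbed into the left-hand side once $\|\hat\omega^*-\omega^*\|_\infty$ is small, which holds under consistency of $\hat\pi,\hat\rho$. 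The middle term is then expanded around $\eta$. Its first-order part is $D_\eta D_g\mathcal{L}_\omega(g_0,\eta)[\hat g-g_0,\hat\eta-\eta]=0$ by Theorem~\ref{thm:orthogonality}. What remains is
\[
\E\big[(\hat g-g_0)(X)\,\mathrm{Rem}(X)\big],\qquad \mathrm{Rem}(X)=\E\big[(\hat\omega^*-\omega^*)g_0-(\hat{\mathcal{T}}_\mathrm{LT}-\mathcal{T}_\mathrm{LT})\mid X\big]-(\text{linear part}).
\]
By Cauchy--Schwarz, this is at most $\|\hat g-g_0\|\,\|\mathrm{Rem}\|$. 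Young's inequality then gives $\|\hat g-g_0\|^2\lesssim R_g+\|\mathrm{Rem}\|^2$.

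The main obstacle is computing $\mathrm{Rem}$ explicitly and showing it contains only products of errors. I organize this by term.

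\emph{The weight block.} $\omega(\hat\pi,\hat\rho)-\omega(\pi,\rho)$ is corrected by $\hat\Omega$. After conditioning, $\E[A-\hat\pi\mid X,R=0]=\pi-\hat\pi$ and $\E[R-\hat\rho\mid X]=\rho-\hat\rho$. A second-order Taylor expansion of $\omega$ therefore leaves terms of order $(\hat\pi-\pi)^2$, $(\hat\rho-\rho)^2$ and $(\hat\pi-\pi)(\hat\rho-\rho)$. The last is bounded by the sum of squares, and squaring gives the $\|\hat\pi-\pi\|_4^4+\|\hat\rho-\rho\|_4^4$ terms. This step needs $\omega$ to be $C^2$ with bounded derivatives.

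\emph{The AIPW block.} Conditioning on $X,R=0$ gives the usual product structure $\frac{\pi-\hat\pi}{\hat\pi(1-\hat\pi)}(\mu-\hat\mu)$. The residual $h-\hat h$ inside the AIPW term is left for the next block. The $\hat\mu\hat\Omega$ cross terms give $(\hat\mu-\mu)(\hat\rho-\rho)$. Together these yield the $\|\hat\mu-\mu\|_4^2(\|\hat\pi-\pi\|_4^2+\|\hat\rho-\rho\|_4^2)$ term.

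\emph{The surrogate block.} The $R=0$ term $\frac{A-\pi}{\pi(1-\pi)}(\hat h-h)$ and the $R=1$ correction $\psi_\mathrm{obs}$ are combined by conditioning on $(S,X)$. Comparability (Assumption~\ref{assumption-compara}) gives $\E[Y-\hat h\mid S,X,R=1]=h-\hat h$, and Bayes' rule gives $\Pr(R=0\mid S,X)\,\pi_s=(1-\rho_s)\pi_s$. With these, the first-order $h$-errors cancel and leave
\[
(h-\hat h)\Big[\tfrac{1-\rho_s}{\rho_s}\cdot\tfrac{\hat\pi_s-\hat\pi}{\cdot}\,\rho_s-(1-\rho_s)\tfrac{\hat\pi_s\text{-terms}}{\cdot}\Big]=O\big(|\hat h-h|(|\hat\pi_s-\pi_s|+|\hat\rho_s-\rho_s|+|\hat\pi-\pi|)\big).
\]
The $|\hat h-h|\,|\hat\pi-\pi|$ piece is absorbed into the other terms via AM--GM.

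Finally, I bound $\|\mathrm{Rem}\|_2$ by Hölder's inequality, with $\|fg\|_2\le\|f\|_4\|g\|_4$, using boundedness of $\hat\pi,\hat\rho,\hat\rho_s$ away from $0$ and $1$, of $Y$, and of $\omega$'s derivatives (the appendix assumptions). Squaring gives exactly the displayed bound.

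I expect the surrogate block to be the delicate step. The first-order $\hat h$-error appears both in the $R=0$ AIPW term and in the $R=1$ correction, and it cancels only after the change of measure through $\rho_s/(1-\rho_s)$ and the replacement of $\pi_s$ by its estimate. I would verify this cancellation carefully, mirroring the EIF derivation in Appendix~\ref{app:EIF-weighted-ATE}.
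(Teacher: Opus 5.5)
Your route is essentially the paper's. You exploit that $\mathcal{L}_\omega$ is exactly quadratic in $g$, lower-bound the curvature, and show that the term linear in $\hat g-g_0$ at $\hat\eta$ splits into three pieces. These are an AIPW product bias, a surrogate block cancelled by the change of measure from $R=1$ to $R=0$, and a weight block in which $\hat\Omega$ removes the first-order error of $\omega(\hat\pi,\hat\rho)$. Cauchy--Schwarz, AM--GM absorption and H\"older then finish the argument.

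The paper differs in two places. It computes this bias directly rather than citing Theorem~\ref{thm:orthogonality}. It also assumes the curvature bound for $\omega^*(Z;\hat\eta)$ uniformly over a nuisance ball (Assumption~\ref{core-assumption}) and expands $\mathcal{L}_\omega(\cdot,\hat\eta)$ around $g_0$. Your detour through the curvature at $\eta$ instead needs sup-norm consistency of $\hat\pi,\hat\rho$ to absorb $\E[(\hat\omega^*-\omega^*)(\hat g-g_0)^2]$, which is not among the stated assumptions. On the other hand, you use only the sign of $D_g\mathcal{L}_\omega(g_0,\eta)[\hat g-g_0]$ for convex $\mathcal{G}$. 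That is slightly more general than the paper's projection argument, which needs $\mathcal{G}$ to be a linear space.

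One step is wrong as written. In the surrogate block you retain a piece of order $|\hat h-h|\,|\hat\pi-\pi|$ and say AM--GM absorbs it. It cannot: AM--GM would produce $\|\hat h-h\|_4^4$, which is not in the stated bound, and nothing else there controls a product of $\hat h$- and $\hat\pi$-errors. The fix is that the piece never arises. In $\mathcal{T}_\mathrm{LT}(Z;\hat\eta)$ the $R=0$ weight is $(A-\hat\pi)/(\hat\pi(1-\hat\pi))$, not the true-$\pi$ version you wrote, and $\hat\psi_{\mathrm{obs}}$ contains the same $\hat\pi$. Let $\gamma=(1-\rho_s)/\rho_s$, condition on $(S,X)$ and change measure. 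The coefficient of $\hat h-h$ under $R=0$ is then exactly
\[
\frac{(\pi_s-\hat\pi_s)+(\hat\pi_s-\hat\pi)\bigl(1-\hat\gamma/\gamma\bigr)}{\hat\pi(1-\hat\pi)}=O\bigl(|\hat\pi_s-\pi_s|+|\hat\rho_s-\rho_s|\bigr).
\]
This vanishes at the true $(\pi_s,\rho_s)$ for every value of $\hat\pi$; it is the paper's residual $\mathcal{R}_h$. With this correction your remainder has exactly the product structure in the statement.
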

Furthermore, under a linear class $\mathcal{G}=\{g(x)=\theta^\top x:\theta\in\mathbb{R}^d\}$ with $\tau^0\in\mathcal{G}$, $\hat\theta$ admits a closed-form solution (Appendix~\ref{app:linear}). To our knowledge, no existing overlap-weighted orthogonal learner provides an asymptotic variance analysis under a parametric function class; the following result fills this gap:
\begin{theorem}[Variance stabilization]
\label{thm:linear-asymvar-main}
Under standard Z-estimator regularity conditions and Assumption~\ref{ass:low-overlap}: \emph{(a)}~$\sqrt n(\hat\theta-\theta_0)\xrightarrow{d}\mathcal{N}\bigl(0,\, \mathcal{V}(\omega)\bigr)$, where $\mathcal{V}(\omega)$ is the asymptotic covariance matrix of $\hat\theta$ (see Appendix~\ref{app:linear}); \emph{(b)}~for any sequence of DGPs with overlap score $0<\epsilon\to0$, $v^\top \mathcal{V}(\mathbf 1)\,v\to\infty$ for every direction $v\neq 0$, while $v^\top \mathcal{V}\bigl(\pi(1-\pi)\rho\bigr)\,v$ stays bounded.
\end{theorem}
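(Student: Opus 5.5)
The plan is to treat $\hat\theta$ as a Z-estimator defined by the first-order condition of the empirical loss in Eq.~(\ref{eq:empirical-loss}) with $g(x)=\theta^\top x$. This condition reads
\[
\frac1n\sum_i\bigl(\hat\omega^*_i X_iX_i^\top\theta-\hat{\mathcal T}_iX_i\bigr)=0,
\]
so that
\[
\hat\theta=\Bigl(\sum_i\hat\omega^*_iX_iX_i^\top\Bigr)^{-1}\sum_i\hat{\mathcal T}_iX_i .
\]
For part (a) I would write the estimating function as $m(Z;\theta,\eta)=X(\omega^*(Z;\eta)X^\top\theta-\mathcal{T}_\mathrm{LT}(Z;\eta))$. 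Its mean is zero at $\theta_0$ by Theorem~\ref{thm:equiv-loss} and by the conditional identities $\E[\omega^*\mid X]=(1-\rho)\omega$ and $\E[\mathcal T_\mathrm{LT}\mid X]=(1-\rho)\omega\tau^0$, together with $\tau^0=\theta_0^\top x$.

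Next I would linearize. The sample split into the oracle term $\frac1n\sum m(Z_i;\theta_0,\eta)$, a cross-fitting empirical-process remainder, and a nuisance bias term. The remainder is $o_p(n^{-1/2})$ by the usual $L_2$-consistency argument on held-out folds. The bias term is second order by Theorem~\ref{thm:orthogonality}; I would bound it exactly as in the proof of Theorem~\ref{thm:error-bound} by products of nuisance errors and assume these are $o_p(n^{-1/2})$. The Jacobian converges to $M(\omega)=\E[(1-\rho)\omega XX^\top]$. This yields
\[
\mathcal V(\omega)=M(\omega)^{-1}\,\E\bigl[m m^\top\bigr]\,M(\omega)^{-1},
\]
with $\E[mm^\top]=\E[XX^\top\,\var(\omega^*X^\top\theta_0-\mathcal T_\mathrm{LT}\mid X)]$, since the conditional mean of $m$ vanishes.

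For part (b) I would compute the conditional variance $\sigma^2_\omega(x)=\var(\omega^*\tau^0-\mathcal T_\mathrm{LT}\mid X=x)$ explicitly. The $\Omega$ terms cancel up to $\Omega(\tau^0-(\mu_1-\mu_0))=0$, so the residual is $\1(R=0)\omega(\tau^0-\hat\tau_\mathrm{AIPW})-\1(R=1)\omega\psi_\mathrm{obs}$. This gives
\[
\sigma_\omega^2(x)\asymp\omega(x)^2\Bigl[\frac{(1-\rho)\,\var(h\mid A,x)}{\pi(1-\pi)}+\E\Bigl[\rho_s\Bigl(\tfrac{1-\rho_s}{\rho_s}\Bigr)^2\Bigl(\tfrac{\pi_s-\pi}{\pi(1-\pi)}\Bigr)^2\var(Y\mid S,x)\Bigm| x\Bigr]\Bigr].
\]
Using Assumption~\ref{ass:low-overlap}, under which variances and $\pi_s-\pi$ are bounded above and below, and bounding $\rho_s\asymp\rho$, this behaves like $\omega^2\bigl[(\pi(1-\pi))^{-1}+(\rho\,\pi^2(1-\pi)^2)^{-1}\bigr]$. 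This matches Eq.~(\ref{eq:upper-bound-optimal-weight}).

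With $\omega\equiv1$, $M(\mathbf 1)=\E[(1-\rho)XX^\top]$ stays bounded and nondegenerate, while the middle matrix picks up $\E[XX^\top/(\rho\pi^2(1-\pi)^2)]$. On a region of positive mass in every direction (the assumption should guarantee the low-overlap region's design matrix is positive definite), this term is at least of order $\epsilon^{-1}$, so $v^\top\mathcal V(\mathbf1)v\to\infty$. With $\omega=\pi(1-\pi)\rho$, we get $\omega^2\bigl[\cdot\bigr]\lesssim\rho^2\pi(1-\pi)+\rho\le 2$, so the meat is bounded.

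The main obstacle is the bread $M(\omega)$: its smallest eigenvalue must stay bounded below as $\epsilon\to0$. I would take this from Assumption~\ref{ass:low-overlap}, requiring that the low-overlap region has vanishing or controlled mass while a fixed-overlap region has design matrix bounded below. Under that condition $v^\top\mathcal V v\le\lambda_{\min}(M)^{-2}\|\text{meat}\|\,\|v\|^2$ stays bounded. The divergence claim additionally needs the low-overlap set to carry enough mass relative to $\epsilon$, which I would state explicitly in Assumption~\ref{ass:low-overlap}.
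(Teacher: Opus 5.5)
\textbf{Overall.} Your route is the paper's. You use the same score and sandwich $J(\omega)^{-1}\Sigma(\omega)J(\omega)^{-1}$ with $J(\omega)=\E[(1-\rho)\omega XX^\top]$, and the same exact cancellation of the $\Omega$ terms. You lower-bound the bread with a bulk region where $\pi(1-\pi)\rho\ge c_0$ and the design is nondegenerate. You force divergence in every direction with a nondegenerate low-overlap region. The bread needs no mass restriction on the low-overlap region: since $(1-\rho)\omega\ge 0$, the bulk alone gives $J(\omega)\succeq c_3c_0\,\E[XX^\top\1\{\mathcal B\}]$, which is the paper's Step~1.

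\textbf{The gap: divergence of $v^\top\mathcal V(\mathbf 1)v$ under low treatment overlap.} You get this divergence from the outcome term via the two-sided rate $\rho\Sigma_o\asymp1/(\rho\pi^2(1-\pi)^2)$. That rate requires $|\pi_s-\pi|$ to be bounded below, which is impossible once $\pi(1-\pi)$ is small:
\begin{itemize}
\item Since $\pi_s\in[0,1]$ and $\E[\pi_s\mid X,R=0]=\pi$, we have $\E[(\pi_s-\pi)^2\mid X,R=0]=\var(\pi_s\mid X,R=0)\le\pi(1-\pi)$.
\item The laws of $S\mid X,R=0$ and $S\mid X,R=1$ are mutually absolutely continuous because $0<\rho_s<1$, so an a.s.\ lower bound under one law would transfer to the other.
\end{itemize}
So in that scenario the outcome term can be arbitrarily small; it vanishes if $\pi_s\equiv\pi$. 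The $1/(\rho\pi^2(1-\pi)^2)$ scaling in Eq.~(\ref{eq:upper-bound-optimal-weight}) is only a heuristic with $C(x)$ set to one. That is why Assumption~\ref{ass:low-overlap}(e) asks only for $\Psi$ bounded above in the low-TO case.

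\textbf{The repair is already in your variance formula.} Split by scenario, as the paper's Step~2 does.
\begin{itemize}
\item In the low-TO case, use the treatment term: $(1-\rho)[\sigma_h^2(1,X)/\pi+\sigma_h^2(0,X)/(1-\pi)]\ge c_3c_1/(\pi(1-\pi))\asymp1/\epsilon$.
\item In the low-LO case, where $\pi(1-\pi)\ge c_4$ and $\Psi\asymp\rho^{-2}$, use the outcome term: $\rho\Psi/(\pi(1-\pi))^2\asymp1/\rho\asymp1/\epsilon$.
\end{itemize}
Your boundedness argument for $\omega=\pi(1-\pi)\rho$ uses only upper bounds, so it survives unchanged. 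Those bounds are $\rho^2(1-\rho)\pi(1-\pi)[(1-\pi)\sigma_h^2(1,X)+\pi\sigma_h^2(0,X)]\le\bar\sigma^2/4$ and $\rho^3\Psi\le\bar\Psi$.
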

\begin{proof}
    See Appendix~\ref{app:proof-error-bound} and \ref{app:linear-asymvar} for proofs of Theorems~\ref{thm:error-bound} and \ref{thm:linear-asymvar-main}, respectively.
\end{proof}

Theorem~\ref{thm:error-bound} shows that, due to Neyman-orthogonality, nuisance error enters only through higher-order interaction terms, so the HLTE estimator preserves its convergence rate even when nuisances are fit by flexible machine learning at slow nonparametric rates~\citep{Chernozhukov.2018, Foster.2023}. Theorem~\ref{thm:linear-asymvar-main} then shows that, for linear target models, the weight $\pi(1-\pi)\rho$ suffices to absorb the inverse-overlap blow-up of unweighted \DR's variance; stability of the \DO with a stronger weight $\pi^2(1-\pi)^2\rho$ follows naturally.

\vspace{-0.2cm}

\section{Experiments}
\label{sec:experiments}

In this section, we conduct experiments to empirically validate our theoretical results and to demonstrate the benefits of our proposed \methods in low-overlap settings. The experiments serve two main purposes: (1)~\textit{to show that the limited overlap is a primary source of estimation instability under finite samples in standard learners}; and (2)~\textit{to demonstrate that our proposed learner addresses this challenge through Neyman-orthogonality and our overlap weighting}.


\textbf{Baselines:} We evaluate the four instantiations of our \methods (see Table~\ref{tab:hlte-learners}). 
There is no direct benchmark specifically designed for our HLTE setting. Therefore, as baselines, we derive and adapt standard meta-learners, namely, a \LTT, \LTRA, and \LTIPW to our setting (see Section~\ref{sec:standard_learners}; details in Appendix~\ref{app:baselines}). The \DR is the unweighted special case of our \methods; we report it alongside the weighted variants but do not consider it part of our core contribution.
In our experiments, all meta-learners are instantiated using the same neural network architecture and identical hyperparameters (see Appendix~\ref{app:implementation}), thus ensuring that performance differences reflect the effectiveness of the respective adjustment strategies rather than model architecture. We measure the \textit{precision in estimation of heterogeneous effects (PEHE)}. Each experiment is repeated 5 times with different random seeds.


\begin{table*}
    \vspace{-0.5cm}
    \centering
    \begin{minipage}[t]{0.48\textwidth}
        \centering
        \vspace{0pt}
        \caption{\textbf{Synthetic experiments.} Mean $\pm$ std. dev. of the PEHE [$\downarrow$] across 5 runs. Improv. denotes the percentage improvement of the best LT-O-learner over the best baseline.}
        \label{tab:synthetic-main}
        \resizebox{\textwidth}{!}{%
            \begin{tabular}{@{}l l l l l@{}}
                \toprule \midrule
                \textbf{Overlap type} & $\mathbf{\mathcal{D}_{*}}$ & $\mathbf{\mathcal{D}_{t}}$ & $\mathbf{\mathcal{D}_{o}}$ & $\mathbf{\mathcal{D}_{t+o}}$ \\ \midrule
                \LTT           & $0.22 \pm 0.02$ & $0.43 \pm 0.10$ & $0.20 \pm 0.02$ & $0.26 \pm 0.03$ \\
                \LTRA          & $0.21 \pm 0.02$ & $0.56 \pm 0.07$ & $0.18 \pm 0.03$ & $0.67 \pm 0.08$ \\
                \LTIPW         & $0.23 \pm 0.03$ & $0.70 \pm 0.23$ & $0.19 \pm 0.04$ & $1.27 \pm 0.78$ \\
                \midrule
                \DR (\textit{ours})   & $0.03 \pm 0.01$ & $0.60 \pm 0.61$ & $0.23 \pm 0.17$ & $1.08 \pm 0.55$ \\
                \TO (\textit{ours})   & $\mathbf{0.03 \pm 0.01}$ & $0.09 \pm 0.01$ & $0.29 \pm 0.29$ & $0.23 \pm 0.18$ \\
                \LO (\textit{ours})   & $0.05 \pm 0.03$ & $0.45 \pm 0.16$ & $0.08 \pm 0.02$ & $0.39 \pm 0.19$ \\
                \DO (\textit{ours}) & $0.03 \pm 0.01$ & $\mathbf{0.07 \pm 0.02}$ & $\mathbf{0.07 \pm 0.01}$ & $\mathbf{0.10 \pm 0.02}$ \\ \midrule
                \textbf{Improv.}    & \textcolor{lightgreen}{84.9\%} & \textcolor{lightgreen}{84.1\%} & \textcolor{lightgreen}{62.2\%} & \textcolor{lightgreen}{61.2\%} \\ \bottomrule
                \end{tabular}
        }
    \end{minipage}%
    \hfill 
    \begin{minipage}[t]{0.48\textwidth} 
    \vspace{0pt}
        \centering
        \caption{\textbf{Experiments with real-world data (GAIN):} Mean $\pm$ std pseudo-PEHE [$\downarrow$] across 5 runs.  Improv. denotes the percentage improvement of the best LT-O-learner over the best baseline.}
        \label{tab:real-world-main}
        \resizebox{\textwidth}{!}{%
        \begin{tabular}{@{}l cc cc@{}}
            \toprule \midrule
            & \multicolumn{2}{c}{\textbf{Earnings (\$)}} & \multicolumn{2}{c}{\textbf{Employment rate}} \\
            \cmidrule(lr){2-3} \cmidrule(lr){4-5}
            \textbf{Learner} & $Q_{13:36}$ & $Q_{30:36}$ & $Q_{13:36}$ & $Q_{30:36}$ \\ \midrule
            \LTT   & $13.98 \pm 0.05$ & $15.71 \pm 0.06$ & $0.170 \pm 0.001$ & $0.205 \pm 0.001$ \\
            \LTRA  & $13.54 \pm 0.01$ & $15.50 \pm 0.09$ & $0.169 \pm 0.001$ & $0.205 \pm 0.001$ \\
            \LTIPW & $23.63 \pm 8.59$ & $22.89 \pm 5.74$ & $1.244 \pm 1.743$ & $1.111 \pm 1.318$ \\
            \midrule
            \DR (\textit{ours}) & $18.88 \pm 7.08$ & $19.44 \pm 5.98$ & $0.208 \pm 0.055$ & $0.921 \pm 1.144$ \\
            \TO (\textit{ours}) & $\mathbf{10.86 \pm 0.13}$ & $12.55 \pm 0.19$ & $0.120 \pm 0.002$ & $0.153 \pm 0.001$ \\
            \LO (\textit{ours}) & $13.26 \pm 2.73$ & $13.47 \pm 4.76$ & $0.130 \pm 0.037$ & $0.172 \pm 0.111$ \\
            \DO (\textit{ours}) & $10.93 \pm 0.21$ & $\mathbf{12.16 \pm 0.11}$ & $\mathbf{0.119 \pm 0.003}$ & $\mathbf{0.150 \pm 0.003}$ \\ \midrule
            \textbf{Improv.} & \textcolor{lightgreen}{19.8\%} & \textcolor{lightgreen}{21.5\%} & \textcolor{lightgreen}{29.4\%} & \textcolor{lightgreen}{26.5\%} \\ \bottomrule
        \end{tabular}
        }
    \end{minipage}
    \vspace{-0.5cm}
\end{table*}

\underline{\textbf{Experiments with synthetic data:}} We evaluate the performances across multiple synthetic datasets. We simulate 4 scenarios: $\mathcal{D}_*, \mathcal{D}_t, \mathcal{D}_o, \mathcal{D}_{t+o}$, which correspond to no overlap issue, low treatment overlap only, low long-term outcome overlap only, and mixed overlap issue (see Appendix~\ref{app:synthetic} for details on the data-generating process). 
\underline{\emph{Results:}} The results are in Table~\ref{tab:synthetic-main}. We find: \textbf{(1)}~As suggested by our theory, our \methods outperform all baselines on $\mathcal{D}_t, \mathcal{D}_o, \mathcal{D}_{t+o}$, with $84.1\%$, $62.2\%$, and $61.2\%$ improvements over the best baseline. Notably, the \DR performs well on $\mathcal{D}_*$ but worse than baselines under overlap regimes. \textbf{(2)}~The \DO dominates every low overlap regime, validating the joint weight $\omega=\pi^2(1-\pi)^2\rho$ derived from the variance lower bound (Appendix~\ref{sec:optimality}); the \TO and \LO remain competitive within their target regimes. \textbf{(3)}~Overall, our \methods, based on overlap weighting, consistently outperform baselines, highlighting the benefits of Neyman-orthogonality and sample weighting in low-overlap settings. In Appendix~\ref{app:pehe-stratified}, we further report PEHE stratified by overlap level to confirm that our \methods do not trade accuracy on low-overlap samples for overall stability.

\begin{wrapfigure}{r}{0.3\textwidth} 
  \centering
  \vspace{-0.2cm}
  \captionsetup{width=\linewidth} 
  \includegraphics[width=\linewidth]{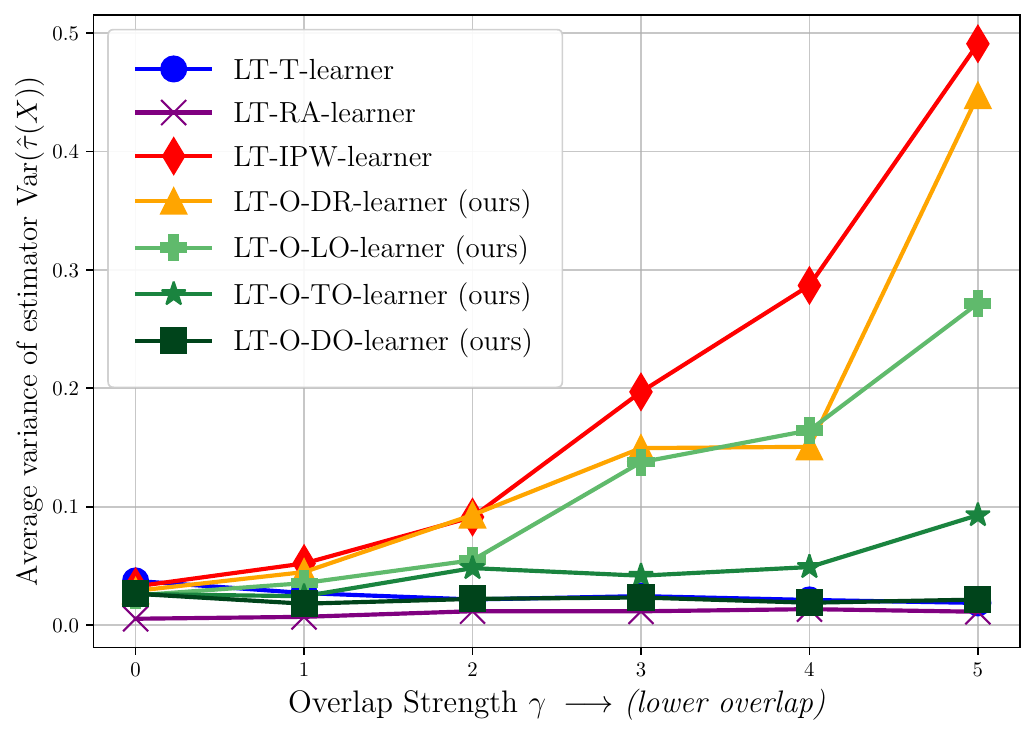}
  \caption{\textbf{Variance of estimator across different overlap scenarios ($\gamma$):}
  Low overlap leads to a high finite-sample variance of the DR-learner, while our \methods maintain a low variance.}
  \label{fig:estimator-variance-overlap}
  \vspace{-0.4cm}
\end{wrapfigure}

\textbf{$\bullet$\,Overlap-induced high variance:} We vary the parameter $(\gamma_\pi, \gamma_\rho) = (\gamma, 0.5\gamma)$ to gradually worsen the overlap in the data-generating process of $\mathcal{D}_{t+o}(\gamma)$. For each $\gamma$, we repeat the experiment using Monte Carlo simulation for $5$ runs and then calculate the variance of the resulting estimators, i.e., $V(\gamma)=\E_X\left[\var_{\mathcal{D}}(\hat\tau(X)) \mid \text{overlap}=\gamma\right]$.
\underline{\textit{Results: }} Figure~\ref{fig:estimator-variance-overlap} shows that the variance of the \DO remains low even under very-low-overlap regions, indicating it is more stable compared to the \DR. This result confirms that our novel overlap weighting strategy is effective in reducing the variance of the pseudo-outcomes.

\begin{wrapfigure}{r}{0.6\textwidth}
    \centering
    \vspace{-0.3cm}
    \captionsetup{width=\linewidth}
    \includegraphics[width=\linewidth]{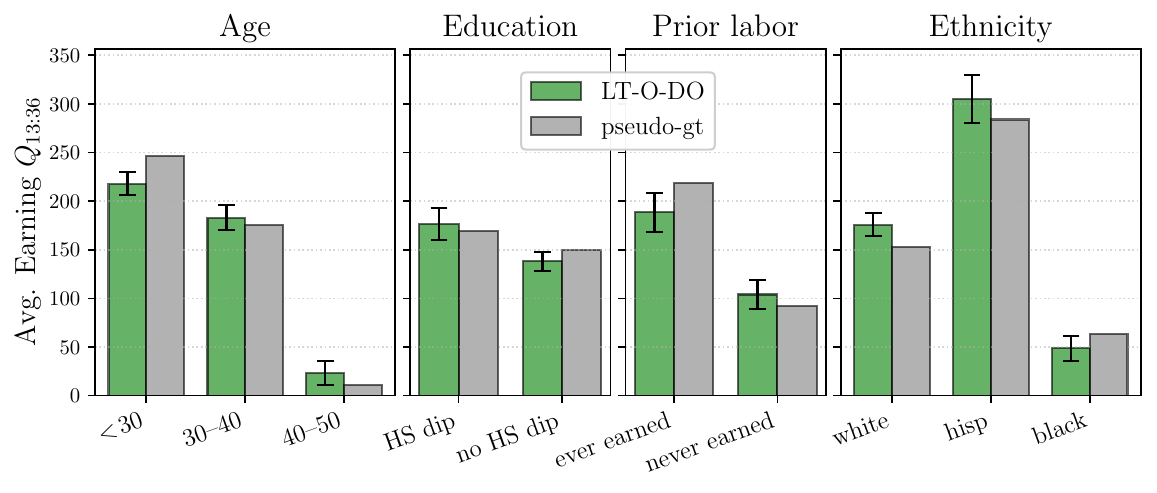}
    \caption{\textbf{Heterogeneity on GAIN.} Subgroup HLTEs on $Q_{13:36}$ earnings of the \DO against the pseudo ground-truth. Our method captures the known heterogeneity across age, education, prior labour status, and ethnicity.}
    \label{fig:gain-subgroup}
    \vspace{-0.3cm}
\end{wrapfigure}

\underline{\textbf{Experiments with real-world data:}} We use our \methods to evaluate the long-term effects of job training on labour market outcome with GAIN~\citep{Hotz.2006}, following~\citep{Athey.2025, Kallus.2025}. The dataset records quarterly earnings for 9 years after randomized training assignment on four locations: Riverside, San Diego, Los Angeles, and Alameda. As in \citet{Athey.2025}, we drop $Y$ from the Riverside subset and treat it $\mathcal{D}_1$ and applied rejection sampling to simulate low treatment overlap. We consider two long-term outcomes: average quarterly \textit{earnings} (\$) and \textit{employment rate}, each averaged over quarters $13$--$36$ (full long-run horizon) and $30$--$36$ (late horizon, where surrogacy is most challenging). Since the ground-truth HLTE is unavailable, we use the dropped $Y$ from $\mathcal{D}_1$ to compute the pseudo ground-truth (See Appendix~\ref{app:gain} for details).
\underline{\textit{Results:}} Table~\ref{tab:real-world-main} reports the mean $\pm$ std of pseudo-PEHE across runs. Across all four outcome--horizon combinations, our weighted \methods consistently outperform the baselines: the \DO attains the lowest PEHE in three columns and ties with the \TO on the remaining one, with relative $19.8\%$--$29.4\%$ improvements. The \DO and \TO also exhibit lower std than the un-weighted \DR and \LTIPW (e.g., $\pm0.119$ vs.\ $\pm5.987$ on $Q_{30:36}$ earnings), which confirms the robustness of our overlap weighting using real-world data. In Appendix~\ref{app:bias-surrogacy-violation-exp}, we additionally analyze sensitivity to surrogacy violation by varying the surrogate dimension $T$; the \DO remains the most accurate across all $T$.

\textbf{$\bullet$\, Subgroup-level heterogeneity:} We compare subgroup-level HLTE of the \DO against the pseudo ground-truth on $Q_{13:36}$ earnings (Figure~\ref{fig:gain-subgroup}). Across all four partitions---age, education, prior labour attachment, and ethnicity---the \DO recovers known heterogeneity patterns of the GAIN program~\citep{Hotz.2006}: training effects decline sharply with age, are slightly larger for high-school graduates and individuals with prior earnings, and vary substantially across ethnic subgroups. This qualitative agreement indicates that the predictions of our \methods are substantively sensible on real labour-market data, beyond aggregate PEHE.

\textbf{Conclusion:} We develop a novel class of orthogonal meta-learners for estimating HLTEs. Through theorectical analysis and empirical experiments, we demonstrate that our method achieves superior robustness in low-overlap regimes.

\bibliographystyle{plainnat}
\bibliography{references}

\newpage
\appendix

\section{Extended related work}
\label{app:extended-related-work}

\subsection{Comparison of settings for long-term causal effect estimation}
\label{app:lte-settings-comparison}

\providecommand{\cmarkk}{\ding{51}}
\providecommand{\xmarkk}{\ding{55}}
\providecommand{\cmarkgrey}{\textcolor{gray}{\ding{51}}}

The literature on long-term causal effect estimation differs along three axes: (i)~which variables are observed in the short-term ($\mathcal{D}_1$) and long-term ($\mathcal{D}_2$) datasets, (ii)~the identification mechanism used to bridge $S$ and $Y$, and (iii)~the target estimand. \Cref{fig:canonical-surrogacy} depicts the canonical surrogacy setting, and \Cref{tab:lte-settings} summarizes representative methods grouped by their high-level identification setting: latent unconfoundedness, equi-confounding, proximal data fusion, and surrogacy. Within the surrogacy setting in which $A$ is unobserved in $\mathcal{D}_2$, ours is the only method targeting the heterogeneous long-term treatment effect while simultaneously achieving Neyman orthogonality and robustness to low overlap.

\begin{figure}[h]
\centering
\begin{tikzpicture}[
  observed/.style={circle, draw, minimum size=7mm, inner sep=1pt, font=\small},
  latent/.style={circle, draw, dashed, fill=gray!10, minimum size=7mm, inner sep=1pt, font=\small},
  arr/.style={->, >=latex, thick},
  surr/.style={->, >=latex, thick, dotted}
]
\node[observed] (X1) at (1.5, 1.6) {$X$};
\node[observed] (A1) at (0, 0)     {$A$};
\node[observed] (S1) at (1.5, 0)   {$S$};
\node[latent]   (Y1) at (3, 0)     {$Y$};
\draw[arr] (X1) -- (A1);
\draw[arr] (X1) -- (S1);
\draw[arr] (X1) -- (Y1);
\draw[arr] (A1) -- (S1);
\draw[arr] (S1) -- (Y1);
\draw[surr] (A1) to[bend right=35] node[midway, draw=none, fill=white, inner sep=2pt] {\textcolor{red}{\large $\boldsymbol{\times}$}} (Y1);
\node at (1.5, -1.6) {$\mathcal{D}_1$: short-term experimental};

\node[observed] (X2) at (7, 1.6) {$X$};
\node[observed] (S2) at (6.2, 0) {$S$};
\node[observed] (Y2) at (7.8, 0) {$Y$};
\draw[arr] (X2) -- (S2);
\draw[arr] (X2) -- (Y2);
\draw[arr] (S2) -- (Y2);
\node at (7, -1.6) {$\mathcal{D}_2$: long-term observational};
\end{tikzpicture}
\caption{\textbf{Causal diagram for the canonical surrogacy setting.} The short-term experimental dataset $\mathcal{D}_1$ observes $(X, A, S)$ while $Y$ is unobserved (dashed node); the long-term observational dataset $\mathcal{D}_2$ observes $(X, S, Y)$ with treatment $A$ absent. The dotted arrow $A \to Y$ marked with $\times$ indicates that the surrogacy assumption rules out a direct effect of $A$ on $Y$ given $(S, X)$.}
\label{fig:canonical-surrogacy}
\end{figure}

\begin{table}[h]
\centering
\caption{\textbf{Comparison of settings for long-term causal effect estimation.} Methods are grouped by their high-level identification setting. \cmarkk{}/\cmarkgrey{} indicates that a variable is observed; for the treatment columns, \cmarkk{} denotes binary/categorical treatment and \cmarkgrey{} denotes continuous treatment. \xmarkk{} indicates that the variable is not observed. \cmark{}/\xmark{} in the Benefits columns indicate whether each method satisfies the corresponding property.}
\label{tab:lte-settings}
\resizebox{\textwidth}{!}{%
\begin{tabular}{l l c c c c c c c l l c c}
\toprule
\multirow{2}{*}{\textbf{Setting}}
& \multirow{2}{*}{\textbf{Paper}}
& \multicolumn{3}{c}{\textbf{Short-term} $\mathcal{D}_1$}
& \multicolumn{4}{c}{\textbf{Long-term} $\mathcal{D}_2$}
& \multirow{2}{*}{\makecell[l]{\textbf{Identification}\\\textbf{through}}}
& \multirow{2}{*}{\textbf{Target}}
& \multicolumn{2}{c}{\textbf{Benefits}} \\
\cmidrule(lr){3-5}\cmidrule(lr){6-9}\cmidrule(lr){12-13}
& & $X$ & $A$ & $S$ & $X$ & $A$ & $S$ & $Y$ & & & \makecell{Neyman\\orthogonality} & \makecell{Robustness to\\low overlap} \\
\midrule
\multirow{2}{*}{\makecell[l]{Latent\\unconfoundedness}}
 & \citet{Chen.2023} (latent unconf.) & \cmarkk & \cmarkk    & \cmarkk & \cmarkk & \cmarkk    & \cmarkk & \cmarkk & Latent unconfoundedness & ALTE                  & \cmark & \xmark \\
 & \citet{Yang.2025-dose-response}    & \cmarkk & \cmarkgrey & \cmarkk & \cmarkk & \cmarkgrey & \cmarkk & \cmarkk & Latent unconfoundedness & HLTE (dose-response)  & \xmark & \xmark \\
\midrule
\multirow{2}{*}{\makecell[l]{Equi-\\confounding}}
 & \citet{Ghassami.2022} & \cmarkk & \cmarkk & \cmarkk & \cmarkk & \cmarkk & \cmarkk & \cmarkk & Additive equi-conf.$^{\dagger}$        & ALTE, ATT & \cmark & \xmark \\
 & \citet{Chen.2025}     & \cmarkk & \cmarkk & \cmarkk & \cmarkk & \cmarkk & \cmarkk & \cmarkk & CAECB                                  & HLTE      & \cmark & \xmark \\
\midrule
Proximal
 & \citet{Imbens.2025}   & \cmarkk & \cmarkk & \cmarkk & \cmarkk & \cmarkk & \cmarkk & \cmarkk & Outcome + selection bridge functions   & ALTE      & \cmark & \xmark \\
\midrule
\rowcolor{shallowyellow}
 & \citet{Athey.2025}                  & \cmarkk          & \cmarkk          & \cmarkk          & \cmarkk          & \xmarkk          & \cmarkk          & \cmarkk          & Surrogacy + comparability          & ALTE                & \cmark          & \xmark          \\
\rowcolor{shallowyellow}
 & \citet{Chen.2023} (stat.~surrogacy) & \cmarkk          & \cmarkk          & \cmarkk          & \cmarkk          & \xmarkk          & \cmarkk          & \cmarkk          & Surrogacy + comparability          & ALTE                & \cmark          & \xmark          \\
\rowcolor{shallowyellow}
 & \citet{Singh.2022}                  & \cmarkk          & \cmarkgrey       & \cmarkk          & \cmarkk          & \xmarkk          & \cmarkk          & \cmarkk          & Surrogacy + comparability (RKHS)   & Dose-response curve & \xmark          & \xmark          \\
\rowcolor{shallowyellow}
 & \citet{Yang.2024-Targeting}         & \cmarkk          & \cmarkk          & \cmarkk          & \cmarkk          & \xmarkk          & \cmarkk          & \cmarkk          & Surrogacy + comparability          & Policy learning     & \cmark          & \xmark          \\
\rowcolor{shallowyellow}
\multirow{-5}{*}{Surrogacy (canonical setting)}
 & \textbf{Ours}                       & \textbf{\cmarkk} & \textbf{\cmarkk} & \textbf{\cmarkk} & \textbf{\cmarkk} & \textbf{\xmarkk} & \textbf{\cmarkk} & \textbf{\cmarkk} & \textbf{Surrogacy + comparability} & \textbf{HLTE}       & \textbf{\cmark} & \textbf{\cmark} \\
\bottomrule
\end{tabular}
}
\par\vspace{2pt}
\footnotesize
$^{\dagger}$\citet{Ghassami.2022} additionally propose bespoke-IV and single-proxy proximal variants under the same data setting.
\end{table}

\clearpage
\newpage
\section{Additional experiments}
\label{app:more-experiments}

\subsection{Ablations w.r.t. Neyman orthogonality}
\label{app:ablation-orthogonality}
In this experiment, we analyze why Neyman-orthogonality is crucial for robustness. We use the synthetic $\mathcal{D}_{t+o}$ dataset (mixed overlap) and vary the sample size, where a lower sample size implies that the nuisance function estimation is more difficult. In addition to the existing baselines, we add two ablations: \textbf{weighted but non-orthogonal} versions of the LT-RA-learner and LT-DR-learner (see Appendix~\ref{app:baselines} for details). These ablations also employ the same overlap weighting $\omega = \pi^2(1-\pi)^2\rho$ as the \DO, allowing us thus to isolate the effect of Neyman-orthogonality. 

\underline{\textit{Results:}} The left panel of Figure~\ref{fig:neyman} shows that the orthogonal \DO consistently outperforms non-orthogonal ablations in finite-sample regimes. The right panel further illustrates how nuisance function estimation performance deteriorates as the sample size decreases. Taken together, these results show that our orthogonal \methods remain robust, thus effectively mitigating the bias induced by nuisance estimation errors.

\begin{figure}[h!]
    \centering
    \includegraphics[width=1.0\linewidth]{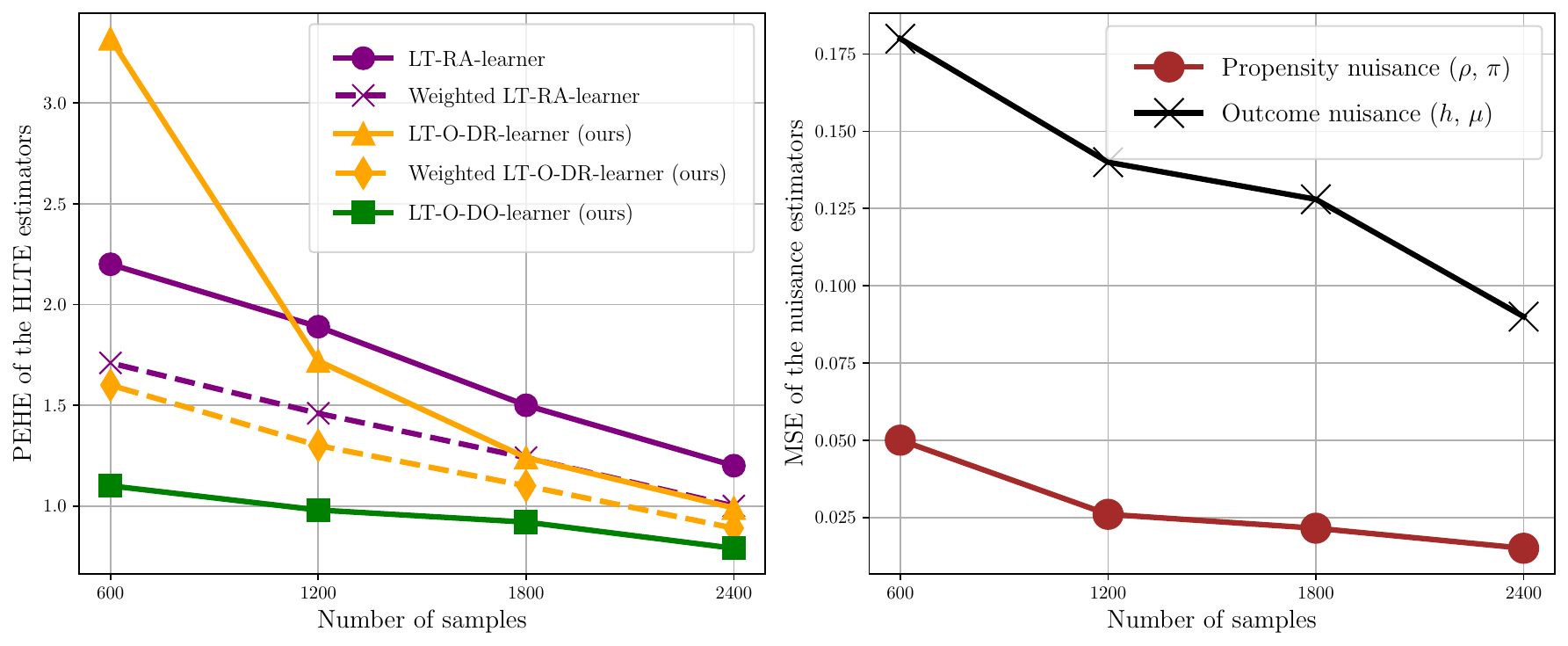}
    \caption{\textbf{Benefit of Neyman-orthogonality:} (\textit{Left})~PEHE vs. sample size. The orthogonal LT-O-DO-learner outperforms the weighted (but non-orthogonal) DR/RA-learner. (\textit{Right})~Mean squared error (MSE) of the nuisance estimation. Taken together, high errors in the nuisance functions at small sample sizes drive the instability of non-orthogonal baselines.}
    \label{fig:neyman}
    \vspace{-0.5cm}
\end{figure}

\subsection{PEHE stratified by overlap}
\label{app:pehe-stratified}
In this experiment, we take a granular examination of the performance of our \methods in the synthetic experiment. We stratify the test samples into five groups based on the overlap score $\text{overlap}(x) = \pi(x) (1 - \pi(x)) \rho(x)$. We then report the quantiles of the per-sample $\|\hat \tau(x) - \tau^*(x)\|$ within each group on the test set. We conduct this analysis for all four overlap scenarios, i.e., no overlap regmine ($\mathcal{D}_*$), low treatment overlap ($\mathcal{D}_t$), low outcome overlap ($\mathcal{D}_o$), and low treatment and outcome overlap ($\mathcal{D}_{t+o}$). 

\underline{\textit{Results:}} We plot the quantiles of the per-sample $\|\hat \tau(x) - \tau^*(x)\|$ within each overlap stratum in Figures~\ref{fig:syn-overlap-strat-dual}--\ref{fig:syn-overlap-strat-healthy}. Importantly, our \methods remain accurate even in low-overlap strata, despite the sample retargeting. For instance, in the low treatment overlap scenario $\mathcal{D}_t$ and joint low overlap scenario $\mathcal{D}_{t+o}$, our \DO and \TO achieve the lowest PEHE across all strata. In contrast, non-orthogonal baselines and the \DR have significantly higher PEHE. This shows that our \methods do \emph{not} trade accuracy in the low-overlap samples for the overall population stability. On the contrary, it stabilizes the estimation across all the samples, including those with low overlap.

\begin{figure}[h!]
    \centering
    \includegraphics[width=1.0\linewidth]{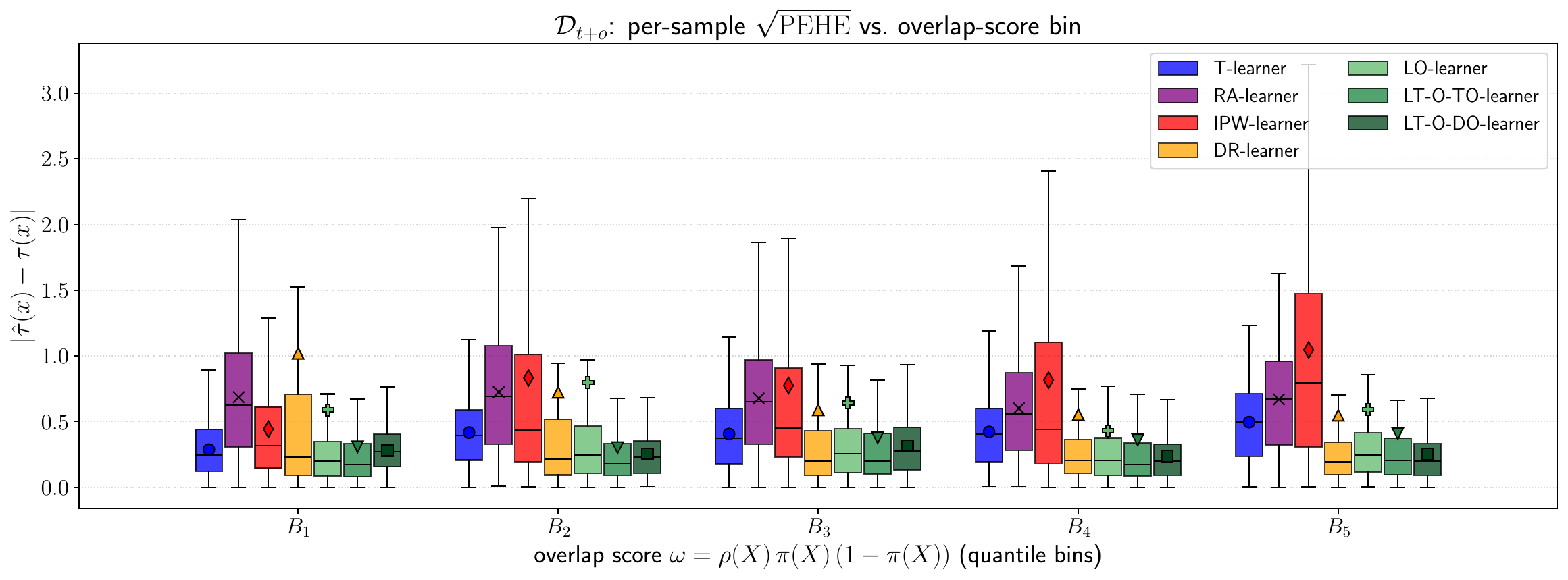}
    \caption{\textbf{PEHE stratified by overlap on $\mathcal{D}_{t+o}$:}  The boxplot shows the 0\%, 25\%, 50\%, 75\%, and 100\% quantiles, as well as the mean (marker) of the square root pseudo-PEHE within each overlap stratum. The plots show that our \methods (\DO, \TO, \LO) remain accurate even in low-overlap strata, despite the sample retargeting.}
    \label{fig:syn-overlap-strat-dual}
    \vspace{-0.2cm}
\end{figure}

\begin{figure}[h!]
    \centering
    \includegraphics[width=1.0\linewidth]{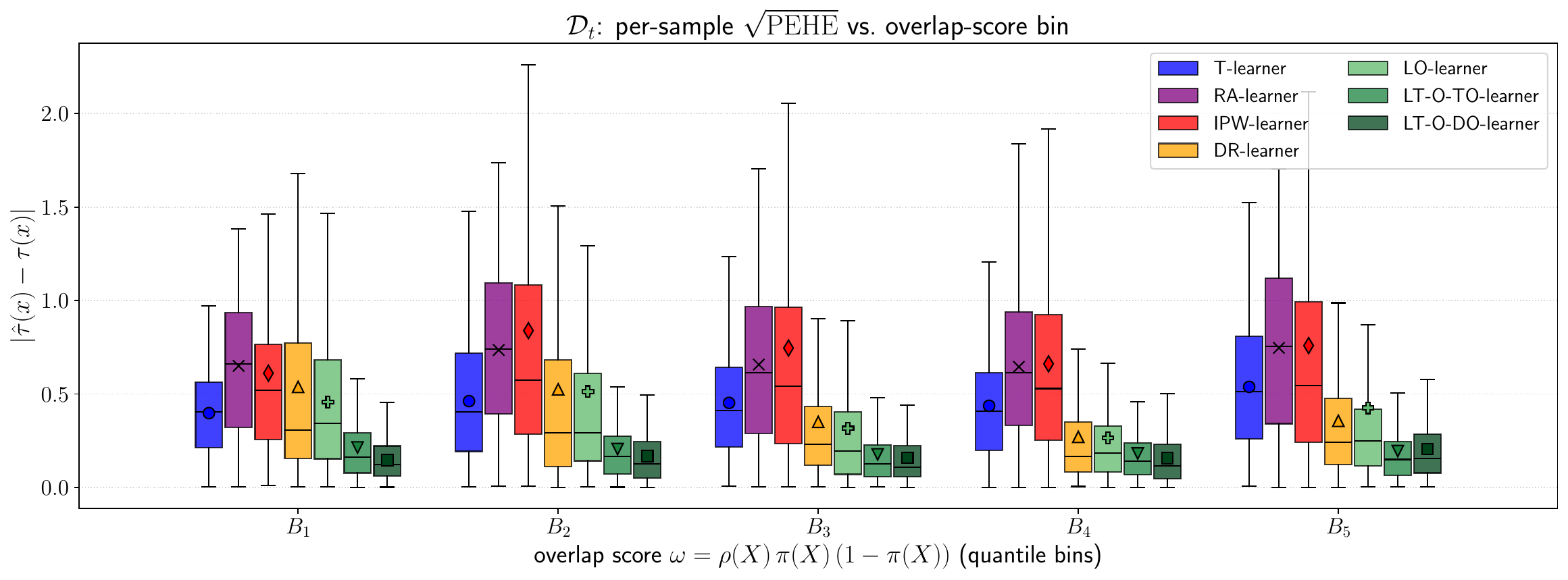}
    \caption{\textbf{PEHE stratified by overlap on $\mathcal{D}_{t}$:}  The boxplot shows the 0\%, 25\%, 50\%, 75\%, and 100\% quantiles, as well as the mean (marker) of the square root pseudo-PEHE within each overlap stratum. The plots show that our \methods (\DO, \TO) remain accurate even in low-overlap strata, despite the sample retargeting.}
    \label{fig:syn-overlap-strat-low-treatment}
    \vspace{-0.5cm}
\end{figure}

\begin{figure}[h!]
    \centering
    \includegraphics[width=1.0\linewidth]{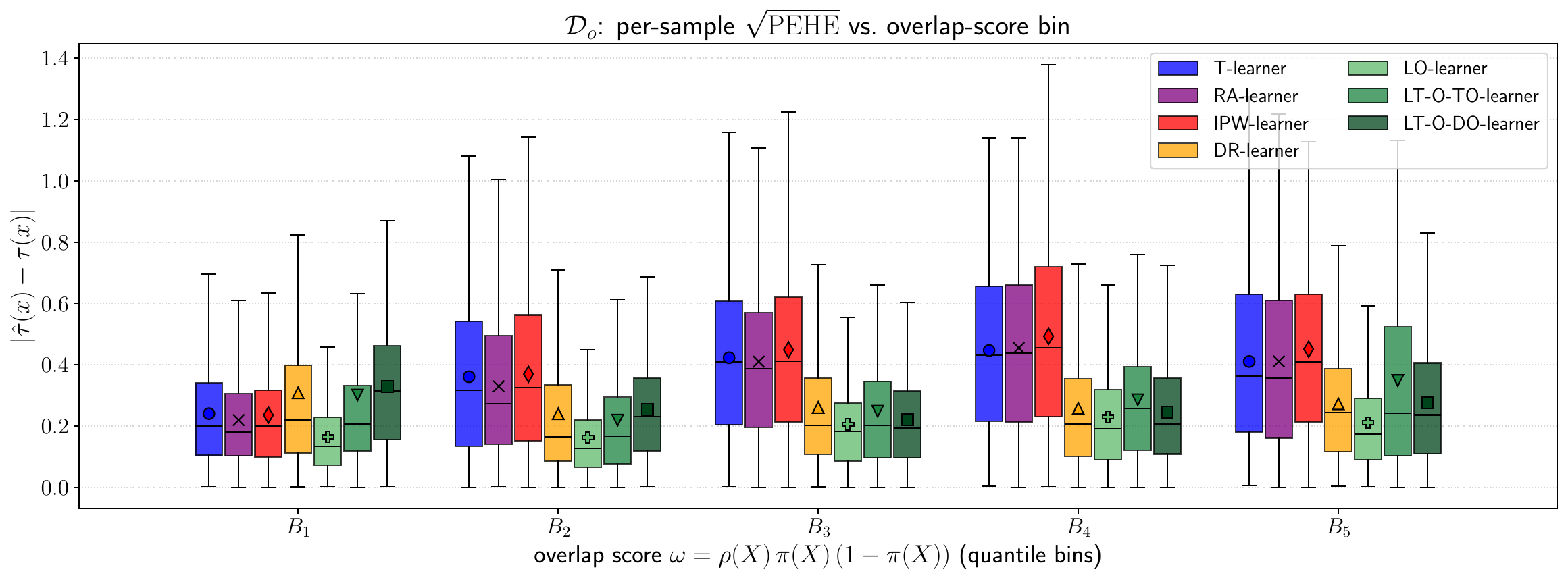}
    \caption{\textbf{PEHE stratified by overlap on $\mathcal{D}_{o}$:}  The boxplot shows the 0\%, 25\%, 50\%, 75\%, and 100\% quantiles, as well as the mean (marker) of the square root pseudo-PEHE within each overlap stratum. The plots show that our \LO remains accurate even in low-overlap strata, despite the sample retargeting.}
    \label{fig:syn-overlap-strat-low-outcome}
    \vspace{-0.5cm}
\end{figure}

\begin{figure}[h!]
    \centering
    \includegraphics[width=1.0\linewidth]{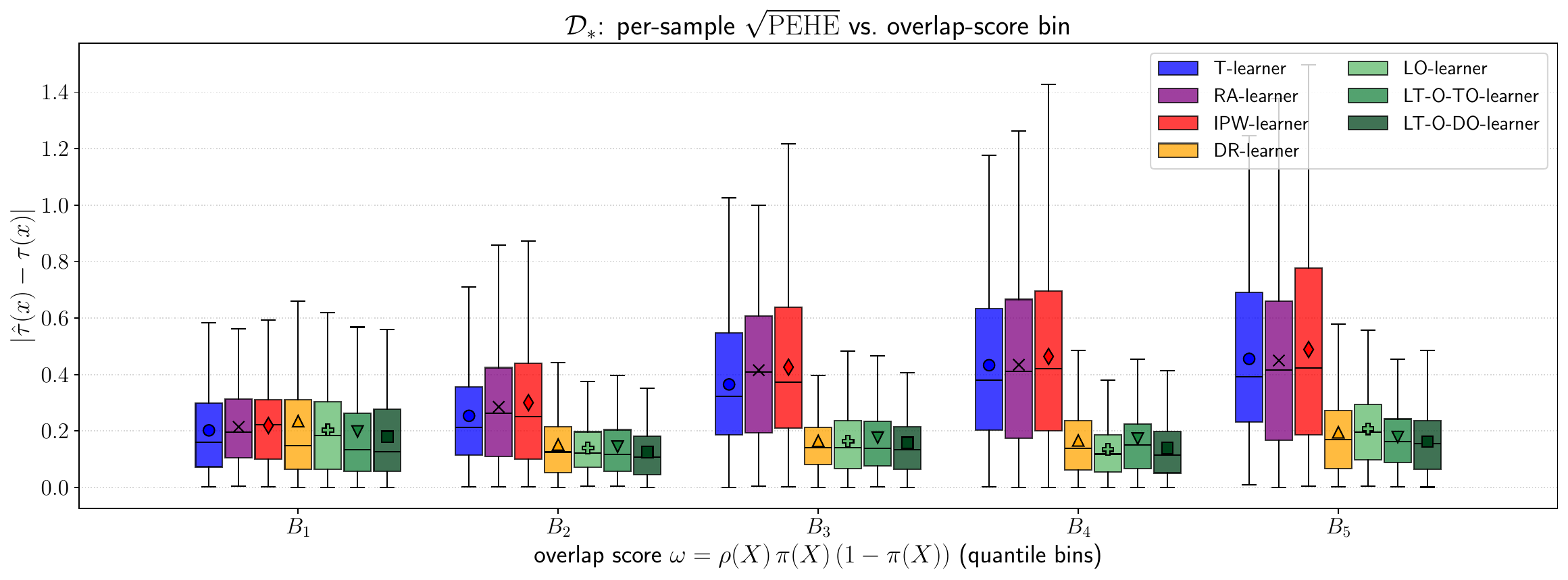}
    \caption{\textbf{PEHE stratified by overlap on $\mathcal{D}_{*}$:}  The boxplot shows the 0\%, 25\%, 50\%, 75\%, and 100\% quantiles, as well as the mean (marker) of the square root pseudo-PEHE within each overlap stratum. The plots show that our \methods (\DO, \TO, \LO) remain accurate even in low-overlap strata, despite the sample retargeting.}
    \label{fig:syn-overlap-strat-healthy}
    \vspace{-0.5cm}
\end{figure}

\FloatBarrier

\subsection{Surrogacy violation sensitivity analysis}
\label{app:bias-surrogacy-violation-exp}
In this experiment, we analyze the bias of our \methods under surrogacy violation on the real-world GAIN dataset. The surrogate variable $S$ in GAIN is a multi-dimensional variable $(S_1, \ldots, S_T)$, where $S_t$ is the employment status at quarter $t$ after the job training program. In principle, the higher the $T$, the more likely that the surrogacy assumption is satisfied, as more post-treatment information is captured by the surrogate. We thus vary $T$ from 2 to 12 and report the pseudo-PEHE of \LTT, \DR, and \DO. 

\begin{figure}[h!]
    \centering
    \includegraphics[width=0.8\linewidth]{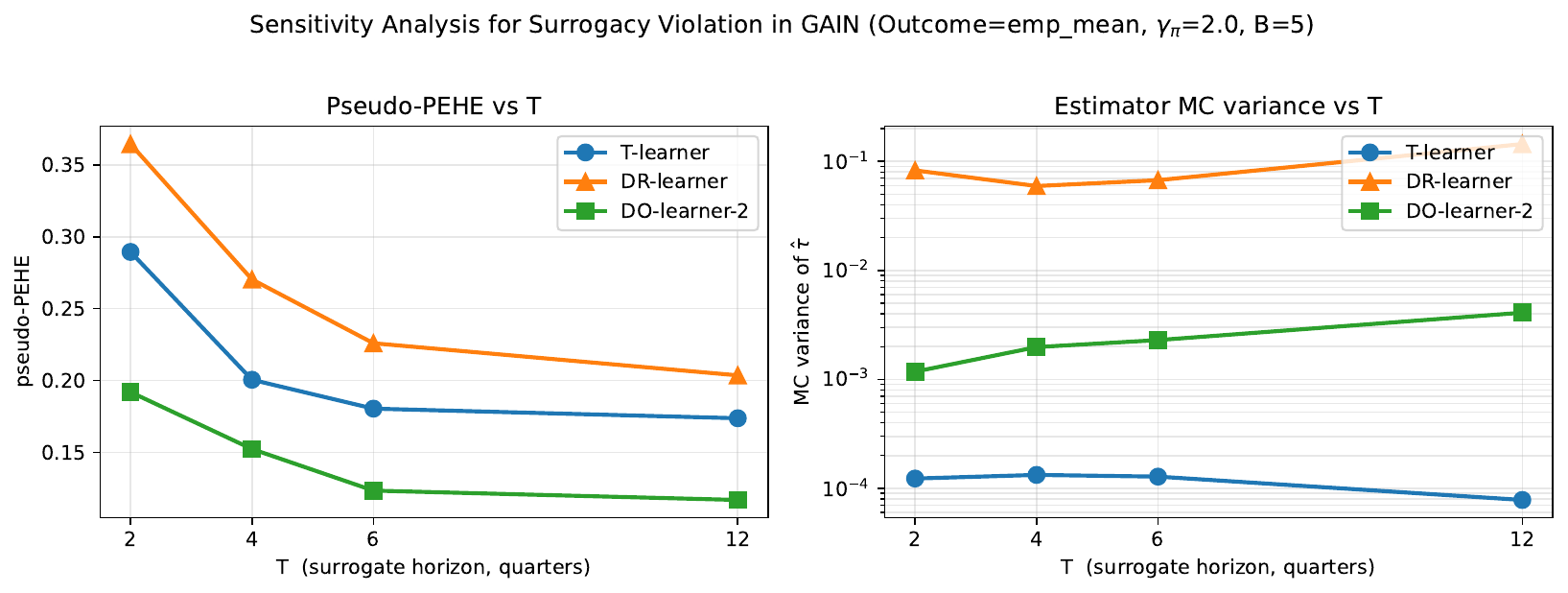}
    \caption{\textbf{Surrogacy violation sensitivity analysis on GAIN:} The left (right) figure shows the mean (variance) of pseudo-PEHE of the \LTT, \DR, and \DO across different surrogate dimensions $T$. The surrogacy violation is more severe at lower $T$, which leads to higher pseudo-PEHE for all methods. However, our \DO remains more robust than the baselines across all $T$.}
    \label{fig:gain-surrogacy-sensitivity}
\end{figure}

\underline{\textit{Results:}} As shown in Figure~\ref{fig:gain-surrogacy-sensitivity}, the pseudo-PEHE of all methods decreases as $T$ increases, which is consistent with the intuition that surrogacy violation is more severe at lower $T$. However, our \DO remains more robust than the baselines across all $T$. In GAIN, the surrogacy test of \citet{Athey.2025} suggests that surrogacy is not strongly rejected for $T \geq 6$, while $T=2$ violates it. This is consistent with our finding that the pseudo-PEHE is high for $T\leq 4$ and stabilizes for $T \geq 6$.

\newpage
\section{Theory}
\label{app:theory}

\subsection{Identification}
\label{app:identification}
In this section, we briefly demonstrate the identifiability of the heterogeneous long-term treatment effects under the surrogacy model. The identification of average long-term treatment effects is first shown by~\citet{Athey.2025}. We make a trivial extension to the heterogeneous setting. 
\begin{lemma}[Identification of HLTE]
    Under Assumptions~\ref{assumption-consistency} (Consistency), \ref{assumption-positivity} (Positivity), \ref{assumption-unconf} (Experimental Unconfoundedness), \ref{assumption-surrogacy} (Surrogacy), and \ref{assumption-compara} (Comparability), the heterogeneous long-term treatment effect $\tau^0(x)$ is identified as:
    \begin{align}
        \tau^0(x) = \mu(1,x) - \mu(0,x),
    \end{align}
    where $\mu(a,x) = \mathbb{E}[h(S,X) \mid A=a, X=x, R=0]$ and $h(s,x) = \mathbb{E}[Y \mid S=s, X=x, R=1]$.
\end{lemma}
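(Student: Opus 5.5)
The identification of $\tau^0(x)$ reduces to showing, for each $a\in\{0,1\}$, that
\begin{equation*}
\E[Y(a)\mid X=x,R=0]=\mu(a,x).
\end{equation*}
Subtracting the $a=0$ case from the $a=1$ case then gives the lemma. The plan is to pass through three conditional expectations in sequence, each justified by one assumption.

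\textbf{Step 1 (remove the potential outcome).} Unconfoundedness (Assumption~\ref{assumption-unconf}) gives $A\indep Y(a)\mid X,R=0$. Positivity (Assumption~\ref{assumption-positivity}) guarantees the conditioning event $\{A=a,X=x,R=0\}$ has positive probability. Hence
\begin{equation*}
\E[Y(a)\mid X=x,R=0]=\E[Y(a)\mid A=a,X=x,R=0],
\end{equation*}
and by consistency (Assumption~\ref{assumption-consistency}) this equals $\E[Y\mid A=a,X=x,R=0]$.

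\textbf{Step 2 (bring in the surrogate).} Apply the tower property over $S$:
\begin{equation*}
\E[Y\mid A=a,X=x,R=0]=\E\bigl[\E[Y\mid S,A=a,X=x,R=0]\,\big|\,A=a,X=x,R=0\bigr].
\end{equation*}
The surrogacy part of Assumption~\ref{assumption-surrogacy}, namely $A\indep Y\mid S,X,R=0$, lets me drop $A$ from the inner expectation, giving $\E[Y\mid S,X,R=0]$.

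\textbf{Step 3 (transport to $\mathcal{D}_2$).} Comparability (Assumption~\ref{assumption-compara}), $R\indep Y\mid S,X$, combined with the overlap condition $0<P(R=1\mid S,X)<1$ from Assumption~\ref{assumption-surrogacy}, gives
\begin{equation*}
\E[Y\mid S,X,R=0]=\E[Y\mid S,X,R=1]=h(S,X).
\end{equation*}
Substituting back yields $\E[h(S,X)\mid A=a,X=x,R=0]=\mu(a,x)$, which is the required identity.

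The main obstacle is a measure-theoretic subtlety in Step 3. The identity $\E[Y\mid S,X,R=0]=h(S,X)$ must hold on the support of $S$ given $(A=a,X=x,R=0)$. I would handle this by noting that the overlap condition makes both conditional laws well defined $P_{S,X}$-almost everywhere, so the equality holds almost surely under the distribution of $S$ given $(A=a,X=x,R=0)$. No other step poses difficulty.
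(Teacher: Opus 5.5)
Your proposal is correct and follows the paper's proof step for step: unconfoundedness and consistency to pass from $Y(a)$ to $Y$ given $A=a$, the tower law over $S$, surrogacy to drop $A$ from the inner expectation, and comparability to transport that expectation to $h(S,X)$. The one addition is that you explicitly invoke positivity and the overlap condition $0<P(R=1\mid S,X)<1$, so that the conditional expectations are well defined and the equality holds almost surely on the relevant support of $S$; the paper leaves this implicit.
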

\begin{proof}
    We begin with the definition of the causal estimand of interest:
    \begin{align}
        \tau^0(x) = \mathbb{E}[Y(1) - Y(0) \mid X=x, R=0].
    \end{align}
    To estimate $\tau^0$, it suffices to identify the conditional expectation of the potential outcome $\mathbb{E}[Y(a) \mid X=x, R=0]$ for any $a \in \{0,1\}$.
    
    First, using Assumption~\ref{assumption-consistency} (Consistency) and Assumption~\ref{assumption-unconf} (Unconfoundedness), we can relate the potential outcome to the conditional expectation given the treatment assignment $A=a$:
    \begin{align}
        \mathbb{E}[Y(a) \mid X=x, R=0] &= \mathbb{E}[Y(a) \mid A=a, X=x, R=0] \nonumber \\
        &= \mathbb{E}[Y \mid A=a, X=x, R=0].
    \end{align}
    Then, we apply the tower law for expectation by conditioning on the surrogate variable $S$:
    \begin{align}
    \label{eq:proof-ident}
        \mathbb{E}[Y \mid A=a, X=x, R=0] &= \mathbb{E}_{S} \Big[ \mathbb{E}[Y \mid S, A=a, X=x, R=0] \Big| A=a, X=x, R=0 \Big].
    \end{align}
    We invoke Assumption~\ref{assumption-surrogacy} (Surrogacy), which states that $A \indep Y \mid S, X, R=0$. This allows us to remove the conditioning on $A$ in the inner expectation:
    \begin{align}
        \mathbb{E}[Y \mid S, A=a, X=x, R=0] = \mathbb{E}[Y \mid S, X, R=0].
    \end{align}
    Since the long-term outcome $Y$ is not observed in the experimental dataset $\mathcal{D}_1$ ($R=0$), we require Assumption~\ref{assumption-compara} (Long-term outcome comparability), specifically $Y \indep R \mid S, X$, to transport the outcome model from the observational dataset $\mathcal{D}_2$:
    \begin{align}
        \mathbb{E}[Y \mid S, X, R=0] &= \mathbb{E}[Y \mid S, X, R=1] \nonumber \\
        &= h(S,X).
    \end{align}
    Substituting this back into Eq.~(\ref{eq:proof-ident}), we get:
    \begin{align}
        \mathbb{E}[Y(a) \mid X=x, R=0] &= \mathbb{E}_{S} \Big[ h(S,X) \Big| A=a, X=x, R=0 \Big] \nonumber \\
        &= \mu(a,x).
    \end{align}
   Hence, we yield
    \begin{align}
        \tau^0(x) = \mu(1,x) - \mu(0,x).
    \end{align}
\end{proof}

\subsection{Efficient influence function}
\label{app:eif}

\paragraph{What is the efficient influence function (EIF)? } The EIF is a fundamental concept in semiparametric statistics that describes the sensitivity of an estimator to the underlying data distribution. Formally, consider a functional $\psi(\mathbb{P})$ of the data distribution $\mathbb{P}$. In our case,
\begin{align}
    \psi(\mathbb{P})=\mathbb{E}_{S}[\mathbb{E}[Y \mid S, X, R=1]| A=1, X=x, R=0]- \mathbb{E}_{S}[\mathbb{E}[Y \mid S, X, R=1]| A=0, X=x, R=0].
\end{align}
The influence function $\phi(Z)$ of an estimator $\hat{\psi}$  is defined as the Gateaux derivative of the functional at $\mathbb{P}$ in the direction of a point mass $\delta_Z$~\citep{Hampel.1974}:
\begin{align}
    \lim_{\epsilon \to 0} \frac{\psi((1-\epsilon)\mathbb{P} + \epsilon \delta_Z) - \psi(\mathbb{P})}{\epsilon} = \phi(Z).
\end{align}

To illustrate this in our context, consider the average long-term treatment effect (ALTE), $\tau = \mathbb{E}[\tau^0(X)]$. The EIF for the ALTE in the standard surrogacy model is given by the centered version of the doubly-robust pseudo-outcome derived in Eq.~(\ref{DR-pseudo-outcome}):
\begin{align}
    \phi_{ALTE}(Z) = \mathcal{T}_\mathrm{DR}(Z;\eta) - \tau.
\end{align}
First derived in \citet{Chen.2023, Athey.2025}, the corresponding EIF defines the \textbf{efficiency bound} of the ALTE estimation. An estimator is asymptotically efficient (i.e., it achieves the lowest possible asymptotic variance, known as the Cram\'er-Rao lower bound) if and only if it is asymptotically linear with influence function equal to the EIF~\citep{Bickel.1993, Tsiatis.2006}.

\paragraph{Properties of EIF:}
The EIF has two properties that are critical for constructing robust two-stage learners:
\begin{enumerate}
    \item \textbf{Mean zero:} The expectation of the influence function is zero, i.e., $\mathbb{E}[\phi(Z)] = 0$. This implies that the uncentered EIF (or pseudo-outcome), i.e., $\mathcal{T}_\mathrm{DR}(Z)$ is an unbiased estimator of the target parameter: $\mathbb{E}[\mathcal{T}_\mathrm{DR}(Z)] = \tau$.
    
    \item \textbf{Neyman-orthogonality:} The most important property is that the EIF is insensitive to perturbations in the nuisance parameters $\eta$. Mathematically, the G\^ateaux derivative of the expected EIF with respect to the nuisance functions is zero:
    \begin{align}
        \frac{\partial}{\partial r} \mathbb{E}\left[ \phi(Z; \eta + r(\hat{\eta} - \eta)) \right] \bigg|_{r=0} = 0.
    \end{align}
    This \emph{orthogonality} condition implies that the estimation error of the nuisance functions $\hat{\eta} - \eta$ affects the final target estimate $\tau$ only at a second-order rate (i.e., $\|\hat{\eta} - \eta\|^2$). This allows the final estimator to be still $\sqrt{n}$-consistent even if the nuisance components are estimated at slower rates (e.g., $n^{-1/4}$), which is common when using machine learning methods~\citep{Chernozhukov.2018}.
\end{enumerate}

\paragraph{Connection between EIF and DR-pseudo-outcome:}
In our setting, we estimate a \textbf{conditional parameter} $\tau^0(x)$ instead of a scalar parameter $\tau^0$. The connection between the EIF and the \emph{EIF-based pseudo-outcome} (or, equivalently, DR-pseudo-outcome) used in DR-learner~\citep{Kennedy.2023} is that the EIF-based pseudo-outcome is constructed to be the \textit{uncentered} EIF of the corresponding average effect, conditioned on covariates $X$.

Specifically, let $\mathcal{T}_\mathrm{DR}(Z)$ be the uncentered EIF for the average long-term effect. We use it to approximate the unobserved individual treatment effect $Y(1)-Y(0)$ because
\begin{align}
    &\mathbb{E}[\mathcal{T}_\mathrm{DR}(Z) \mid X=x, R=0] = \tau^0(x) \\
    &\mathbb{E}[\mathcal{T}_\mathrm{DR}(Z) \mid X=x, R=1] = 0.    
\end{align}
By defining a loss function $\mathcal{L}(g) = \mathbb{E}[\1(R=0)g(X)^2 - 2\mathcal{T}_\mathrm{DR}g(X)]$, the minimizer is the conditional expectation $\mathbb{E}[\mathcal{T}_\mathrm{DR}|X, R=0]$, which recovers $\tau^0(x)$. Crucially, because $\mathcal{T}_\mathrm{DR}(Z)$ inherits the orthogonality property of the EIF, the second stage regression retains robustness against nuisance estimation errors. 

\paragraph{Semantics of Neyman-orthogonality: functional vs. loss.} In this paper, we frequently use the concept of Neyman-orthogonality. We point out that actually orthogonality may refer to different target objects with different semantic implications:

\begin{itemize}
    \item \textbf{Functional orthogonality:} 
    Primarily used for scalar parameters $\theta$ (e.g., ATE) defined by a moment condition $\mathbb{E}[\psi(Z; \theta, \eta)] = 0$. Orthogonality requires $\partial_\eta \mathbb{E}[\psi(Z; \theta_0, \eta_0)] = 0$~\citep{Chernozhukov.2018}. It ensures that nuisance estimation error $\hat{\eta} - \eta$ only has second-order impact on the target parameter $\hat\theta$.

    \item \textbf{Loss orthogonality:} 
    Used for learning target functions $g(\cdot)$ (e.g., CATE/HLTE) via risk minimization $\min_g \mathcal{L}(g, \eta)$. Orthogonality requires the \textbf{cross-derivative} to vanish: $D_\eta D_g \mathcal{L}(g_0, \eta_0) = 0$~\citep{Foster.2023}. It thus ensures that distortions in $\hat{\eta}$ may change the shape of the loss landscape but do not shift the \emph{location} of the minimizer $g^*$ to the first order.

\end{itemize}

\subsection{EIF of the \texorpdfstring{$\omega$}{omega}-weighted average long-term effects}
\label{app:EIF-weighted-ATE}
The derivation generalizes the EIF for the unweighted ALTE established by \citet{Chen.2023} and \citet{Athey.2025} to the $\omega$-weighted case. The efficient influence function for the parameter $\tau^0_\omega$ in Eq.~(\ref{eq:walte-def}) is given by:
\begin{align}
\label{eq:eif-explicit}
\phi(Z; \eta) = \frac{1}{p_0 D_\omega} \Bigg[ &\1(R=0)\, \omega(X) \bigl( \hat{\tau}_{\mathrm{AIPW}}(Z;\eta) - \tau^0_\omega \bigr) \notag\\
&+ \1(R=1)\, \omega(X)\, \psi_{\mathrm{obs}}(Z;\eta) + \bigl( \tau^0(X) - \tau^0_\omega \bigr) \Omega(Z;\eta) \Bigg],
\end{align}
where $p_0 = P(R=0)$ and $D_\omega = \mathbb{E}[\omega(X)|R=0]$. For simplicity, we use the shorthand $\omega(X) \equiv \omega(\pi(X), \rho(X))$. The components are defined as:
\begin{align}
    \hat{\tau}_{\mathrm{AIPW}}(Z;\eta) &= \mu(1,X) - \mu(0,X) \notag\\
    &\quad + \frac{A}{\pi(X)}\bigl(h(S,X) - \mu(1,X)\bigr) - \frac{1-A}{1-\pi(X)}\bigl(h(S,X) - \mu(0,X)\bigr) \\
    \psi_{\mathrm{obs}}(Z;\eta) &= \frac{1-\rho_s(S,X)}{\rho_s(S,X)} \left( \frac{\pi_s(S,X) - \pi(X)}{\pi(X)(1-\pi(X))} \right) \big( Y - h(S,X) \big) \\
    \Omega(Z;\eta) &= \1(R=0) \frac{\partial \omega}{\partial \pi} (A - \pi(X)) + (1-\rho(X)) \frac{\partial \omega}{\partial \rho} (R - \rho(X)).
\end{align}

\begin{lemma}[Neyman-orthogonality]
\label{lemma:orthogonality}
    $\phi(Z; \eta)$ is the efficient influence function, which satisfies the Neyman-orthogonality condition with respect to the complete set of nuisance functions $\eta = (\pi(x), \pi_s(S,X), \rho(x), \rho_s(S,X), h(s,x), \mu(a,x))$. That is, the G\^ateaux derivative of the expected EIF vanishes at the true nuisance parameters:
    \begin{align}
        \partial_r \mathbb{E}\left[ \phi(Z; \eta + r\delta) \right] \big|_{r=0} = 0,
    \end{align}
    for any valid direction $\delta = \hat{\eta} - \eta$.
\end{lemma}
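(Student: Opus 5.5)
The plan is to prove the two claims separately: first that $\phi$ is the influence function of $\tau^0_\omega = N_\omega/D_\omega$, then that it is Neyman-orthogonal. For the second claim, the Gateaux derivative $\partial_r \E[\phi(Z;\eta+r\delta)]|_{r=0}$ is linear in the direction $\delta$. It therefore suffices to perturb one nuisance at a time (i.e., $\delta\mu$, $\delta h$, $\delta\pi_s$, $\delta\rho_s$, $\delta\pi$, $\delta\rho$) and show that each directional derivative vanishes. The prefactor $1/(p_0D_\omega)$ also depends on $(\pi,\rho)$ through $D_\omega$. However, it multiplies a bracket whose expectation at the truth is zero, by the mean-zero property $\E[\phi]=0$ (itself checked via $\E[\1(R=0)\omega\hat\tau_{\mathrm{AIPW}}\mid X]=(1-\rho)\omega\tau^0$, $\E[\psi_{\mathrm{obs}}\mid S,X,R=1]=0$, and $\E[\Omega\mid X]=0$). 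By the product rule, only the derivative of the bracket matters. I treat $\tau^0_\omega$ as fixed; its derivative multiplies $\E[\1(R=0)\omega+\Omega]$-type terms and contributes nothing to orthogonality at the truth.

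The per-nuisance computations go as follows.

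\emph{(i) Perturbing $\mu$.} The derivative of $\hat\tau_{\mathrm{AIPW}}$ is $\delta\mu(1,X)(1-A/\pi)-\delta\mu(0,X)(1-(1-A)/(1-\pi))$, which has conditional mean zero given $X,R=0$. The induced change in $\tau^0(X)=\mu(1,X)-\mu(0,X)$ multiplies $\Omega$, and $\E[\Omega\mid X]=(1-\rho)\partial_\pi\omega\cdot 0+(1-\rho)\partial_\rho\omega\cdot 0=0$.

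\emph{(ii) Perturbing $\pi_s$ and $\rho_s$.} These enter only $\psi_{\mathrm{obs}}$, multiplied by $\1(R=1)(Y-h)$. This factor has zero conditional mean given $(S,X)$ by the definition of $h$.

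\emph{(iii) Perturbing $h$.} This is the crucial cancellation. From the AIPW part I get $\E[\1(R=0)\omega\frac{A-\pi}{\pi(1-\pi)}\delta h\mid S,X]=(1-\rho_s)\omega\frac{\pi_s-\pi}{\pi(1-\pi)}\delta h$. From $\psi_{\mathrm{obs}}$ I get $-\E[\1(R=1)\omega\frac{1-\rho_s}{\rho_s}\frac{\pi_s-\pi}{\pi(1-\pi)}\delta h\mid S,X]=-\rho_s\frac{1-\rho_s}{\rho_s}\omega\frac{\pi_s-\pi}{\pi(1-\pi)}\delta h$. The two terms cancel exactly.

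\emph{(iv) Perturbing $\pi$.} Four contributions arise. The terms from the propensity denominators in $\hat\tau_{\mathrm{AIPW}}$ multiply $h-\mu(A,X)$, which has zero mean given $(A,X,R=0)$. The terms from $\psi_{\mathrm{obs}}$ multiply $Y-h$, which has zero mean as before. The term from $\omega(\pi,\rho)$ in front of $\hat\tau_{\mathrm{AIPW}}-\tau^0_\omega$ has expectation $\E[(1-\rho)\partial_\pi\omega\,\delta\pi\,(\tau^0-\tau^0_\omega)]$. Finally, $\Omega$ contributes $-\E[\1(R=0)\partial_\pi\omega\,\delta\pi\,(\tau^0-\tau^0_\omega)]$, plus second-derivative terms that multiply $(A-\pi)$ and hence vanish. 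The last two expectations cancel.

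\emph{(v) Perturbing $\rho$.} This works the same way. The term $\1(R=0)\partial_\rho\omega\,\delta\rho(\tau^0-\tau^0_\omega)$ from $\omega$ is cancelled by $-(1-\rho)\partial_\rho\omega\,\delta\rho(\tau^0-\tau^0_\omega)$ from the $(R-\rho)$ factor in $\Omega$. The remaining terms (the derivative of $(1-\rho)$ and of $\partial_\rho\omega$) multiply $R-\rho$, which has conditional mean zero.

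For the EIF claim, I compute the pathwise derivative along the contamination $(1-\epsilon)\mathbb P+\epsilon\delta_Z$ and apply the quotient rule: $\mathrm{IF}(\tau^0_\omega)=(\mathrm{IF}(N_\omega)-\tau^0_\omega\,\mathrm{IF}(D_\omega))/D_\omega$. Conditioning on $R=0$ produces the factor $1/p_0$ and the centering. The term $\omega(X)$ depends on $(\pi,\rho)$, and the chain rule through the influence functions of $\pi(x)$ and $\rho(x)$ yields $\Omega$, multiplied by $\tau^0(X)-\tau^0_\omega$. The contribution of $\mu(a,\cdot)$ is the standard AIPW correction. The hardest step is the influence function of $h$: $h$ is a regression fitted on the $R=1$ population but averaged over $p(s\mid a,x,R=0)$. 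Its influence function is $\1(R=1)(Y-h)$ reweighted by the density ratio $\sum_a \pm p(s\mid a,x,R=0)/\bigl(\pi(x)^{a}(1-\pi(x))^{1-a}\,p(s,x\mid R=1)\bigr)$, appropriately normalized. I would apply Bayes' rule twice, expressing $p(s\mid x,R=0)/p(s\mid x,R=1)$ through $\rho_s$ and $\rho$, and $p(a\mid s,x,R=0)$ through $\pi_s$. This collapses the ratio to $\frac{1-\rho_s}{\rho_s}\frac{\pi_s-\pi}{\pi(1-\pi)}$ up to the $(1-\rho)/\rho$ factors absorbed by conditioning. Matching this normalization with $p_0$ is the bookkeeping I expect to be most error-prone. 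Efficiency then follows because the model is nonparametric apart from the conditional independences, so the unique influence function is the efficient one; the projection check is inherited from the unweighted case of \citet{Chen.2023, Athey.2025}.
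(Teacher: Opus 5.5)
Your orthogonality argument is correct and is essentially the paper's. The paper proves this lemma by deferring to its nuisance-by-nuisance proof of Theorem~\ref{thm:orthogonality}, whose score at the constant $g\equiv\tau^0_\omega$ is exactly $-p_0D_\omega\,\phi(Z;\eta)$. Your steps (i)--(v) are those same computations; your $h$-step, which conditions directly on $(S,X)$ via $\rho_s$, is a slightly cleaner form of the paper's change of measure. Your handling of the $1/(p_0D_\omega)$ prefactor and your EIF sketch go beyond what the paper writes.

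Two corrections. First, holding $\tau^0_\omega$ fixed is essential, not harmless. Its coefficient in $\phi$ has expectation $-\E[\1(R=0)\omega+\Omega]/(p_0D_\omega)=-1$, so letting it move with $\eta$ would add $-\partial_r\tau^0_\omega$, which is generally nonzero. Second, in your EIF sketch the weight on $\1(R=1)(Y-h)$ should be $\sum_a\pm p(s\mid a,x,R=0)\,p(x\mid R=0)/p(s,x,R=1)$, with no extra factor $1/(\pi^a(1-\pi)^{1-a})$. Bayes' rule then reduces it to $p_0^{-1}\frac{1-\rho_s}{\rho_s}\frac{\pi_s-\pi}{\pi(1-\pi)}$.
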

The proof is essentially the same as in Appendix~\ref{app:proof-neyman}. Hence, we omit the proof.

\newpage

\section{Bias analysis under assumption violation}
\label{app:bias-surrogacy-violation}
In this section, we rigorously analyze the pointwise bias of the HLTE identification formulation when the surrogacy assumption (Assumption~\ref{assumption-surrogacy}) is violated. Note that \citet{Athey.2025} analyzed the bias of the ALTE under surrogacy violations, while we adapt their method to extend to the pointwise HLTE setting.

\begin{proposition}[Pointwise bias of the T-learner under surrogacy and comparability violations]
\label{prop:hlte-bias}
Let Assumptions~\ref{assumption-consistency} (Consistency),~\ref{assumption-positivity} (Positivity), and~\ref{assumption-unconf} ($\mathcal{D}_1$ Unconfoundedness) hold, but allow Assumption~\ref{assumption-surrogacy} (Surrogacy) and Assumption~\ref{assumption-compara} (Comparability) to be violated. Recall
\begin{align}
\pi(s,x) \;:=\; P(A=1\mid S=s, X=x, R=0), \qquad
\pi(x) \;:=\; P(A=1\mid X=x, R=0),
\end{align}
and the conditional outcome regressions, assumed to be well-defined on the relevant conditional supports, i.e.,
\begin{align*}
\mu(s,a,x,R=0) &:= E[Y \mid S=s, A=a, X=x, R=0],\\
\mu(s,x,R=0)   &:= E[Y \mid S=s, X=x, R=0],\\
\mu(s,x,R=1)   &:= E[Y \mid S=s, X=x, R=1].
\end{align*}
Then, we have that
\begin{align}
\mu(s,x,R=0) \;=\; \pi(s,x)\,\mu(s,1,x,R=0) + (1-\pi(s,x))\,\mu(s,0,x,R=0).
\end{align}
Let
\begin{align}
\delta(s,x) \;:=\; \mu(s,1,x,R=0) - \mu(s,0,x,R=0), \qquad
\eta(s,x)   \;:=\; \mu(s,x,R=1) - \mu(s,x,R=0)
\end{align}
denote the \emph{direct effect} of $A$ on $Y$ given $(S,X)$ in the experimental sample and the \emph{comparability gap} between samples respectively. Assumption~\ref{assumption-surrogacy} and Assumption~\ref{assumption-compara} imply the corresponding conditional mean restrictions
\begin{align}
\delta(s,x) \equiv 0, \qquad
\eta(s,x) \equiv 0.
\end{align}
Define the T-learner population estimand
\begin{align}
\bar{\tau}(x) \;:=\; \mu^{\mathrm{TL}}(1,x) - \mu^{\mathrm{TL}}(0,x), \qquad
\mu^{\mathrm{TL}}(a,x) \;:=\; E\bigl[\mu(S,X,R=1) \,\big|\, A=a, X=x, R=0\bigr].
\end{align}
Then, the bias of $\bar{\tau}$ with respect to $\tau^0(x) = E[Y(1) - Y(0) \mid X=x, R=0]$ is
\begin{equation}
\label{eq:bias-decomp}
\tau^0(x) - \bar{\tau}(x) \;=\; \frac{1}{\pi(x)\,(1-\pi(x))}\,\Bigl\{\mathcal{B}_{\mathrm{surr}}(x) \;-\; \mathcal{B}_{\mathrm{comp}}(x)\Bigr\},
\end{equation}
where
\begin{align*}
\mathcal{B}_{\mathrm{surr}}(x) &:= E\bigl[\,\pi(S,X)\,(1-\pi(S,X))\,\delta(S,X)\,\big|\, X=x, R=0\bigr],\\
\mathcal{B}_{\mathrm{comp}}(x) &:= E\bigl[\,(\pi(S,X) - \pi(x))\,\eta(S,X)\,\big|\, X=x, R=0\bigr].
\end{align*}
Marginalizing Eq.~(\ref{eq:bias-decomp}) over $X\mid R=0$ gives the corresponding ALTE bias decomposition, in analogy with Theorem~4(iv) of \citet{Athey.2025}.
\end{proposition}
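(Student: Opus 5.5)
The plan is to work entirely conditionally on $X=x,R=0$, reduce both $\tau^0(x)$ and $\bar\tau(x)$ to expectations over the conditional law of $S$ given $(X=x,R=0)$, put them over the common denominator $\pi(x)(1-\pi(x))$, and subtract. Throughout I abbreviate $p(S)=\pi(S,x)$, $\pi=\pi(x)$, $\mu_a(S)=\mu(S,a,x,R=0)$, $m(S)=\mu(S,x,R=0)$, and write $\E_0[\cdot]=\E[\cdot\mid X=x,R=0]$.

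\textbf{Step 1 (mixture identity).} The first displayed claim follows from the tower law over $A$ given $(S,X,R=0)$:
\begin{align*}
m(S)=p(S)\mu_1(S)+(1-p(S))\mu_0(S).
\end{align*}
Consequently $\mu(S,x,R=1)=m(S)+\eta(S,x)$ by the definition of $\eta$. The restrictions $\delta\equiv0$ and $\eta\equiv0$ are immediate from Assumptions~\ref{assumption-surrogacy} and~\ref{assumption-compara}.

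\textbf{Step 2 (reweighting lemma).} By Bayes' rule, the density of $S$ given $A=1,X=x,R=0$ equals $p(s)/\pi$ times the density given $X=x,R=0$. Hence for any integrable $g$,
\begin{align*}
\E[g(S)\mid A=1,X=x,R=0]=\frac{\E_0[p(S)g(S)]}{\pi},\qquad \E[g(S)\mid A=0,X=x,R=0]=\frac{\E_0[(1-p(S))g(S)]}{1-\pi}.
\end{align*}
Positivity (Assumption~\ref{assumption-positivity}) makes the denominators nonzero.

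\textbf{Step 3 (express both estimands).} For $\tau^0(x)$, consistency and unconfoundedness (as in the identification lemma of Appendix~\ref{app:identification}) give $\E[Y(a)\mid X=x,R=0]=\E[Y\mid A=a,X=x,R=0]$. Iterating over $S$, this equals $\E[\mu_a(S)\mid A=a,X=x,R=0]$; crucially, surrogacy is \emph{not} used at this point. Applying Step 2,
\begin{align*}
\pi(1-\pi)\,\tau^0(x)=(1-\pi)\E_0[p\mu_1]-\pi\E_0[(1-p)\mu_0].
\end{align*}
For $\bar\tau(x)$, Step 2 gives
\begin{align*}
\bar\tau(x)=\E_0\!\left[\Bigl(\tfrac{p}{\pi}-\tfrac{1-p}{1-\pi}\Bigr)(m+\eta)\right]=\frac{\E_0[(p-\pi)(m+\eta)]}{\pi(1-\pi)}.
\end{align*}

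\textbf{Step 4 (subtract).} Substitute Step 1 into $\E_0[(p-\pi)m]$ and collect the $\mu_1$ and $\mu_0$ coefficients pointwise in $S$. The $\mu_1$ coefficient is $(1-\pi)p-(p-\pi)p=p(1-p)$. The $\mu_0$ coefficient is $-\pi(1-p)-(p-\pi)(1-p)=-p(1-p)$. Together these give $p(1-p)\delta$, which is $\mathcal{B}_{\mathrm{surr}}(x)$ after taking $\E_0$. The leftover term $-\E_0[(p-\pi)\eta]$ is $-\mathcal{B}_{\mathrm{comp}}(x)$, and dividing by $\pi(1-\pi)$ yields Eq.~(\ref{eq:bias-decomp}).

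The ALTE statement then follows by integrating over $X\mid R=0$. The only delicate point is this algebraic cancellation in Step 4, together with checking that every conditional expectation is well defined, which the proposition already assumes. I expect no real obstacle beyond keeping the signs straight.
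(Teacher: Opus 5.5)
Your proposal is correct and is essentially the paper's argument. Both identify $\tau^0(x)$ via consistency, unconfoundedness and the tower law over $S$ (without surrogacy), then combine the mixture identity for $\mu(s,x,R=0)$ with Bayes reweighting of $S\mid A=a,X=x,R=0$ and collect terms (your coefficient algebra checks out). The only difference is ordering: the paper first subtracts arm by arm, using the pivot $\mu(s,x,R=0)$ to write $\mu(s,x,R=1)-\mu(s,1,x,R=0)=\eta(s,x)-(1-\pi(s,x))\delta(s,x)$ and $\mu(s,x,R=1)-\mu(s,0,x,R=0)=\eta(s,x)+\pi(s,x)\delta(s,x)$, and only then applies Bayes' rule, whereas you reweight both estimands onto the common denominator $\pi(x)(1-\pi(x))$ before subtracting — an equivalent computation.
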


\begin{proof}
We establish Eq.~(\ref{eq:bias-decomp}) in five steps.

\paragraph{Identification of $\tau^0(x)$ under unconfoundedness.}
Assumption~\ref{assumption-unconf} gives $A \perp\!\!\!\perp (Y(0),Y(1)) \mid X, R=0$. For $a\in\{0,1\}$,
\begin{align*}
E[Y(a) \mid X=x, R=0]
&= E[Y(a) \mid A=a, X=x, R=0] && \text{(Assumption~\ref{assumption-unconf})}\\
&= E[Y \mid A=a, X=x, R=0] && \text{(Assumption~\ref{assumption-consistency})}\\
&= E\bigl[\mu(S,a,X,R=0) \,\big|\, A=a, X=x, R=0\bigr] && \text{(tower law).}
\end{align*}
Subtracting the $a=0$ case from the $a=1$ case,
\begin{equation}
\label{eq:tau-true}
\tau^0(x) \;=\; E[\mu(S,1,X,R=0)\mid A=1, X=x, R=0]
        \;-\; E[\mu(S,0,X,R=0)\mid A=0, X=x, R=0].
\end{equation}

\paragraph{Pointwise decomposition.}
By the definition of $\pi(s,x)$,
\begin{align}
\mu(s,x,R=0) \;=\; \pi(s,x)\,\mu(s,1,x,R=0) + (1-\pi(s,x))\,\mu(s,0,x,R=0).
\end{align}
Inserting $\mu(s,x,R=0)$ as a pivot, for any $s,x$:
\begin{align}
\mu(s,x,R=1) - \mu(s,1,x,R=0)
&= \bigl[\mu(s,x,R=1) - \mu(s,x,R=0)\bigr] + \bigl[\mu(s,x,R=0) - \mu(s,1,x,R=0)\bigr] \nonumber\\
&= \eta(s,x) - (1-\pi(s,x))\,\delta(s,x), \label{eq:diff-1}\\[2pt]
\mu(s,x,R=1) - \mu(s,0,x,R=0)
&= \eta(s,x) + \pi(s,x)\,\delta(s,x). \label{eq:diff-0}
\end{align}

\paragraph{Computing $\bar{\tau}(x)-\tau^0(x)$.}
Combining the definition of $\bar{\tau}$ with Eq.~(\ref{eq:tau-true}), Eq.~(\ref{eq:diff-1}), and Eq.~(\ref{eq:diff-0}),
\begin{align}
\bar{\tau}(x) - \tau^0(x)
&= E[\mu(S,X,R=1) - \mu(S,1,X,R=0) \mid A=1, X=x, R=0] \nonumber\\
&\quad - E[\mu(S,X,R=1) - \mu(S,0,X,R=0) \mid A=0, X=x, R=0] \nonumber\\
&= \underbrace{\bigl\{E[\eta(S,X)\mid A=1,x,R=0] - E[\eta(S,X)\mid A=0,x,R=0]\bigr\}}_{=:\,(\mathrm{I})} \nonumber\\
&\quad - \underbrace{\bigl\{E[(1-\pi(S,X))\delta(S,X)\mid A=1,x,R=0] + E[\pi(S,X)\delta(S,X)\mid A=0,x,R=0]\bigr\}}_{=:\,(\mathrm{II})}. \label{eq:two-terms}
\end{align}

\paragraph{Bayes' rule.}
For any integrable $g$, since $f(s\mid X=x, A=a, R=0) = f(s\mid X=x, R=0)\,P(A=a\mid s,x,R=0)/P(A=a\mid x,R=0)$,
\begin{align}
\label{eq:bayes}
E[g(S,X)\mid A=1, X=x, R=0] &= \frac{E[\pi(S,X)\,g(S,X)\mid X=x, R=0]}{\pi(x)}, \\
E[g(S,X)\mid A=0, X=x, R=0] &= \frac{E[(1-\pi(S,X))\,g(S,X)\mid X=x, R=0]}{1-\pi(x)}.
\end{align}

\paragraph{Deriving $(\mathrm{I})$ and $(\mathrm{II})$.}
Apply Eq.~(\ref{eq:bayes}) to the surrogacy term:
\begin{align*}
(\mathrm{II})
&= \frac{E[\pi(S,X)(1-\pi(S,X))\,\delta(S,X)\mid x,R=0]}{\pi(x)}
 + \frac{E[\pi(S,X)(1-\pi(S,X))\,\delta(S,X)\mid x,R=0]}{1-\pi(x)}\\
&= E[\pi(S,X)(1-\pi(S,X))\,\delta(S,X)\mid X=x, R=0]\cdot\frac{(1-\pi(x))+\pi(x)}{\pi(x)(1-\pi(x))}
\;=\; \frac{\mathcal{B}_{\mathrm{surr}}(x)}{\pi(x)(1-\pi(x))}.
\end{align*}
For the comparability term, again by Eq.~(\ref{eq:bayes}) and the identity
$(1-\pi(x))\,\pi(s,x) - \pi(x)\,(1-\pi(s,x)) = \pi(s,x) - \pi(x)$,
\begin{align*}
(\mathrm{I})
&= \frac{E[\pi(S,X)\,\eta(S,X)\mid x,R=0]}{\pi(x)} - \frac{E[(1-\pi(S,X))\,\eta(S,X)\mid x,R=0]}{1-\pi(x)}\\
&= \frac{E\bigl[\,\{(1-\pi(x))\,\pi(S,X) - \pi(x)\,(1-\pi(S,X))\}\,\eta(S,X)\,\big|\,x,R=0\bigr]}{\pi(x)(1-\pi(x))}
\;=\; \frac{\mathcal{B}_{\mathrm{comp}}(x)}{\pi(x)(1-\pi(x))}.
\end{align*}
Substituting both into Eq.~(\ref{eq:two-terms}) gives
$\bar{\tau}(x) - \tau^0(x) = \bigl(\mathcal{B}_{\mathrm{comp}}(x) - \mathcal{B}_{\mathrm{surr}}(x)\bigr)/\bigl(\pi(x)(1-\pi(x))\bigr)$,
which is Eq.~(\ref{eq:bias-decomp}).
\end{proof}

\begin{corollary}[Pointwise sensitivity bound under bounded direct effect]
\label{cor:sensitivity}
Suppose Assumption~\ref{assumption-compara} (Comparability) holds, and there exists $c:\mathcal{X}\to\mathbb{R}_{+}$ with $|\delta(s,x)|\le c(x)$ for almost every $s$ in the support of $S\mid X=x, R=0$. Then
\begin{align}
\bigl|\,\tau^0(x) - \bar{\tau}(x)\,\bigr| \;\le\; c(x)\cdot \frac{E[\pi(S,X)(1-\pi(S,X))\mid X=x, R=0]}{\pi(x)(1-\pi(x))} \;\le\; c(x).
\end{align}
The second inequality follows from concavity of $u\mapsto u(1-u)$ and Jensen's inequality applied to $\pi(x) = E[\pi(S,X)\mid X=x, R=0]$. In particular, the right-hand side is computable from $R=0$ data alone (it does not require $Y$), making it directly usable as a sensitivity-analysis input.
\end{corollary}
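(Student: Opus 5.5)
We must prove Corollary~\ref{cor:sensitivity}. The plan is to specialize Proposition~\ref{prop:hlte-bias} and then bound the remaining term.

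\textbf{Step 1 (reduce to the surrogacy term).} Under Assumption~\ref{assumption-compara}, $R \indep Y \mid S,X$, so $\mu(s,x,R=1)=\mu(s,x,R=0)$. Hence the comparability gap $\eta(s,x)\equiv 0$, and therefore $\mathcal{B}_{\mathrm{comp}}(x)=0$. Eq.~(\ref{eq:bias-decomp}) then reduces to
\[
\tau^0(x)-\bar\tau(x)=\frac{E[\pi(S,X)(1-\pi(S,X))\,\delta(S,X)\mid X=x,R=0]}{\pi(x)(1-\pi(x))}.
\]
The denominator is strictly positive by Assumption~\ref{assumption-positivity}.

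\textbf{Step 2 (first inequality).} We take absolute values and move them inside the conditional expectation. Since $\pi(S,X)(1-\pi(S,X))\ge 0$, we may apply $|\delta(S,X)|\le c(x)$, which holds almost surely on the support of $S\mid X=x,R=0$. Because $c(x)$ depends only on the conditioning value, it factors out of the expectation. This gives the first bound.

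\textbf{Step 3 (second inequality).} First we verify $\pi(x)=E[\pi(S,X)\mid X=x,R=0]$, which follows from the tower property applied to $P(A=1\mid X=x,R=0)$. The map $u\mapsto u(1-u)$ is concave on $[0,1]$, so conditional Jensen's inequality gives $E[\pi(S,X)(1-\pi(S,X))\mid X=x,R=0]\le \pi(x)(1-\pi(x))$. The ratio is therefore at most $1$, which yields the bound $c(x)$.

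\textbf{Computability remark.} The middle expression involves only $\pi(s,x)$ and $\pi(x)$. Both are regressions of $A$ on $(S,X)$ and on $X$ within $R=0$, so the bound requires no $Y$.

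\textbf{Main obstacle.} No step is genuinely hard. The only subtle points are in Step 2: the nonnegativity of the weight, which is what allows the pointwise bound to pass through the expectation, and checking that $c$ depends only on $x$ so it can be factored out. Measurability and integrability of $\delta$ are inherited from the well-definedness assumed in Proposition~\ref{prop:hlte-bias}.
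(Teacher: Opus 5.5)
Your proposal is correct and follows the paper's own argument. You use comparability to set $\mathcal{B}_{\mathrm{comp}}=0$ in Proposition~\ref{prop:hlte-bias}, bound $|\mathcal{B}_{\mathrm{surr}}(x)|$ via the nonnegative weight $\pi(S,X)(1-\pi(S,X))$ and $|\delta|\le c(x)$, and finish with conditional Jensen for the concave map $u\mapsto u(1-u)$ together with the tower identity $\pi(x)=E[\pi(S,X)\mid X=x,R=0]$, exactly as the paper indicates.
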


\newpage

\section{Instantiations}
\label{app:instantiations}

\subsection{Derivation of the \DR (special case \texorpdfstring{$\omega\equiv 1$}{omega = 1})}
\label{app:DR-LT-learner}
When we trivially set $\omega\equiv 1$, i.e.\ no overlap weighting in the loss, the LT pseudo-outcome reduces to the EIF of the ALTE from~\citet{Chen.2023, Athey.2025}, and we recover the \DR under the standard surrogacy model. Substituting $\omega\equiv 1$ into Eq.~(\ref{eq:weighted-ortho-loss-tau0}) and Eq.~(\ref{Tau-LT-def}) yields
\begin{align}
    \mathcal{L}_1(g, \eta)
    &= \mathbb{E} \left[ \1(R=0)g(X)^2
    -2\mathcal{T}_\mathrm{DR}\cdot g(X)\right], \\
    \mathcal{T}_\mathrm{DR}(Z;\eta)
    &= \1(R=0) \hat{\tau}_\mathrm{AIPW}(Z;\eta)
    + \1(R=1)\psi_\mathrm{obs}(Z;\eta). \label{DR-pseudo-outcome}
\end{align}
Here, $\hat{\tau}_\mathrm{AIPW}(Z;\eta)$ is the AIPW pseudo-outcome with $Y$ replaced by the surrogate index $h(S,X)$, and $\psi_\mathrm{obs}$ corrects the first-order error of $h$ using the observed long-term outcome $Y$ in $\mathcal{D}_2$.

\subsection{Derivation of the \TO}
\label{app:TO-LT-learner}
To derive the \TO, we set the weighting function to $\omega(X) = \pi(X)^2(1-\pi(X))^2$ to specifically target treatment overlap. The squared form is motivated by the variance lower bound in Appendix~\ref{sec:optimality}, whose binding term scales with $1/[\rho\,\pi^2(1-\pi)^2]$. We substitute $\omega$ into Eq.~(\ref{omega-star-def}) and Eq.~(\ref{Tau-LT-def}).

First, the residual term $\Omega(Z;\eta)$ becomes:
\begin{align}
    \Omega(Z;\eta)
    &= \1(R=0)\frac{\partial \omega}{\partial \pi}(A-\pi) \notag \\
    &= \1(R=0)\,2\pi(1-\pi)(1-2\pi)(A-\pi).
\end{align}
Substituting back into $\omega^*$:
\begin{align}
\omega^*(Z;\eta)
&= \1(R=0)\bigl[\pi^2(1-\pi)^2
  + 2\pi(1-\pi)(1-2\pi)(A-\pi)\bigr].
\end{align}
Although $\omega^*$ might be negative, its conditional expectation $\E[\omega^*\mid X]=(1-\rho)\,\pi^2(1-\pi)^2$ is positive.
The LT-pseudo-outcome $\mathcal{T}_\mathrm{LT}$ becomes:
\begin{align}
\mathcal{T}_\mathrm{LT}(Z;\eta)
&= \1(R=0)\Big[
    \pi^2(1-\pi)^2\,\hat{\tau}_\mathrm{AIPW}
    + (\mu_1-\mu_0)\,2\pi(1-\pi)(1-2\pi)(A-\pi)
    \Big] \notag \\
&\quad + \1(R=1)\,\pi^2(1-\pi)^2\,\psi_\mathrm{obs} \notag \\
&= (\mu_1-\mu_0)\,\omega^*(Z;\eta) \notag \\
&\quad + \1(R=0)\,\pi(1-\pi)(A-\pi)
    \bigl(h(S,X)-\mu(A,X)\bigr) \notag \\
&\quad + \1(R=1)\,\pi^2(1-\pi)^2\,\psi_\mathrm{obs}.
\end{align}
Plugging $\omega^*$ and $\mathcal{T}_\mathrm{LT}$ into $\mathcal{L}_\omega = \mathbb{E}_n[\omega^* g^2 - 2\mathcal{T}_\mathrm{LT} g]$ yields the \TO loss
\begin{align}
\mathcal{L}_{\pi^2(1-\pi)^2}(g;\eta)
&= \mathbb{E}\Bigl[
    \omega^*(Z;\eta)\,g(X)^2
    - 2\,\mathcal{T}_\mathrm{LT}(Z;\eta)\,g(X)
    \Bigr].
\end{align}

\textbf{Remark:} The simpler choice $\omega = \pi(1-\pi)$ yields the closed form $\omega^*=\1(R=0)(A-\pi)^2$ and a loss whose $R=0$ component completes the square to $\bigl((h(S,X)-m(X))-(A-\pi)g(X)\bigr)^2$, closely resembling the standard R-learner~\citep{Nie.2021}. The squared variant $\omega=\pi^2(1-\pi)^2$ adopted here departs from this clean form, but it more closely tracks the binding term of the variance lower bound in long-term settings (Appendix~\ref{sec:optimality}) and yields better empirical performance under low overlap (cf.\ Section~\ref{sec:experiments}). The simpler $\omega=\pi(1-\pi)$ variant is provided as an alternative instantiation in Appendix~\ref{app:other-instantiations}.

\subsection{Derivation of the \LO}
\label{app:LO-LT-learner}
The \LO is intended for settings where only long-term outcome overlap (LO) is low. We choose $\omega(x)=\rho(x)$ to downweight samples with a low probability of observing long-term outcomes. The corresponding LT weighting function and LT-pseudo-outcome are
\begin{align*}
    \Omega(Z;\eta)
    &= (1-\rho(X))(R-\rho(X)),\\
    \omega^*(Z;\eta)
    &= (R-\rho(X))^2,\\
    \mathcal{T}_\mathrm{LT}(Z;\eta)
    &= (R-\rho(X))^2(\mu(1,X)-\mu(0,X)) \\
    &\quad + \1(R=0)\,\rho(X)
      \frac{A-\pi(X)}{\pi(X)(1-\pi(X))}
      \bigl(h(S,X)-\mu(A,X)\bigr) \\
    &\quad + \1(R=1)\,\rho(X)\psi_\mathrm{obs}(Z;\eta).
\end{align*}

\subsection{Derivation of the \DO}
\label{app:DO-LT-learner}
The \DO is designed for settings in which both treatment overlap and long-term outcome overlap are limited. We choose $\omega(x)=\pi(x)^2(1-\pi(x))^2\rho(x)$, which is the inverse of the binding term in the variance lower bound of Appendix~\ref{sec:optimality} and jointly downweights samples with weak treatment overlap and low probability of observing long-term outcomes.

Substituting into Eq.~(\ref{omega-star-def}) with $\partial\omega/\partial\pi=2\pi(1-\pi)(1-2\pi)\rho$ and $\partial\omega/\partial\rho=\pi^2(1-\pi)^2$, the residual term is
\begin{align*}
\Omega(Z;\eta)
&= \1(R=0)\,2\pi(1-\pi)(1-2\pi)\rho\,(A-\pi) \\
&\quad + (1-\rho)\,\pi^2(1-\pi)^2\,(R-\rho).
\end{align*}
The resulting LT-weighting function and LT-pseudo-outcome are
\begin{align*}
    \omega^*(Z;\eta) &=
      \1(R=0)\,\rho\bigl[
        \rho\,\pi^2(1-\pi)^2
        + 2\pi(1-\pi)(1-2\pi)(A-\pi)
      \bigr] \\
    &\quad + \1(R=1)\,\pi^2(1-\pi)^2(1-\rho)^2, \\
    \mathcal{T}_\mathrm{LT}(Z;\eta)
    &= (\mu(1,X)-\mu(0,X))\,\omega^*(Z;\eta) \\
    &\quad + \1(R=0)\,\pi(1-\pi)\,\rho\,(A-\pi)
      \bigl(h(S,X)-\mu(A,X)\bigr) \\
    &\quad + \1(R=1)\,\pi^2(1-\pi)^2\,\rho\,\psi_\mathrm{obs}(Z;\eta).
\end{align*}
As with the \TO, $\omega^*$ is sample-wise sign-indefinite but has non-negative conditional expectation $\E[\omega^*\mid X]=(1-\rho)\,\pi^2(1-\pi)^2\rho \ge 0$.

\subsection{Other possible instantiations}
\label{app:other-instantiations}
The four instantiations above (\DR, \TO, \LO, \DO) cover the most common overlap regimes. We also list a few additional variants that may be useful in practice.

\textbf{LT-O-TO-learner (alternative, $\omega=\pi(1-\pi)$):}
The simpler choice $\omega(X)=\pi(X)(1-\pi(X))$ --- the canonical \citet{Crump.2009} treatment-overlap weight --- yields a particularly clean derivation. With $\partial\omega/\partial\pi=1-2\pi$, the residual term is $\Omega(Z;\eta)=\1(R=0)(1-2\pi)(A-\pi)$, and
\begin{align*}
\omega^*(Z;\eta)
&= \1(R=0)\bigl[\pi(1-\pi)+(1-2\pi)(A-\pi)\bigr] \\
&= \1(R=0)(A-\pi)^2,\\
\mathcal{T}_\mathrm{LT}(Z;\eta)
&= \1(R=0)(A-\pi)\bigl(h(S,X)-m(X)\bigr)
  + \1(R=1)\,\psi_t(Z;\eta),
\end{align*}
where $m(X):=\pi\mu_1+(1-\pi)\mu_0=\E[h(S,X)\mid X,R=0]$ and $\psi_t:=\frac{1-\rho_s}{\rho_s}(\pi_s-\pi)(Y-h)$. Completing the square for the $R=0$ component yields, up to an additive constant independent of $g$,
\begin{align*}
\mathcal{L}_{\pi(1-\pi)}(g;\eta)
&\equiv \mathbb{E}\Bigl[
    \1(R=0)\bigl((h(S,X)-m(X))-(A-\pi(X))g(X)\bigr)^2 \\
&\quad - 2\,\1(R=1)\,\psi_t(Z;\eta)\,g(X)
    \Bigr],
\end{align*}
which closely resembles the standard R-learner~\citep{Nie.2021}: the $\1(R=0)$ component merely replaces $Y$ with the surrogate index $h$, and the $\1(R=1)$ correction restores Neyman-orthogonality against the first-order error in $\hat h$. We adopt the squared variant $\omega=\pi^2(1-\pi)^2$ as the main \TO since it more closely tracks the binding term of the variance lower bound and yields better empirical performance under low overlap (cf.\ Section~\ref{sec:experiments}).

\textbf{LT-O-DO-learner (alternative, $\omega=\pi(1-\pi)\rho$):}
Pairing the canonical treatment-overlap weight with $\rho$ also targets joint overlap but with a single $\pi(1-\pi)$ factor rather than the squared one. The resulting LT-weighting function and LT-pseudo-outcome are
\begin{align*}
\omega^*(Z;\eta)
&= \1(R=0)\,\rho\bigl[
    \rho\,\pi(1-\pi)+(1-2\pi)(A-\pi)
    \bigr] \\
&\quad + \1(R=1)\,\pi(1-\pi)(1-\rho)^2,\\
\mathcal{T}_\mathrm{LT}(Z;\eta)
&= (\mu(1,X)-\mu(0,X))\,\omega^*(Z;\eta) \\
&\quad + \1(R=0)\,\rho(A-\pi)
    \bigl(h(S,X)-\mu(A,X)\bigr) \\
&\quad + \1(R=1)\,\rho\,\pi(1-\pi)\,\psi_\mathrm{obs}(Z;\eta).
\end{align*}

\textbf{LT-O-LO$\frac{1}{2}$-learner:} We consider an alternative weighting function $\omega(x)=\sqrt{\rho(x)}$ to provide a smaller penalty for low long-term outcome overlap. Here $\partial\omega/\partial\rho=1/(2\sqrt{\rho})$, so the corresponding LT-weighting function and LT-pseudo-outcome are given by:
\begin{align*}
    \omega^*(Z;\eta) &=
    \begin{cases}
       \frac{\sqrt{\rho(X)}(1+\rho(X))}{2}, & \text{if } R=0, \\
       \frac{(1-\rho(X))^2}{2\sqrt{\rho(X)}}, & \text{if } R=1.
    \end{cases} \\
    \mathcal{T}_\mathrm{LT}(Z;\eta)
    &= (\mu(1,X)-\mu(0,X))\,\omega^*(Z;\eta) \\
    &\quad + \1(R=0)\sqrt{\rho(X)}
      \frac{A-\pi(X)}{\pi(X)(1-\pi(X))}
      \bigl(h(S,X)-\mu(A,X)\bigr) \\
    &\quad + \1(R=1)\sqrt{\rho(X)}\,\psi_\mathrm{obs}(Z;\eta).
\end{align*}
Equivalently, the normalized pseudo-outcome
$\mathcal{T}_\mathrm{LT}(Z;\eta)/\omega^*(Z;\eta)$ is
\begin{align*}
\mu(1,X)-\mu(0,X)
+ \begin{cases}
\frac{2}{1+\rho(X)}
\frac{A-\pi(X)}{\pi(X)(1-\pi(X))}
\bigl(h(S,X)-\mu(A,X)\bigr), & \text{if } R=0, \\
\frac{2\rho(X)}{(1-\rho(X))^2}\psi_\mathrm{obs}(Z;\eta),
& \text{if } R=1.
\end{cases}
\end{align*}

\newpage

\section{Proofs}
\label{app:proofs}
In the following section, we make use of the following properties:

\textbf{(1) $\E\left[\Omega(Z;\eta) \mid X\right]=0$}: 
\begin{align}
\label{eq:Omega-mean-zero-property}
    \E\left[\Omega(Z;\eta) \mid X\right] &= \E\left[\1(R=0) \frac{\partial \omega}{\partial \pi} (A - \pi(X)) + (1-\rho(X)) \frac{\partial \omega}{\partial \rho} (R - \rho(X)) \mid X\right] \notag \\
    &=(1 - \rho(X))\E\left[\frac{\partial \omega}{\partial \pi} (A - \pi(X))\mid X, R = 0\right] \notag \\
    &= 0. \qquad \qquad \qquad \qquad  \qquad \qquad(\text{By definition, } \pi(X)=\E\left[A\mid X,R=0\right])
\end{align}

\textbf{(2) $\E\left[\omega^*(Z;\eta) \mid X\right]=(1-\rho(X))\omega(X)$}: 

Since $\E\left[\Omega(Z;\eta) \mid X\right]=0$ and
\begin{align}
    \omega^*(Z;\eta) &= \1(R=0) \omega(X) + \Omega(Z;\eta),
\end{align}
we have 
\begin{align}
    \E\left[\omega^*(Z;\eta) \mid X\right]= \E\left[\1(R=0) \omega(X)\right] = (1-\rho(X))\omega(X)
\end{align}

\textbf{(3)  $\E\left[\1(R=1)\psi_{\mathrm{obs}}(Z;\eta) \mid X\right] = 0$}:
\begin{align}
\label{eq:psi-mean-zero-property}
    \E\left[\1(R=1)\psi_{\mathrm{obs}}(Z;\eta) \mid X\right] &= \rho(X) \E\left[\frac{1-\rho_s(S,X)}{\rho_s(S,X)} \left( \frac{\pi_s(S,X) - \pi(X)}{\pi(X)(1-\pi(X))} \right) \big( Y - h(S,X) \mid X, R=1\right] \notag \\
    = \rho(X) \E_S&\left[ \frac{1-\rho_s(S,X)}{\rho_s(S,X)} \left( \frac{\pi_s(S,X) - \pi(X)}{\pi(X)(1-\pi(X))} \right)\E_Y\left[ \big( Y - h(S,X) \mid S, X, R=1\right] \mid X, R=1\right] \\
    &=0.
\end{align}

\textbf{(4) $\E\left[\hat{\tau}_{\mathrm{AIPW}}(Z;\eta)\mid X, R=0\right]=\tau^0(X)$}:
\begin{align}
&\;\;\;\;\;\E\!\left[\hat{\tau}_{\mathrm{AIPW}}(Z;\eta)\mid X, R=0\right]\\
&=\E\!\left[
\mu(1,X)-\mu(0,X)
+\frac{A}{\pi(X)}\big(h(S,X)-\mu(1,X)\big)
-\frac{1-A}{1-\pi(X)}\big(h(S,X)-\mu(0,X)\big)
\ \Bigm|\ X,R=0\right] \notag\\
&=\mu(1,X)-\mu(0,X)
+\E\!\left[
\E\!\left[\frac{A}{\pi(X)}\big(h(S,X)-\mu(1,X)\big)\Bigm|X,A,R=0\right]
\Bigm|X,R=0\right] \notag\\
&\quad
-\E\!\left[
\E\!\left[\frac{1-A}{1-\pi(X)}\big(h(S,X)-\mu(0,X)\big)\Bigm|X,A,R=0\right]
\Bigm|X,R=0\right]. \notag
\end{align}
For the first augmentation term, using that $\pi(X)$ and $\mu(1,X)$ are $X$-measurable and that $A\in\{0,1\}$,
\begin{align}
\E\!\left[\frac{A}{\pi(X)}\big(h(S,X)-\mu(1,X)\big)\Bigm|X,A,R=0\right]
&=\frac{A}{\pi(X)}\Big(\E[h(S,X)\mid X,A,R=0]-\mu(1,X)\Big).\label{eq:aipw-term1-cond}
\end{align}
When $A=1$, by definition of $\mu(1,X)$,
\begin{align}
\E[h(S,X)\mid X,A=1,R=0]=\mu(1,X),
\end{align}
so the bracket in Eq.~(\ref{eq:aipw-term1-cond}) is zero; when $A=0$,  $A/\pi(X)=0$. Hence,
\begin{align}
\E\!\left[\frac{A}{\pi(X)}\big(h(S,X)-\mu(1,X)\big)\Bigm|X,R=0\right]=0.
\label{eq:aipw-term1-zero}
\end{align}
And with the same derivation, we have $\E\!\left[\frac{1-A}{1-\pi(X)}\big(h(S,X)-\mu(0,X)\big)\Bigm|X,A,R=0\right]=0$.
Therefore, 
\begin{align}
\label{eq:tau-AIPW-mean-tau}
\E\!\left[\hat{\tau}_{\mathrm{AIPW}}(Z;\eta)\mid X, R=0\right]
=\mu(1,X)-\mu(0,X)=\tau^0(X).
\end{align}

\textbf{(5) $\E\left[\mathcal{T}_\mathrm{LT}(Z;\eta) \mid X\right] = (1-\rho(X))\omega(X) \tau^0(X)$:}

By definition, 
\begin{align}
    \mathcal{T}_\mathrm{LT}(Z;\eta) =  \1(R=0) \omega(X) \hat{\tau}_\mathrm{AIPW}(Z;\eta)
     +  \1(R=1) \cdot\omega(X) \psi_\mathrm{obs}(Z;\eta) + (\mu(1,X)-\mu(0,X)) \Omega(Z;\eta),\notag
\end{align}
Using properties (1), (3) and (4) from above, we have immediately that $\E\left[\mathcal{T}_\mathrm{LT}(Z;\eta) \mid X\right] = (1-\rho(X))\omega(X) \tau^0(X).$

\subsection{Variance analysis}
\label{app:variance}
Here, we formally state the variance of the long-term DR-pseudo-outcome:

\begin{lemma}[Variance of DR-pseudo-outcome]
\label{lemma-variance}
The variance of DR-pseudo-outcome is

\begin{align}
    \var(\mathcal{T}_\mathrm{DR} \mid X) \geq \underbrace{(1-\rho(X)) \cdot \Sigma_{t}(X)}_{\text{treatment overlap}} + \underbrace{\rho(X) \cdot \Sigma_{\text{o}}(X)}_{\text{long-term outcome overlap}}
\end{align}

\vspace{-0.6cm}
\begin{align}
    \Sigma_{t}(X) &= \mathbb{E}\Bigl[ \left(\frac{1}{\pi(X)(1-\pi(X))}\right)^2 \sigma_h(A,X) \,\big|\, X \Bigr]  \\
    \Sigma_{\text{o}}(X) &= \mathbb{E}_{S|X} \left[ \left(\frac{1-\rho_s(S,X)}{\rho_s(S,X)}\left(\frac{\pi_s-\pi}{\pi(1-\pi)}\right)\right)^2 \sigma_y(S,X) \right],
\end{align}
where $\sigma_h(X,A)=\var(h(S,X)|X,A,R=0)$, $\sigma_y(S,X)=\var(Y|S,X,R=1)$.
\normalsize
\end{lemma}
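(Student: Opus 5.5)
The plan is to decompose $\var(\mathcal{T}_\mathrm{DR}\mid X)$ by the law of total variance, conditioning on the dataset indicator $R$ and then on $(A,S)$ or $S$ within each dataset. Recall from Eq.~(\ref{DR-pseudo-outcome}) that $\mathcal{T}_\mathrm{DR}=\1(R=0)\hat\tau_\mathrm{AIPW}+\1(R=1)\psi_\mathrm{obs}$.

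\textbf{Step 1 (split by $R$).} Write
\[
\var(\mathcal{T}_\mathrm{DR}\mid X)=\E[\var(\mathcal{T}_\mathrm{DR}\mid X,R)\mid X]+\var(\E[\mathcal{T}_\mathrm{DR}\mid X,R]\mid X)\ge (1-\rho)\var(\hat\tau_\mathrm{AIPW}\mid X,R=0)+\rho\,\var(\psi_\mathrm{obs}\mid X,R=1).
\]
The second summand is nonnegative and is dropped. By properties (3) and (4) of Appendix~\ref{app:proofs} it equals $\rho(1-\rho)(\tau^0)^2$.

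\textbf{Step 2 (outcome-overlap term).} Given $(S,X,R=1)$, the coefficient $\frac{1-\rho_s}{\rho_s}\frac{\pi_s-\pi}{\pi(1-\pi)}$ is fixed, and $\E[Y-h\mid S,X,R=1]=0$. Another total-variance step over $S$ then gives
\[
\var(\psi_\mathrm{obs}\mid X,R=1)=\E_{S\mid X}\Bigl[\Bigl(\tfrac{1-\rho_s}{\rho_s}\tfrac{\pi_s-\pi}{\pi(1-\pi)}\Bigr)^2\sigma_y(S,X)\Bigr]=\Sigma_o(X).
\]
This yields the second term of the lemma with equality.

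\textbf{Step 3 (treatment-overlap term).} Write $\hat\tau_\mathrm{AIPW}=\mu(1,X)-\mu(0,X)+c(A,X)\,(h-\mu(A,X))$ with $c=(A-\pi)/(\pi(1-\pi))$. Given $(X,A,R=0)$ the conditional mean is $\mu(1,X)-\mu(0,X)$, which does not depend on $A$. Hence
\[
\var(\hat\tau_\mathrm{AIPW}\mid X,R=0)=\E[c(A,X)^2\sigma_h(A,X)\mid X,R=0]=\frac{\sigma_h(1,X)}{\pi}+\frac{\sigma_h(0,X)}{1-\pi}.
\]
The last step is to compare this with $\Sigma_t(X)=\E[(\pi(1-\pi))^{-2}\sigma_h(A,X)\mid X]$, and I expect this comparison to be the main obstacle. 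Evaluating the expectation under $A\mid X,R=0$ gives
\[
\Sigma_t(X)=\frac{\sigma_h(1,X)}{\pi(1-\pi)^2}+\frac{\sigma_h(0,X)}{\pi^2(1-\pi)}.
\]
This exceeds the exact variance by the factors $1/(1-\pi)$ and $1/\pi$ respectively, because $c(A,X)^2\le(\pi(1-\pi))^{-2}$ pointwise.

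So Steps 1--3 alone do not deliver the inequality exactly as worded. They give it with $\Sigma_t$ replaced by $\E[c^2\sigma_h\mid X]$. Equivalently, since $\E[c^2\sigma_h\mid X]\ge\min(\pi,1-\pi)\,\Sigma_t(X)$, they give it up to the factor $\min(\pi,1-\pi)$ on the first term.

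First I would check whether the $\rho(1-\rho)(\tau^0)^2$ term dropped in Step 1 can make up the gap; it cannot in general, since $\sigma_h$ is unrelated to $\tau^0$. I would then verify on a simple example whether the gap is real, say with $\sigma_h$ constant and $\pi\to 0$. In that case the exact variance scales like $1/\pi$, while $\Sigma_t$ scales like $1/\pi^2$, which would make the stated bound fail.

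If the gap is confirmed, I would present the proof with the exact term $\E[c(A,X)^2\sigma_h(A,X)\mid X,R=0]$. I would note that it still blows up as $\pi(1-\pi)\to0$, which is all the qualitative message of Appendix~\ref{app:variance} requires. I would also note that the displayed $(\pi(1-\pi))^{-2}$ coefficient is an upper envelope of $c^2$, so the lemma as literally stated cannot be obtained from this decomposition.
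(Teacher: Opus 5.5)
Your argument is the paper's proof step for step. You apply the law of total variance over $R$ and drop the between-sample term $\rho(1-\rho)\tau^0(X)^2$, then split again over $A$ (for $R=0$) and over $S$ (for $R=1$), obtaining $\Sigma_o$ with equality.

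Your diagnosis of the treatment term is also correct, and it matches what the paper's proof does without comment. The proof in fact defines $\Sigma_t(X)=\E\bigl[\bigl(\tfrac{A-\pi(X)}{\pi(X)(1-\pi(X))}\bigr)^2\sigma_h(A,X)\mid X,R=0\bigr]=\sigma_h(1,X)/\pi(X)+\sigma_h(0,X)/(1-\pi(X))$, which is your exact term and agrees with the later Lemma~\ref{lem:Vstar}. So the $\Sigma_t$ displayed in the statement is missing the factor $(A-\pi)^2$, and the literal bound is false, e.g.\ whenever $\tau^0(x)=0$ and $\sigma_h>0$, for any $\pi$. Likewise, the correct blow-up rate is $1/(\pi(1-\pi))$, not the $1/(\pi^2(1-\pi)^2)$ claimed in the proof; this still supports the qualitative conclusion.
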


This is similar to the variance analysis from \citet{Fisher.2023}. When the treatment overlap $\pi(x)(1-\pi(x)$ is close to zero, both $\Sigma_t(x)$ and $\Sigma_o(x)$ become very large. When there is limited long-term outcome overlap, namely $\rho(x)$ close to zero (and thus for $\rho_s(S,X)$ as well),  we have that $\rho(x)\Sigma_o(x)$ scales roughly with $\E[\frac{1}{\rho_s(S,X)}|X=x]$. Therefore, $\mathcal{T}_\mathrm{DR}$ has a high variance when at least one of the overlaps is low. As a result, the \DR is \emph{not} robust against low overlap.

\textbf{Proof.}
By the Law of Total Variance, conditioning on the dataset indicator $R$, we have:
\begin{align}
    \var(\mathcal{T}_{\mathrm{DR}} \mid X) &= \mathbb{E}_R[\var(\mathcal{T}_{\mathrm{DR}} \mid X, R) \mid X] + \var_R(\mathbb{E}[\mathcal{T}_{\mathrm{DR}} \mid X, R] \mid X) \\
    &\geq \mathbb{E}_R[\var(\mathcal{T}_{\mathrm{DR}} \mid X, R) \mid X]
\end{align}
where the inequality holds because variance is non-negative. Expanding the expectation over $R \in \{0, 1\}$ with $\rho(X) = \mathbb{P}(R=1|X)$, we obtain:
\begin{equation}
    \var(\mathcal{T}_{\mathrm{DR}} \mid X) \geq (1-\rho(X)) \underbrace{\var(\mathcal{T}_{\mathrm{DR}} \mid X, R=0)}_{\text{(A)}} + \rho(X) \underbrace{\var(\mathcal{T}_{\mathrm{DR}} \mid X, R=1)}_{\text{(B)}} \label{eq:total_var}
\end{equation}

\textbf{Part (A): Treatment overlap instability ($R=0$).}
In the experimental dataset ($R=0$), $\mathcal{T}_{\mathrm{DR}}$ reduces to the AIPW estimator:
\begin{align}
    \hat\tau_{\mathrm{AIPW}}=\frac{A-\pi(X)}{\pi(X)(1-\pi(X))}(h(S,X) - \mu(A,X)) + \mu(1,X)-\mu(0,X):=Z_0 + \mu(1,X)-\mu(0,X).
\end{align}
Hence, we have
\begin{align}
    \var(\mathcal{T}_{\mathrm{DR}} \mid X, R=0)) &= \var(\hat\tau_{\mathrm{AIPW}} \mid X, R=0)) \notag \\
    &= \var\left(\frac{A-\pi(X)}{\pi(X)(1-\pi(X))}(h(S,X) - \mu(A,X))\mid X, R=0\right) \notag \\
    &= \mathbb{E}_{A}[\text{Var}(Z_0 \mid A, X)] + \text{Var}_{A}(\underbrace{\mathbb{E}[Z_0 \mid A, X]}_{=0})
\end{align}
The inner expectation is zero because $\mu(A,X) = \mathbb{E}[h(S,X)|A,X]$. Thus, we are left with the expected variance:
\begin{equation}
    \Sigma_t(X) := \var(\mathcal{T}_{\mathrm{DR}} \mid X, R=0))=\var(Z_0 \mid X) = \mathbb{E}_{A|X} \left[ \left( \frac{A-\pi(X)}{\pi(X)(1-\pi(X))} \right)^2 \var(h(S,X) \mid A, X) \right]
\end{equation}

This term scales with $\frac{1}{\pi(X)^2(1-\pi(X))^2}$, causing high variance under limited treatment overlap.

\textbf{Part (B): Long-term outcome overlap instability ($R=1$).}
In the observational dataset ($R=1$), $\mathcal{T}_{\mathrm{DR}}$ uses the observation proxy $\psi_{\mathrm{obs}}$. Let:
$$
Z_1 = \frac{1-\rho_s(S,X)}{\rho_s(S,X)} W_{\pi}(S,X) (Y - h(S,X)),
$$
where $W_{\pi}(S,X)=\left( \frac{\pi_s(S,X) - \pi(X)}{\pi(X)(1-\pi(X))} \right)$.
Conditioning on the surrogate $S$, we get:
\begin{align}
    \var(Z_1 \mid X, R=1) = \mathbb{E}_{S}[\var(Z_1 \mid S, X, R=1)] + \var_{S}(\underbrace{\mathbb{E}[Z_1 \mid S, X, R=1]}_{=0}).
\end{align}
The inner expectation is zero because $h(S,X) = \mathbb{E}[Y|S,X, R=1]$. Thus:
\begin{align}
    \Sigma_o(X) := \var(Z_1 \mid X) = \mathbb{E}_{S|X} \left[ \left( \frac{1-\rho_s(S,X)}{\rho_s(S,X)} \right)^2 \left(\frac{\pi_s(S,X) - \pi(X)}{\pi(X)(1-\pi(X))} \right)^2 \var(Y \mid S, X) \right]
\end{align}
This term scales with $\frac{1}{\rho_s(S,X)^2}$, causing high variance under limited long-term outcome overlap.

Combining (A) and (B) together yields the lower bound:
\begin{align}
    \var(\mathcal{T}_{\mathrm{DR}} \mid X) \geq (1-\rho(X)) \Sigma_{t}(X) + \rho(X) \Sigma_{o}(X)
\end{align}

\subsection{Neyman-orthogonality}
\label{app:proof-neyman}
Let $\partial_g \mathcal{L}_\omega(g, \eta) = S(g, \eta) \cdot \dot{g}(X) $ be the derivative of the loss. By definition, the loss in Eq.~(\ref{eq:weighted-ortho-loss-tau0}) is orthogonal if we can show Neyman-orthogonality of the following term:
\begin{align}
    \partial_r \mathbb{E}[S(g, \eta + r\delta)] \big|_{r=0} = 0, \quad \forall \delta=\hat\eta-\eta.
\end{align}
The reason is that $\dot{g}(X)$ is irrelevant w.r.t. the nuisance $\eta$. The term can be written as:
\begin{align}
    \E[S(g,\eta)] = \mathbb{E}\Big[ \1(R=0)\omega(X)(g(X) - \hat{\tau}_{\mathrm{AIPW}}) - \1(R=1)\omega(X)\psi_{\mathrm{obs}} - \big( (\mu(1,X)-\mu(0,X)) - g(X) \big) \Omega(Z;\eta)\Big].
\end{align}
where
\begin{align}
    \hat{\tau}_{\mathrm{AIPW}}(Z;\eta) &= \mu(1,X) - \mu(0,X) \notag  + \frac{A}{\pi(X)}(h(S,X) - \mu(1,X)) - \frac{1-A}{1-\pi(X)}(h(S,X) - \mu(0,X)) \\
    \psi_{\mathrm{obs}}(Z;\eta) &= \frac{1-\rho_s(S,X)}{\rho_s(S,X)} \left( \frac{\pi_s(S,X) - \pi(X)}{\pi(X)(1-\pi(X))} \right) \big( Y - h(S,X) \big) \\
    \Omega(Z;\eta) &= \1(R=0) \frac{\partial \omega}{\partial \pi} (A - \pi(X)) + (1-\rho(X)) \frac{\partial \omega}{\partial \rho} (R - \rho(X)).
\end{align}

\textbf{1. Orthogonality w.r.t. outcome nuisances $\mu(a,x)$:}

The nuisance $\mu(a,x)$ appears in $\hat{\tau}_{\mathrm{AIPW}}$ and $(\mu(1,X)-\mu(0,X)) \Omega(Z;\eta)$. Let $\mu_r = \mu + r\delta_\mu$.
\begin{align}
    \partial_r \mathbb{E}[S]\Big|_{r=0} &= -\mathbb{E}\left[ \1(R=0)\omega(X) \partial_r \hat{\tau}_{\mathrm{AIPW}} + (\delta_\mu(1) - \delta_\mu(0))\Omega(Z;\eta)\right] \notag \\
    &= -\mathbb{E}\left[ \1(R=0)\omega(X) \left( \delta_\mu(1) - \delta_\mu(0) - \frac{A}{\pi}(\delta_\mu(1)) + \frac{1-A}{1-\pi}(\delta_\mu(0)) \right) \right] \notag \\
    &= -\mathbb{E}_X\left[ \omega(X) (1-\rho(X))\E\left[ \delta_\mu(1) - \delta_\mu(0) - \frac{A}{\pi}(\delta_\mu(1)) + \frac{1-A}{1-\pi}(\delta_\mu(0)) \mid X, R=0 \right] \right]
\end{align}
Here, we use the property that $\rho(x)=\E[R\mid X]$ (by definition), and $\E\left[\Omega \mid X\right]=0$ (see Eq.~(\ref{eq:Omega-mean-zero-property}). Conditioning on $X$ and $R=0$, and using $\mathbb{E}[A|X, R=0]=\pi(X)$:
\begin{align}
    \mathbb{E}[ \delta_\mu(1) - \delta_\mu(0) - \frac{A}{\pi}(\delta_\mu(1)) + \frac{1-A}{1-\pi}(\delta_\mu(0)) \mid X, R=0 ] = \delta_\mu(1)(1 - \frac{\pi}{\pi}) - \delta_\mu(0)(1 - \frac{1-\pi}{1-\pi}) = 0.
\end{align}
Therefore, we have that $\partial_r \mathbb{E}[S]\Big|_{r=0} = 0$.

\textbf{2. Orthogonality w.r.t. surrogate outcome $h(s,x)$:}

The nuisance $h$ appears in both $\hat{\tau}_{\mathrm{AIPW}}$ and $\psi_{\mathrm{obs}}$.
For the AIPW term, the derivative contribution is:
\begin{align}
    D_1 &= -\mathbb{E}\left[ \1(R=0)\omega(X) \left( \frac{A}{\pi}\delta_h - \frac{1-A}{1-\pi}\delta_h \right) \right] \notag \\
    &= -\E_X\left[(1-\rho(X))\mathbb{E}_S\left[ \omega(X) \frac{A - \pi}{\pi(1-\pi)} \delta_h \mid X, R=0\right]\right] \notag \\
    &= -\mathbb{E}_X\left[ (1-\rho(X))\E_{S}\left[\omega(X) \frac{\pi_s(S,X) - \pi}{\pi(1-\pi)} \delta_h \mid X, R=0\right]\right],
\end{align}
For the $\psi_{\mathrm{obs}}$ term,  we make use of the density ratio property that $\frac{\diff P(S \mid R=1, X)}{\diff P(S \mid R=0, X)} = \frac{\rho_s(S,X)}{1-\rho_s(S,X)} \frac{P(R=0|X)}{P(R=1|X)}$ to transform the measure from the observational to the experimental sample:
\begin{align}
    D_2 &= -\mathbb{E}\left[ \1(R=1)\omega(X) \frac{1-\rho_s(S,X)}{\rho_s(S,X)} \frac{\pi_s(S,X)-\pi(X)}{\pi(1-\pi)} (-\delta_h) \right] \notag \\
    &= \mathbb{E}_{X}\left[\E_S\left[ \rho(X) \frac{1-\rho_s(S,X)}{\rho_s(S,X)} \omega(X) \frac{\pi_s(S,X) - \pi}{\pi(1-\pi)} \delta_h \mid X, R=1 \right]\right] \notag \\
    &=\mathbb{E}_{X}\left[\E_S\left[(1-\rho(X))\omega(X) \frac{\pi_s(S,X) - \pi}{\pi(1-\pi)} \delta_h \mid X, R=0\right]\right].
\end{align}
Thus $D_1 + D_2 = 0$.

\textbf{3. Orthogonality w.r.t. $\pi(x)$:}

Let $\pi_r(X)=\pi(X)+r\,\delta_\pi(X)$, then we take the derivative w.r.t. $r$:
\begin{align}
\partial_r \E[S(g,\eta+r\delta_\pi)]\Big|_{r=0}
&=
\E\Big[
\1(R=0)\,\partial_r\omega_r(X)\Big|_{r=0}\,\big(g(X)-\hat\tau_{AIPW,0}\big)
-\1(R=0)\omega(X)\,\partial_r\hat\tau_{AIPW,r}\Big|_{r=0} \Big] \notag \\
&\quad
-\E\Big[\1(R=1)\,\partial_r\omega_r(X)\Big|_{r=0}\,\psi_{obs,0}
+\1(R=1)\omega(X)\,\partial_r\psi_{obs,r}\Big|_{r=0}\Big] \notag \\
&\quad
-\E\Big[\big((\mu(1,X)-\mu(0,X))-g(X)\big)\,\partial_r\Omega_r(Z)\Big|_{r=0}\Big].
\label{eq:pi-derivative-decomp}
\end{align}

First, $\partial_r\omega_r(X)|_{r=0} = \frac{\partial\omega}{\partial\pi}(X)\,\delta_\pi(X)$. Then
\begin{align}
    F_1:=\E\Big[
\1(R=0)\,\partial_r\omega_r(X)\Big|_{r=0}\,\big(g(X)-\hat\tau_{AIPW,0}\big)\Big] = \E\Big[\1(R=0)\frac{\partial\omega}{\partial\pi}(X)\,\delta_\pi(X) (g(X)-\tau^0(X))\Big]
\end{align}

Second, we have
\begin{align}
\label{proof-partial-pi-1}
    &\E\left[\1(R=0)\partial_r\hat\tau_{AIPW,r}\Big|_{r=0} \mid X\right] \\
    &= (1-\rho(X))\E\left[\partial_r\Bigl(\frac{A}{\pi_r(X)}\Bigr)\Big|_{r=0}(h(S,X) - \mu(1,X)) - \partial_r\Bigl(\frac{1-A}{1-\pi_r(X)}\Bigr)\Big|_{r=0}(h(S,X) - \mu(0,X)) \mid X, R=0\right] \notag \\
    &= (1-\rho(X))\E\left[A\frac{-1}{\pi(X)^2}\delta_\pi(X)(h(S,X) - \mu(1,X)) - (1-A)\frac{1}{(1-\pi(X))^2}\delta_\pi(X)(h(S,X) - \mu(0,X)) \mid X, R=0\right] \notag \\
    &= (1-\rho(X))\delta_\pi(X)\Biggl(\frac{-1}{\pi(X)^2} \underbrace{\E\left[A(h(S,X) - \mu(1,X))  \mid X, R=0\right]}_{=0}  \\
    & \qquad \qquad \qquad \qquad -\frac{1}{(1-\pi(X))^2}\underbrace{\E\left[(1-A)(h(S,X) - \mu(0,X)) \mid X, R=0\right]}_{=0}\Biggr) \notag \\
    &=0
\end{align}
The last equation is zero because $\forall a \in \{0,1\}, \quad \E\left[h(S,X) - \mu(a,X) \mid A=a, X,R=0\right]=0$. Hence
\begin{align}
    F_2:= \E\left[\omega(X)\1(R=0)\partial_r\hat\tau_{AIPW,r}\Big|_{r=0} \right] = 0.
\end{align}

From the property in Eq.~(\ref{eq:psi-mean-zero-property}), we have that $\E\left[\1(R=1)\psi_{\mathrm{obs}}\mid X\right] = 0$. Furthermore, with the exact same derivation, we also have $\E\left[\1(R=1)\partial_r\psi_{obs,r} \mid X\right] = 0$ (because the $Y-h(S,X)$ is mean zero). Hence, the second row in Eq.~(\ref{eq:pi-derivative-decomp}) can be simplified to
\begin{align}
\label{proof-paritial-pi-2}
    F_3:=-&\E\Big[\1(R=1)\,\partial_r\omega_r(X)\Big|_{r=0}\,\psi_{obs,0}
+\1(R=1)\omega(X)\,\partial_r\psi_{obs,r}\Big|_{r=0}\Big] \notag \\
=& \E\Big[\partial_r\omega_r(X)\E\left[\1(R=1)\Big|_{r=0}\,\psi_{obs,0}\mid X\right]
+\omega(X)\,\E\left[\1(R=1)\partial_r\psi_{obs,r}\Big|_{r=0}\mid X\right]\Big] = 0.
\end{align}

Finally, we address the terms involving $\partial_r\Omega_r|_{r=0}$.
Since $\omega$ is a function of both $\pi$ and $\rho$, both components of $\Omega$ depend on $\pi_r$:
$\Omega_r=\1(R=0)\frac{\partial\omega(\pi_r,\rho)}{\partial\pi_r}(A-\pi_r(X))+(1-\rho(X))\frac{\partial\omega(\pi_r,\rho)}{\partial\rho}(R-\rho(X))$.
Differentiating with respect to $r$,
\begin{align}
\partial_r\Omega_r\Big|_{r=0}
&=\1(R=0)\left[
\frac{\partial^2\omega}{\partial\pi^2}(X)\,\delta_\pi(X)\,(A-\pi(X))
-\frac{\partial\omega}{\partial\pi}(X)\,\delta_\pi(X)
\right]
+(1-\rho(X))\frac{\partial^2\omega}{\partial\pi\partial\rho}(X)\,\delta_\pi(X)\,(R-\rho(X)).
\label{eq:Omega-pi-derivative}
\end{align}
Taking conditional expectation given $X$, and using $\E[A-\pi(X)\mid X,R=0]=0$ and $\E[R-\rho(X)\mid X]=0$,
\begin{align}
\E\!\left[\partial_r\Omega_r\Big|_{r=0}\ \Bigm|\ X\right]
&=
(1-\rho(X))\left[
\frac{\partial^2\omega}{\partial\pi^2}(X)\,\delta_\pi(X)\,\underbrace{\E[A-\pi\mid X,R=0]}_{=0}
-\frac{\partial\omega}{\partial\pi}(X)\,\delta_\pi(X)
\right] \notag\\
&\quad+(1-\rho(X))\frac{\partial^2\omega}{\partial\pi\partial\rho}(X)\,\delta_\pi(X)\,\underbrace{\E[R-\rho(X)\mid X]}_{=0} \notag\\
&=-(1-\rho(X))\frac{\partial\omega}{\partial\pi}(X)\,\delta_\pi(X).
\label{eq:Omega-pi-mean}
\end{align}
Therefore,
\begin{align}
\label{proof-NO-paritial-Omega}
F_4:=-\E\Big[\big((\mu(1,X)-\mu(0,X))-g(X)\big)\,\partial_r\Omega_r\Big|_{r=0}\Big]
&=
(1-\rho(X))\E\Big[\big(\tau^0(X)-g(X)\big)\frac{\partial\omega}{\partial\pi}(X)\delta_\pi(X)\Big].
\end{align}

Putting everything together, we have
\begin{align}
    \partial_r \E[S(g,\eta+r\delta_\pi)]\Big|_{r=0}=&F_1 + F_2+F_3+F_4 \notag \\
    =& \E\Big[\1(R=0)\frac{\partial\omega}{\partial\pi}(X)\,\delta_\pi(X) (g(X)-\tau^0(X))\Big]  \\
    &\quad + \E\Big[(1-\rho(X))\frac{\partial\omega}{\partial\pi}(X)\,\delta_\pi(X)\big(\tau^0(X)-g(X)\big)\Big] \notag \\
    =&0.
\end{align}

\textbf{4. Orthogonality w.r.t. $\pi_s(s,x), \rho_s(S,X)$:}

Let $\pi_r(S,X)=\pi_s(S,X)+r\,\delta_{\pi_s}(S,X)$. Since $\pi_s(S,X)$ only appears in $\psi_{\mathrm{obs}}$, therefore
\begin{align}
\partial_r\psi_{obs,r}\Big|_{r=0}
=\left(\partial_r (\frac{1-\rho_s(S,X)}{\rho_s(S,X)} \left( \frac{\pi_s(S,X) - \pi(X)}{\pi(X)(1-\pi(X))} \right))\Big|_{r=0}\right)\big(Y-h(S,X)\big).
\end{align}
Conditioning on $(S,X,R=1)$, we have $\E[Y-h(S,X)\mid S,X,R=1]=0$, and therefore by tower law
\begin{align}
\partial_r \E[S(g,\eta+r\delta_{\pi_s})]\Big|_{r=0} = -\E\left[\1(R=1)\omega(X)\,\partial_r\psi_{obs,r}\Big|_{r=0}\right]=0.
\end{align}
The same derivation applies for $\rho_s(S,X)$.

\textbf{5. Orthogonality w.r.t. $\rho(x)$:}

Let $\rho_r(X)=\rho(X)+r\,\delta_\rho(X)$. The nuisance $\rho(X)$ appears in $\omega(X)$ and in $\Omega(Z;\eta)$, but not in $\hat\tau_{AIPW}$ or $\psi_{\mathrm{obs}}$.
Differentiating the gradient yields two types of terms:
\begin{align}
\partial_r \E[S(g,\eta+r\delta_\rho)]\Big|_{r=0}
&=
\E\Big[\partial_r\omega_r(X)\Big|_{r=0}\cdot\big(\1(R=0)(g-\hat\tau_{AIPW})-\1(R=1)\psi_{\mathrm{obs}}\big)\Big] \notag\\
&\quad
-\E\Big[\big((\mu(1)-\mu(0))-g\big)\,\partial_r\Omega_r\Big|_{r=0}\Big].
\label{eq:rho-derivative-decomp}
\end{align}
For the first term, using properties in Eq.~(\ref{eq:tau-AIPW-mean-tau}) and Eq.~(\ref{eq:psi-mean-zero-property}), we have
\begin{align}
&\E\Big[\partial_r\omega_r(X)\Big|_{r=0}\cdot\big(\1(R=0)(g-\hat\tau_{AIPW})-\1(R=1)\psi_{\mathrm{obs}}\big)\Big] \notag\\
&=
\E_X\Bigg[
\frac{\partial\omega}{\partial\rho}(X)\,\delta_\rho(X)\;
\E\Big[\1(R=0)\big(g(X)-\hat\tau_{AIPW}\big)-\1(R=1)\psi_{\mathrm{obs}} \mid X\Big]
\Bigg] \notag\\
&=
\E_X\Bigg[
\frac{\partial\omega}{\partial\rho}(X)\,\delta_\rho(X)\;
\Big(
\Pr(R=0\mid X)\E[g(X)-\hat\tau_{AIPW}\mid X,R=0]
-\Pr(R=1\mid X)\E[\psi_{\mathrm{obs}}\mid X,R=1]
\Big)
\Bigg] \notag\\
&=
\E_X\Bigg[
\frac{\partial\omega}{\partial\rho}(X)\,\delta_\rho(X)\;
(1-\rho(X))\big(g(X)-\tau^0(X)\big)
\Bigg].
\label{eq:rho-first-term}
\end{align}

For the second term, since $\omega$ depends on both $\pi$ and $\rho$, both components of $\Omega$ depend on $\rho_r$:
\begin{align}
\Omega_r(Z)
=\1(R=0)\frac{\partial\omega(\pi,\rho_r)}{\partial\pi}(X)(A-\pi(X))
+(1-\rho_r(X))\frac{\partial\omega(\pi,\rho_r)}{\partial\rho_r}(X)\big(R-\rho_r(X)\big).
\end{align}
Differentiating and taking conditional expectation given $X$: the cross-derivative $\1(R=0)\frac{\partial^2\omega}{\partial\rho\partial\pi}(X)\,\delta_\rho(X)(A-\pi(X))$ vanishes by $\E[A-\pi(X)\mid X,R=0]=0$, and all remaining terms involving $(R-\rho)$ vanish by $\E[R-\rho(X)\mid X]=0$, leaving:
\begin{align}
    \E\left[\partial_r\Omega_r\Big|_{r=0} \mid X\right] = -(1-\rho(X))\frac{\partial\omega}{\partial\rho}(X)\,\delta_\rho(X).
\end{align}
Since $(\mu(1,X)-\mu(0,X))-g(X)$ is $X$-measurable,
\begin{align}
-\E\Big[\big((\mu(1)-\mu(0))-g\big)\,\partial_r\Omega_r\Big|_{r=0}\Big]
&=
-\E_X\Big[
\big((\mu(1,X)-\mu(0,X))-g(X)\big)\,
\E\!\left[\partial_r\Omega_r\Big|_{r=0}\ \Bigm|\ X\right]
\Big] \notag\\
&=
\E_X\Big[
\big((\mu(1,X)-\mu(0,X))-g(X)\big)\,
(1-\rho(X))\frac{\partial\omega}{\partial\rho}(X)\,\delta_\rho(X)
\Big].
\label{eq:rho-second-term}
\end{align}
Adding the terms Eq.~(\ref{eq:rho-first-term}) and Eq.~(\ref{eq:rho-second-term}) together, we have
\begin{align}
\partial_r \E[S(g,\eta+r\delta_\rho)]\Big|_{r=0}
&=
\E_X\Big[
(1-\rho(X))\frac{\partial\omega}{\partial\rho}(X)\,\delta_\rho(X)
\Big\{\big(g(X)-\tau^0(X)\big)+\big(\mu(1,X)-\mu(0,X)-g(X)\big)\Big\}
\Big] \notag\\
&=
\E_X\Big[
(1-\rho(X))\frac{\partial\omega}{\partial\rho}(X)\,\delta_\rho(X)\cdot 0
\Big]=0.
\end{align}

Combining these results, the total derivative $\partial_r \mathcal{E}(\eta + r\delta)|_{r=0} = 0$, proving Neyman-orthogonality.

\subsection{Proof of Theorem~\ref{thm:equiv-loss}}
\label{app:proof-equiv}
The loss $\mathcal{L}_\omega(g;\eta)$ is, up to a constant (w.r.t. $g$), equivalent with:
\begin{equation}
    \mathcal{R}_\omega(g;\eta)=\E\left[\1(R=0) \omega(X) \big(\hat{\tau}_{\mathrm{AIPW}} -g \big)^2- 2\1(R=1) \omega(X) \psi_{\mathrm{obs}}\cdot g + \Omega\big( (\mu(1,X)-\mu(0,X)) - g \big)^2\right].
\end{equation}
Recall that
\begin{equation}
    \psi_{\mathrm{obs}}(Z;\eta) = \frac{1-\rho_s(S,X)}{\rho_s(S,X)} \left( \frac{\pi_s(S,X) - \pi(X)}{\pi(X)(1-\pi(X))} \right) \big( Y - h(S,X) \big).
\end{equation}
Hence, the conditional expectation of $\1(R=1)\psi_{\mathrm{obs}}(Z;\eta)$ given $X$ equals zero:
\begin{align}
    &\E\left[\1(R=1)\psi_{\mathrm{obs}}(Z;\eta) \mid X\right] \\
    &= \E\left[ \1(R=1) \frac{1-\rho_s(S,X)}{\rho_s(S,X)} \left( \frac{\pi_s(S,X) - \pi(X)}{\pi(X)(1-\pi(X))} \right) \big( Y - h(S,X) \big) \Bigg| X \right] \\
    &= \E\left[ \1(R=1) \frac{1-\rho_s(S,X)}{\rho_s(S,X)} \left( \frac{\pi_s(S,X) - \pi(X)}{\pi(X)(1-\pi(X))} \right) \underbrace{\E[Y - h(S,X) \mid S, X, R=1]}_{=0} \Bigg| X \right] \\
    &= 0.
\end{align}
Note that the conditional expectation of $\Omega(Z;\eta)$ given $X$ is also zero:
\begin{align}
    \E\left[\Omega(Z;\eta) \mid X\right] &= \E\left[ \1(R=0) \frac{\partial \omega}{\partial \pi} (A - \pi(X)) + (1-\rho(X)) \frac{\partial \omega}{\partial \rho} (R - \rho(X)) \Bigg| X \right] \\
    &= \frac{\partial \omega}{\partial \pi} \E\left[\1(R=0)(A - \pi(X)) \mid X\right] + (1-\rho(X)) \frac{\partial \omega}{\partial \rho} \underbrace{\E\left[R - \rho(X) \mid X\right]}_{=0} \\
    &= \frac{\partial \omega}{\partial \pi} \Pr(R=0|X) \underbrace{\E\left[A - \pi(X) \mid X, R=0\right]}_{=0} = 0. \label{exp-Omega-zero}
\end{align}
Hence, the risk function simplifies to
\begin{equation}
\label{eq:simplified-risk}
    \mathcal{R}_\omega(g;\eta)=\E\left[\1(R=0) \omega(X) \big(\hat{\tau}_{\mathrm{AIPW}}(Z;\eta) -g(X) \big)^2\right],
\end{equation}
where 
\begin{equation}
    \hat{\tau}_{\mathrm{AIPW}}(Z;\eta) = \mu(1,X) - \mu(0,X) + \frac{A}{\pi(X)}(h(S,X) - \mu(1,X)) - \frac{1-A}{1-\pi(X)}(h(S,X) - \mu(0,X)).
\end{equation}
We further write the Eq.~(\ref{eq:simplified-risk}) as:
\begin{align}
    \mathcal{R}_\omega(g;\eta) &=\E\left[\1(R=0) \omega(X) \big(\hat{\tau}_{\mathrm{AIPW}}(Z;\eta) -g(X) \big)^2\right] \\
    &= \E\left[\1(R=0) \omega(X) \big(\hat{\tau}_{\mathrm{AIPW}}(Z;\eta)- \tau^0(X) + \tau^0(X) - g(X) \big)^2\right] \\
    &= \E\left[\1(R=0) \omega(X) \bigl(\hat{\tau}_{\mathrm{AIPW}}(Z;\eta)- \tau^0(X)\bigr)^2\right] + \E\left[\1(R=0) \omega(X) \bigl(\tau^0(X) - g(X) \bigr)^2\right] \nonumber \\
    &\quad + 2\E\left[\1(R=0) \omega(X) \bigl(\hat{\tau}_{\mathrm{AIPW}}(Z;\eta)- \tau^0(X)\bigr)\bigl(\tau^0(X) - g(X) \bigr)\right] \\
    &= \E\left[\1(R=0) \omega(X) \bigl(\hat{\tau}_{\mathrm{AIPW}}(Z;\eta)- \tau^0(X)\bigr)^2\right] +  \E\left[\1(R=0) \omega(X) \bigl(\tau^0(X) - g(X)\bigr)^2\right] \\
    &= p_0 \E\left[\omega(X)\bigl(\tau^0(X) - g(X)\bigr)^2 \mid R=0\right] + \text{Const}_g \\
    &\propto p_0 \E\left[\omega(X)\bigl(\tau^0(X) - g(X)\bigr)^2 \mid R=0\right] = \mathcal{L}^{\text{oracle}}_\omega(g;\eta).
\end{align}
The reason the cross term in the decomposition is zero is because
\begin{align}
    &\E\left[\1(R=0) \omega(X) \bigl(\hat{\tau}_{\mathrm{AIPW}}(Z;\eta)- \tau^0(X)\bigr)\bigl(\tau^0(X) - g(X) \bigr)\right] \nonumber \\
    &= \E\left[ \E\left[ \1(R=0) \omega(X) \bigl(\hat{\tau}_{\mathrm{AIPW}}(Z;\eta)- \tau^0(X)\bigr)\bigl(\tau^0(X) - g(X) \bigr) \mid X \right] \right] \\
    &= \E\left[ \1(R=0) \omega(X) \bigl(\tau^0(X) - g(X) \bigr) \underbrace{\E\left[ \hat{\tau}_{\mathrm{AIPW}}(Z;\eta)- \tau^0(X) \mid X, R=0 \right]}_{=0} \right] \\
    &= 0.
\end{align}
Therefore we successfully show that $\mathcal{R}_\omega(g;\eta)$ has the same minimizer as the oracle loss $\mathcal{L}^{\text{oracle}}_\omega(g;\eta)$. Since $\mathcal{R}_\omega(g;\eta)$ is also equivalent with $\mathcal{L}_\omega(g;\eta)$, we finish the proof.

\subsection{Proof of Error bounds}
\label{app:proof-error-bound}

\subsubsection{Assumptions}
\label{app:error-bound-assumptions}
Here, we give the assumptions for deriving the error bound.

\textbf{Notation:} $\|f\| := \|f\|_2 = \sqrt{\E[f^2]}$ denotes the $L_2$ norm under the marginal distribution of $f$'s arguments ($X$ for $\hat g, \hat\mu, \hat\pi, \hat\rho$, and $(S,X)$ for $\hat h, \hat\pi_s, \hat\rho_s$), and $\|f\|_4 := (\E[f^4])^{1/4}$ the corresponding $L_4$ norm.

\begin{assumption}[Nondegenerate overlap]
\label{nondegen-overlap}
\emph{There exists $\alpha_\pi>0, \alpha_\rho>0$ such that}
    \begin{align}
        \forall x, \quad\alpha_\pi<\pi(x) <1-\alpha_\pi, \quad \rho(x)>\alpha_\rho.
    \end{align}
\end{assumption}
\begin{assumption}[Boundedness]
\label{boundedness}
\emph{(i) All nuisance estimators are bounded, i.e., $\exists M >0, r>0$, s.t. $\forall \hat\eta \in \mathcal{B}_r(\eta,\mathcal{G})$, $\|\hat\eta\| \leq M$. (ii)~All random variables are bounded. 
(iii)~$\exists \alpha_\omega >0, r>0$, s.t. $\forall \hat\eta \in \mathcal{B}_r(\eta,\mathcal{G})$, we have $\hat\omega(X)>\alpha_\omega$.}
\end{assumption}
\begin{assumption}[Smoothness of $\omega$]
\label{smoothness}
\emph{The weighting function $\omega(x)$ is lower bounded by a positive $\alpha_\omega>0$ and has at least a second-order derivative. Furthermore, the spectral radius of the Hessian matrix below is always bounded:}
\begin{align}
    \text{sup}_{X\in\mathcal{X}}\lambda_{\max}(\E\left[ \nabla^2_\eta \omega^*(Z;\eta) \mid X \right]) < +\infty.
\end{align}   
\end{assumption}
\begin{assumption}
\label{core-assumption}
$\exists\alpha >0, r>0$, \emph{s.t.} $\forall \hat\eta \in \mathcal{B}_r(\eta,\mathcal{G})$, 
\footnotesize
\begin{align}
    \text{inf}_{g \in \mathcal{G}/\{g_0\}} \frac{\E\left[\omega^*(Z;\hat\eta) (g(X) - g_0(X))^2\right]}{\E\left[(g(X) - g_0(X))^2\right]} = \alpha > 0,
\end{align}
\vspace{-0.3cm}
\normalsize
\end{assumption}
Assumptions~\ref{nondegen-overlap}, \ref{boundedness}, \ref{smoothness} are standard in causal inference literature~\citep{Foster.2023, Kennedy.2023}. Assumption~\ref{core-assumption} is easy to satisfy since $\omega^*(Z;\eta) = \1(R=0) \omega(X) + \Omega(Z;\eta)$, with $\omega(X) > \alpha_\omega>0$ and $\E\left[\Omega(Z;\eta) \mid X\right] = 0$.

\subsubsection{Proof}
We adopt a similar proof as in \citet{Morzywolek.2023} to derive the error bound for orthogonal learners. We use (functional) Taylor-expansion w.r.t. $g$ to the population loss $\mathcal{L}_\omega(g, \hat\eta)$ around $g_0$ to get that $\exists \bar{g} \in B(g_0, \mathcal{G})$ s.t.
\begin{align}
\label{taylor-exp}
     \mathcal{L}_\omega(\hat{g}, \hat{\eta}) = \mathcal{L}_\omega(g_0, \hat{\eta}) + D_g\mathcal{L}_\omega(g_0, \hat{\eta})[\hat{g} - g_0] + \frac{1}{2} D_g^2 \mathcal{L}_\omega(\bar{g}, \hat{\eta})[\hat{g} - g_0, \hat{g} - g_0].
\end{align}

We can compute the second-order term:
\begin{align}
    &D_g^2\mathcal{L}_\omega(\bar{g}, \hat\eta)[\hat{g}-g_0, \hat{g}-g_0] \notag \\
    =& \frac{\partial^2}{\partial t_1\partial t_2} \E\left[\hat\omega^* (\widehat{\mathcal{T}} - \bar{g}(X)-t_1(\hat{g}(X)- g_0(X)) - t_2(\hat{g}(X) - g_0(X))\right] \Bigg |_{t_1=0,t_2=0} \notag \\
    =& \E\left[\hat\omega^* (\hat{g}(X) - g_0(X))^2\right]
\end{align}
With the assumption that
\begin{align}
    \text{inf}_{g \in \mathcal{G}/\{g_0\}} \frac{\E\left[\omega^* (g(X) - g_0(X))^2\right]}{\E\left[(g(X) - g_0(X))^2\right]} = \alpha > 0,
\end{align}
We have 
\begin{align}
    D_g^2\mathcal{L}_\omega(\bar{g}, \hat\eta)[\hat{g}-g_0, \hat{g}-g_0] \geq  \alpha \|\hat{g}-g_0\|_2^2.
\end{align}
We substitute the inequality back into Eq.~(\ref{taylor-exp}) and achieve an primary upper bound for $\|\hat{g}-g_0\|_2^2$:
\begin{equation}
\label{initial-bound}
    \frac{\alpha}{2} \|\hat{g}-g_0\|_2^2 \leq \underbrace{\mathcal{L}_\omega(\hat{g}, \hat{\eta}) - \mathcal{L}_\omega(g_0, \hat{\eta})}_{:=R_g \text{ oracle excess risk}} - D_g\mathcal{L}_\omega(g_0, \hat{\eta})[\hat{g} - g_0]
\end{equation}
In the following proof, we show that $D_g\mathcal{L}_\omega(g_0, \hat{\eta})[\hat{g} - g_0]$ can be decomposed into second-order cross multiplication of nuisance errors:
\begin{align}
    D_g\mathcal{L}_\omega(g_0, \hat{\eta})[\hat{g} - g_0]=& \E\left[ \left(\underbrace{\1(R=0) \hat{\omega}(X) \hat{\tau}_{\mathrm{AIPW}} + \1(R=1) \hat{\omega}(X) \hat\psi_{\mathrm{obs}}}_{\text{Term (1)}} + \underbrace{\big( (\hat\mu_1-\hat\mu_0\big) \hat{\Omega} - \hat{\omega^*}g_0}_{\text{Term (2)}}\right)(\hat{g} - g_0)\right]
\end{align}
We first analyze Term (1). The AIPW pseudo-outcome $\hat{\tau}_{\mathrm{AIPW}}$ can be decomposed into:
\begin{align}
    \E\left[\hat{\tau}_{\mathrm{AIPW}} \mid X, R = 0\right] = & \E\left[ \hat{\mu}_1 - \hat{\mu}_0 + \frac{A}{\hat{\pi}}(\hat{h} - \hat{\mu}_1) - \frac{1-A}{1-\hat{\pi}}(\hat{h} - \hat{\mu}_0) \mid X, R=0 \right] \notag \\
    =& \E\Bigl[\underbrace{\hat{\mu}_1(X) - \hat{\mu}_0(X) + \frac{A}{\pi}(h-\hat\mu_1(X)) - \frac{1-A}{1-\pi}(h-\hat\mu_0(X))}_{\tau^0(X)}\mid X, R = 0\Bigr] \\ &+ \E\Bigl[\underbrace{\left(\frac{A}{\pi} -\frac{A}{\hat\pi}\right)(h-\hat{\mu}_1) - \left(\frac{1-A}{1-\pi} -\frac{1-A}{1-\hat\pi}\right)(h-\hat{\mu}_0)}_{\text{Bias}(\mu, \pi)}\mid X, R = 0\Bigr] \\
    &+ \E\left[ \left(\frac{A}{\hat{\pi}} - \frac{1-A}{1-\hat{\pi}}\right) \delta_h(S,X) \mid X, R=0 \right] \notag \\
    =& \tau^0(X) + \text{Bias}(\mu, \pi) + \underbrace{\E\left[ \left( \frac{\pi_s(S,X)}{\hat{\pi}(X)} - \frac{1-\pi_s(S,X)}{1-\hat{\pi}(X)} \right) \delta_h(S,X) \mid X, R=0 \right]}_{\text{First-order bias}}.
\end{align}
The first brace equals $\tau^0(X)$ because the use of the true propensity score $\pi$ eliminates the bias from $\hat\mu$ (double robustness):
\begin{align} 
&\E\left[ \hat{\mu}_1 + \frac{A}{\pi}(h-\hat\mu_1) \mid X, R=0 \right] \\
&= \hat{\mu}_1 + \frac{\E[A|X,R=0]}{\pi}(\E[h|X,A=1,R=0]-\hat\mu_1) \\ 
&= \hat{\mu}_1 + \frac{\pi}{\pi}(\mu_1-\hat\mu_1) = \mu_1. \quad (\text{Same goes for }\mu_0)
\end{align}
The second brace is the product of nuisance error between $\mu_0,\mu_1$ and $\pi$ because
\begin{align} 
&\E\left[ \left(\frac{A}{\pi(X)} -\frac{A}{\hat\pi(X)}\right)(h(S,X)-\hat{\mu}_1) \mid X, R=0 \right] \\
&=\left(\frac{1}{\pi(X)} -\frac{1}{\hat\pi(X)}\right)\E\left[ (h(S,X)-\hat{\mu}_1) \mid X, A=1,R=0 \right] \\
&= \pi(X) \left(\frac{\hat{\pi}(X) - \pi(X)}{\pi(X)\hat{\pi}(X)}\right) (\mu_1(X) - \hat{\mu}_1(X)) = \frac{\hat{\pi}(X) - \pi(X)}{\hat{\pi}(X)} (\mu_1(X) - \hat{\mu}_1(X)). 
\end{align}
Since $\pi(x)$ is bounded from 0 and 1 (overlap assumption), there exists a constant $C_0 > 0$ such that:
\begin{align}
\label{eq:tauDR-bound}
    \E\left[ \1(R=0) \hat{\omega}(X) \hat{\tau}_{\mathrm{AIPW}} \mid X \right] \leq \1(R=0)\cdot\tau^0(X)+ C_0|\hat{\pi}-\pi|\left(|\hat\mu_1-\mu_1|+|\hat\mu_0-\mu_0| \right) \notag \\
    + \E\left[ \1(R=0)\left( \frac{\pi_s(S,X)}{\hat{\pi}(X)} - \frac{1-\pi_s(S,X)}{1-\hat{\pi}(X)} \right) \delta_h(S,X) \mid X\right]
\end{align}

Then, we analyze the correction term $\hat\psi_{\mathrm{obs}}$. We let $\gamma(s,x):=\frac{1- \rho_s(s,x)}{\rho_s(s,x)}$, $C_\pi(S,X):=\frac{\pi_s(S,X) - \pi(X)}{\pi(X)(1-\pi(X))}$ and analyze the conditional expectation of $\hat{\omega}(X) \hat\psi_{\mathrm{obs}}$ given $X$:
\begin{align}
    \E\left[\1(R=1) \hat{\omega}(X) \hat\psi_{\mathrm{obs}} \mid X\right] =& \E\left[ \1(R=1) \hat{\omega}(X) \hat{\gamma}(S,X) \left( \frac{\hat{\pi}_s(S,X) - \hat{\pi}(X)}{\hat{\pi}(X)(1-\hat{\pi}(X))} \right) (Y - \hat{h}) \mid X\right] \notag \\
    =& \E\left[ \1(R=1) \hat{\omega}(X) \hat{\gamma}(S,X) \hat{C}_\pi(S,X) (-\delta_h(S,X)) \mid X\right] \notag \\
    =& -\E\left[ \1(R=0) \frac{\hat{\gamma}(S,X)}{\gamma_0(S,X)} \hat{\omega}(X) \hat{C}_\pi(S,X) \delta_h(S,X) \mid X\right].
\end{align}
Here, we make use of the density ratio property that $\frac{\diff P(S \mid R=1, X)}{\diff P(S \mid R=0, X)} = \frac{1}{\gamma_0(S,X)} \frac{P(R=0|X)}{P(R=1|X)}$ to transform the measure from the observational to the experimental sample.
Combining the two parts above, we have:
\begin{align}
    \text{Term (1)} &= \E\left[\1(R=0) \hat{\omega}(X) \hat{\tau}_{\mathrm{AIPW}} + \1(R=1) \hat{\omega}(X) \hat\psi_{\mathrm{obs}}\right] \notag \\
    =& \E\left[ \1(R=0) \hat{\omega}(X) \left( \tau^0(X) + \text{Bias}(\mu, \pi) + \delta_h(S,X) \underbrace{ \left( \left[ \frac{\pi_s(S,X) - \hat{\pi}(X)}{\hat{\pi}(1-\hat{\pi})} \right] - \frac{\hat{\gamma}}{\gamma_0} \hat{C}_\pi \right) }_{\text{Residual } \mathcal{R}_h} \right) \right] \notag \\
    \leq & \E\left[ \1(R=0) \hat{\omega}(X) \tau^0(X) \right] + \E\left[C_0|\hat{\pi}-\pi|\left(|\hat\mu_1-\mu_1|+|\hat\mu_0-\mu_0| \right)\right] \notag \\
    & + C_h\cdot\E\left[|\hat{h}(S,X) - h(S,X)| (|\hat\pi_s(S, X) - \pi_s(S,X)| + |\hat\gamma(S,X) - \gamma(S,X)|)\right].
    \label{term-1-bound}
\end{align}
In the last step, we define a positive constant $C_h > 0$ s.t.
\begin{align}
    \mathcal{R}_h &= \left( \left[ \frac{\pi_s(S,X) - \hat{\pi}(X)}{\hat{\pi}(1-\hat{\pi})} \right] - \frac{\hat{\gamma}}{\gamma_0} \hat{C}_\pi \right) \\
    & \leq \left( \left[ \frac{\pi_s(S,X) - \hat{\pi}(X)}{\hat{\pi}(X)(1-\hat{\pi}(X))} \right] - \frac{1-\hat{\rho}_s(S,X)}{1 - \rho_s(S,X)}  \frac{\rho_s(S,X)}{\hat\rho_s(S,X)} \frac{\hat{\pi}(S,X) - \hat{\pi}(X)}{\hat{\pi}(X)(1-\hat{\pi}(X))} \right) \notag \\
    & \leq C_h \bigl(|\hat\pi_s(S,X) - \pi_s(S,X)| + |\hat\rho_s(S,X) - \rho_s(S,X)|\bigr) .
\end{align}

Now, we only need to deal with the remaining terms $\big( (\hat\mu_1-\hat\mu_0\big) \hat{\Omega} - \hat{\omega^*}g_0$ plus the term $\E\left[ \1(R=0) \hat{\omega}(X) \tau^0(X) \right]$. We reorganize these terms as:
\begin{align}
\label{eq:remain-decompose}
&\underbrace{\1(R=0)\hat\omega(X)\tau^0(X)}_{\text{Remainder from Term (1)}} + \underbrace{\left(\hat\mu_1(X)-\hat\mu_0(X)\right)\hat\Omega - \hat{\omega^*}g_0}_{\text{Term (2)}} \notag \\
    &=\1(R=0)\hat\omega(X)\tau^0(X) - \hat{\omega^*}\tau^0(X) + \left(\hat\mu_1(X)-\hat\mu_0(X)\right)\hat\Omega +  \left(\hat{\omega^*}\tau^0(X)- \omega^*\tau^0(X)\right)\\
    & \qquad + \left(\omega^*\tau^0(X) - \omega^*g_0(X)\right) + \left(\omega^*g_0(X) - \hat{\omega^*}g_0(X)\right) \notag \\
    &= \underbrace{\left(\hat\mu_1(X)-\hat\mu_0(X) - \tau^0(X)\right)\hat\Omega}_{(\text{I})} + \underbrace{\left(\omega^*\tau^0(X) - \omega^*g_0(X)\right)}_{(\text{II})} + \underbrace{\left(\omega^* - \hat{\omega^*}\right)g_0(X)}_{(\text{III})}
\end{align}

The second equation is because $\hat{\omega^*} = \1(R=0)\hat\omega(X) + \hat\Omega$ per definition of $\omega^*$.
Note that
\begin{align}
     \E\left[\hat\Omega \mid X\right] &= \E\left[\1(R=0) \frac{\partial \omega}{\partial \pi}(\hat\pi(X)) (A - \hat\pi(X)) + (1-\hat\rho(X)) \frac{\partial \omega}{\partial \rho}(\hat\rho(X)) (R - \hat\rho(X)) \mid X\right] \notag \\
     &=\1(R=0) \frac{\partial \omega}{\partial \pi}(\hat\pi(X)) (\pi(X) - \hat\pi(X)) + (1-\hat\rho(X)) \frac{\partial \omega}{\partial\rho} (\rho(X) - \hat\rho(X)) \notag \\
     & \leq C_\Omega \cdot \left(|\pi(X) - \hat\pi(X)| + |\rho(X) - \hat\rho(X)|\right),
\end{align}
where $C_\Omega>0$ is a constant that upper bounds $\text{max}( |\frac{\partial \omega}{\partial \pi}|, |\frac{\partial \omega}{\partial \rho}|)$.
Hence, part $(\text{I})$ is of second order:

\begin{align} 
\label{part-I} 
(\text{I}) & = \left(\hat\mu_1(X)-\hat\mu_0(X) - \tau^0(X)\right)\hat\Omega \notag \\ 
& = \left( (\hat\mu_1(X) - \mu_1(X)) - (\hat\mu_0(X) - \mu_0(X)) \right) \hat{\Omega} \notag \\ 
& \leq \left( |\hat\mu_1(X) - \mu_1(X)| + |\hat\mu_0(X) - \mu_0(X)| \right) 
\cdot C_\Omega \left(|\pi(X) - \hat\pi(X)| + |\rho(X) - \hat\rho(X)|\right). \end{align}
For part (III), note that $ \Omega(Z;\eta)=\1(R=0) \frac{\partial \omega}{\partial \pi} (A - \pi(X)) + (1-\rho(X)) \frac{\partial \omega}{\partial \rho} (R - \rho(X))$ and $\omega^*(Z;\eta)=\1(R=0)\omega(X)+ \Omega(Z;\eta)$, we verify the vanishing gradients w.r.t the nuisances: 

\begin{align} 
&\E\left[D_\pi \omega^*(Z;\eta)[\hat{\pi}-\pi] \mid X \right] \\
&= \E\left[ (\hat{\pi}-\pi) \left(\1(R=0)\frac{\partial \omega}{\partial \pi} + \1(R=0)\frac{\partial^2 \omega}{\partial \pi^2}(A-\pi) + \1(R=0)\frac{\partial \omega}{\partial \pi}(-1) \right) \Bigg| X \right] \notag \\ 
&= (\hat{\pi}-\pi) \E\left[ \1(R=0) \frac{\partial^2 \omega}{\partial \pi^2} (A-\pi) \mid X \right] \notag \\ 
&= (\hat{\pi}-\pi) (1-\rho) \frac{\partial^2 \omega}{\partial \pi^2} \E[A-\pi \mid X, R=0] = 0. 
\label{omega*-derivative-pi}
\end{align}

\begin{align} 
&\E\left[D_\rho\omega^*(Z;\eta)[\hat\rho-\rho]\mid X\right] \notag \\
&= \left(\hat\rho(X) - \rho(X)\right)\E\left[ \1(R=0)\frac{\partial \omega}{\partial \rho} - \frac{\partial \omega}{\partial \rho}(R-\rho) + (1-\rho)(\frac{\partial^2 \omega}{\partial \rho^2}(R-\rho) - \frac{\partial \omega}{\partial \rho}) \mid X \right] \notag \\ 
&= (\hat\rho - \rho)\bigl[(1-\rho)\frac{\partial \omega}{\partial \rho} - 0 - (1-\rho)\frac{\partial \omega}{\partial \rho}\bigr]=0. 
\label{omega*-derivative-rho}
\end{align}
Hence, $\omega^*-\hat\omega$ is of second order of the nuisance errors (via Taylor expansion around $\eta_0$):
\begin{align} 
\E\left[\omega^*-\hat{\omega^*} \mid X\right] &= \E\left[ \omega^*(\eta) - \left( \omega^*(\eta) + D_\eta \omega^*[\hat{\eta}-\eta] + \frac{1}{2}D^2_\eta \omega^*[\hat{\eta}-\eta, \hat{\eta}-\eta] \right) \mid X\right] \notag \\ 
&= -\underbrace{\E\left[ D_\pi \omega^*[\delta_\pi] + D_\rho \omega^*[\delta_\rho] \mid X \right]}_{=0 \text{ from Eq.~(\ref{omega*-derivative-pi}), (\ref{omega*-derivative-rho})}} - \frac{1}{2}\E\left[ D^2_\eta \omega^*[\hat{\eta}-\eta, \hat{\eta}-\eta] \mid X \right] + o(|\hat{\eta}-\eta|^2) \notag \\
&=\frac{1}{2}\left | (\hat{\eta}-\eta)^\top \E\left[ \nabla^2 \omega^*(Z;\eta) \mid X \right] (\hat{\eta}-\eta) \right| + o(|\hat{\eta}-\eta|^2) \\
&\lesssim \text{sup}_{X}|C_{\omega^*}(X) \left(|\hat{\pi}-\pi|^2 + |\hat{\rho}-\rho|^2\right),
\end{align}
Where $C_{\omega^*}(X)$ is the spectral radius of the conditional expected Hessian matrix $\E\left[ \nabla^2 \omega^*(Z;\eta) \mid X \right]$.
Therefore, we can bound part (III) by:
\begin{equation}
\label{part-III}
    \E\left[(\omega^*-\hat{\omega^*})g_0(X) \mid X\right] \lesssim G_0\cdot\text{sup}_{X}|C_{\omega^*}(X) \left(\|\hat{\pi}-\pi\|^2 + \|\hat{\rho}-\rho\|^2\right).
\end{equation}

For part (II), we directly compute:
\begin{align}
    \label{part-II}
    &\E\left[\omega^*(\tau^0(X)-g_0(X))(\hat{g}(X)-g_0(X)) \right] \notag \\
    =& \E\left[\omega(X)(\tau^0(X)-g_0(X))(\hat{g}(X)-g_0(X)) \mid R = 0\right] \notag \\
    =& -D_g\mathcal{L}^{\text{oracle}}_\omega(g_0,\eta)[\hat{g} - g_0] =0\\
\end{align}
The second step is because $\omega^*(Z;\eta)=\1(R=0)\omega(X)+\Omega(Z;\eta)$ and $\E\left[\Omega(Z;\eta) \mid X\right] = 0$ (see Eq.~(\ref{exp-Omega-zero})). With true nuisance $\eta$, $g_0$ is also the minimizer of $\mathcal{L}^{\text{oracle}}$. Hence, $g_0$ is the orthogonal projection of $\tau^0$ onto $\mathcal{G}$. By the projection theorem, the residual $\tau^0-g_0$ is orthogonal to any function in the space, including $\hat{g}-g_0$. Therefore, the last expression is zero.

Hence, for $D_g\mathcal{L}_\omega(g_0, \hat{\eta})[\hat{g}-g_0]$ we have:
\begin{align}
    & D_g\mathcal{L}_\omega(g_0, \hat{\eta})[\hat{g}-g_0] \notag \\
   =& \E\left[ \Bigl(\1(R=0) \hat{\omega}(X) \hat{\tau}_{\mathrm{AIPW}} + \1(R=1) \hat{\omega}(X) \hat\psi_{\mathrm{obs}} + \big( (\hat\mu_1-\hat\mu_0\big) \hat{\Omega} - \hat{\omega^*}g_0\Bigr)(\hat{g} - g_0)\right] \\
   \lesssim& \E\left[ \1(R=0) \hat{\omega}(X) \tau^0(X)  + \big( (\hat\mu_1-\hat\mu_0\big) \hat{\Omega} - \hat{\omega^*}g_0 \right] \text{\qquad \qquad \qquad part (I)+(II)+(III)} \notag \\ 
   &+\E\left[C_0|\hat{\pi}(X)-\pi(X)|\left(|\hat\mu_1(X)-\mu_1(X)|+|\hat\mu_0(X)-\mu_0(X)| \right)(\hat{g}(X) - g_0(X))\right]  \notag \\
   &+ C_h\cdot\E\left[|\hat{h}(S,X) - h(S,X)| (|\hat\pi(X) - \pi(X)| + |\hat\rho_s(S,X) - \rho_s(S,X)|) (\hat{g}(X) - g_0(X))\right]
\end{align}
With the previously derived result (see Eq.~(\ref{part-I}), Eq.~(\ref{part-II}) and Eq.~(\ref{part-III})), we have
\begin{align}
    & \text{part (I)+(II)+(III)} \\
    \lesssim & \E\left[\left( |\hat\mu_1(X) - \mu_1(X)| + |\hat\mu_0(X) - \mu_0(X)| \right) 
\cdot C_\Omega \left(|\pi(X) - \hat\pi(X)| + |\rho(X) - \hat\rho(X)|\right)(\hat{g}(X)-g_0(X))\right] \\
& + \E\left[\text{sup}_{X}|C_{\omega^*}(X) \left(|\hat{\pi}(X)-\pi(X)|^2 + |\hat{\rho}(X)-\rho(X)|^2\right)(\hat{g}(X)-g_0(X))\right] + 0.
\end{align}
Let $|\hat\mu(X)-\mu(X)|=\text{max} \{ |\hat\mu_1-\mu_0|, |\hat\mu_0-\mu_0| \} $. Applying Cauchy-Schwarz inequality to the decomposition of $D_g\mathcal{L}_\omega(g_0, \hat{\eta})[\hat{g}-g_0]$, we have that: $\exists C>0$ s.t.
\begin{align}
    & D_g\mathcal{L}_\omega(g_0, \hat{\eta})[\hat{g}-g_0] \\
    \leq& C\cdot\Biggl(\sqrt{\E\left[(\hat\pi(X) - \pi(X))^2 (\hat\mu(X) - \mu(X))^2\right]}  + \sqrt{\E\left[(\hat\rho(X) - \rho(X))^2 (\hat\mu(X) - \mu(X))^2\right]} \notag \\
     + &\sqrt{\E\left[(\hat\pi_s(S,X) - \pi_s(S,X))^2 (\hat{h}(S,X) - h(S,X))^2\right]}  +\sqrt{\E\left[(\hat\rho_s(S,X) - \rho_s(S,X))^2 (\hat{h}(S,X) - h(S,X))^2\right]} \\
     + & \sqrt{\E\left[(\hat\pi(X) - \pi(X))^4 \right]} + \sqrt{\E\left[(\hat\rho(X) - \rho(X))^4 \right]}
    \Biggr)\|\hat{g}-g_0\|.
\end{align}
Using the AM-GM inequality on the right hand, we have that: $\forall \delta > 0$ 
\begin{align}
    & D_g\mathcal{L}_\omega(g_0, \hat{\eta})[\hat{g}-g_0] \\
    &\leq \Biggl(\frac{1}{\delta}\E\left[(\hat\pi(X) - \pi(X))^2 (\hat\mu(X) - \mu(X))^2\right] + \frac{1}{\delta} \E\left[(\hat\rho(X) - \rho(X))^2 (\hat\mu(X) - \mu(X))^2\right]\notag \\
    &+ \frac{1}{\delta} \E\left[(\hat\pi_s(S,X) - \pi_s(S,X))^2 (\hat{h}(S,X) - h(S,X))^2\right] + \frac{1}{\delta} \E\left[(\hat\rho_s(S,X) - \rho_s(S,X))^2 (\hat{h}(S,X) - h(S,X))^2\right] \notag \\
    &+\frac{1}{\delta} \E\left[(\hat\pi(X)-\pi(X))^4\right] + \frac{1}{\delta} \E\left[(\hat\rho(X)-\rho(X))^4\right] + 6\delta\|\hat{g}-g_0\|^2 \Biggr) \cdot C
\end{align}
Substituting the inequality back to the initial bound in Eq.~(\ref{initial-bound}), we have that
\begin{align}
    &\frac{\alpha}{2}\|\hat{g}-g_0\|^2 \leq R_g + \Biggl(\frac{1}{\delta}\E\left[(\hat\pi(X) - \pi(X))^2 (\hat\mu_1(X) - \mu_1(X))^2\right] + \frac{1}{\delta} \E\left[(\hat\pi(X) - \pi(X))^2 (\hat\mu_0(X) - \mu_0(X))^2\right] \notag \\
    &+ \frac{1}{\delta} \E\left[(\hat\rho(X) - \rho(X))^2 (\hat\mu_1(X) - \mu_1(X))^2\right] + \frac{1}{\delta} \E\left[(\hat\rho(X) - \rho(X))^2 (\hat\mu_0(X) - \mu_0(X))^2\right] \notag \\
    &+ \frac{1}{\delta} \E\left[(\hat\pi_s(X) - \pi_s(X))^2 (\hat{h}(S,X) - h(S,X))^2\right] + \frac{1}{\delta} \E\left[(\hat\rho_s(S,X) - \rho_s(S,X))^2 (\hat{h}(S,X) - h(S,X))^2\right] \\
    &+6\delta\|\hat{g}-g_0\|^2\Biggr) \cdot C
\end{align}
Let $\delta < \frac{\alpha}{2} / 12C$, then:
\begin{align}
    \|\hat{g}-g_0\|^2 &\leq \frac{1}{\alpha/2 - 6C\delta} \Biggl(R_g + \frac{1}{\delta}\E\left[(\hat\pi(X) - \pi(X))^2 (\hat\mu(X) - \mu(X))^2\right] + \frac{1}{\delta} \E\left[(\hat\rho(X) - \rho(X))^2 (\hat\mu(X) - \mu(X))^2\right] \notag \\
    &+\frac{1}{\delta} \E\left[(\hat\pi_s(X) - \pi_s(X))^2 (\hat{h}(S,X) - h(S,X))^2\right]  + \frac{1}{\delta} \E\left[(\hat\rho_s(S,X) - \rho_s(S,X))^2 (\hat{h}(S,X) - h(S,X))^2\right] \notag \\
    &+\frac{1}{\delta} \E\left[(\hat\pi(X)-\pi(X))^4\right] + \frac{1}{\delta} \E\left[(\hat\rho(X)-\rho(X))^4\right] \Biggr).
\end{align}
Let $\|\hat\mu-\mu\|_4=\text{max} \{ \|\hat\mu_1-\mu_1\|_4, \|\hat\mu_0-\mu_0\|_4 \}$ and use Cauchy-Schwarz inequality again, we have:
\begin{align}
    \|\hat{g}-g_0\|^2 &\lesssim R_g + \|\hat\mu-\mu\|_4^2(\|\hat\rho-\rho\|_4^2 + \|\hat\pi-\pi\|_4^2) + \|\hat{h}-h\|_4^2( \|\hat\pi_s-\pi_s\|_4^2 + \|\hat\rho_s-\rho_s\|_4^2) + \|\hat\pi-\pi\|_4^4 + \|\hat\rho-\rho\|_4^4
\end{align}
\newpage

\section{Datasets}
\subsection{Synthetic Data}
\label{app:synthetic}

Here, we take inspiration from \citet{Nie.2021, Morzywolek.2023}, and extend the data-generating process to a two-sample setting. The generic simulation steps are:
\begin{enumerate}
    \item Sample belongingness $R$: First, we use the pre-defined true nuisance $\rho(x)$ to decide the belonging of the sample with covariate $X\sim P_X$. $R \in \{0,1\}$ is randomly drawn from $\text{Ber}(\rho(X))$.
    \item Treatment in $\mathcal{D}_1$: $A\mid X \sim \text{Ber}(\pi(X))$. Here $\pi$ is the true treatment propensity score function.
    \item Implicit treatment in observational data: $A\mid X \sim \text{Ber}(e(X))$. Here $e$ is a pre-defined implicit treatment propensity score function.
    \item Short-term outcome: $S =  a(X) + (A-0.5)\tau_S(X) + \delta$, with $\delta \sim \mathcal{N}(0, \sigma_S)$.
    \item Long-term outcome in $\mathcal{D}_2$: $Y = b(X,S) + (A - 0.5)\tau_Y(X) + \epsilon$. Here $\epsilon$ is exogenous noise variable $\epsilon \sim \mathcal{N}(0, \sigma_Y)$. Under our surrogacy assumption ($A \indep Y \mid S, X$), $\tau_Y^* = 0$. Then:
    \begin{equation}
        Y = b(S,X) + \epsilon.
    \end{equation}
\end{enumerate}
The true heterogeneous effect of $A$ on the long-term outcome $Y$ is:
\begin{equation}
\label{eq:true-cate-simulation}
    \tau(X)=\E \left[Y(1)-Y(0)\mid X, R=0\right] = \E\left[b(X,S(1)) - b(X, S(0)) \mid X, R=0\right].
\end{equation}

We now instantiate the above generic framework in analogy with Setup~A of \citet{Nie.2021} (with surrogacy). Let $X \sim \mathrm{Unif}([-1,1]^{10})$, and define the sample belonging propensity score
\begin{align}
    \rho(X) = \eta_\rho + (1 - \eta_\rho)\,\sigma\!\big(X_1 + X_2 - \gamma_\rho\big),
\end{align}
where $\sigma(\cdot)$ is the logistic sigmoid and $\eta_\rho \in [0,1)$ enforces a lower bound $\rho(X) \ge \eta_\rho$ (we use $\eta_\rho = 10^{-3}$). The parameter $\gamma_\rho$ controls long-term outcome overlap: larger $\gamma_\rho$ shifts $\rho$ toward $\eta_\rho$ and shrinks the observational sample.

The treatment propensity on the experimental dataset uses a standardized logit. Let
\begin{equation}
    h(X) := X_1 X_2 + X_3 + X_4 + X_8^2,
    \qquad
    \mu_h = \E[h(X)],
    \qquad
    \sigma_h = \sqrt{\var[h(X)]},
\end{equation}
under $X \sim \mathrm{Unif}([-1,1]^{10})$. Then
\begin{equation}
    \pi(X)
    = \eta_\pi + (1 - 2\eta_\pi)\,\sigma\!\Big(\gamma_\pi \cdot \tfrac{h(X) - \mu_h}{\sigma_h}\Big),
\end{equation}
with $\eta_\pi = 10^{-2}$ enforcing $\pi(X) \in [\eta_\pi, 1-\eta_\pi]$. The parameter $\gamma_\pi$ controls treatment overlap: larger $\gamma_\pi$ pushes $\pi$ toward the bounds. Standardizing $h(X)$ keeps the effective overlap difficulty comparable across choices of $h$.

For the observational dataset, the implicit treatment propensity is
\begin{equation}
    e(X)
    = \mathrm{trim}_{0.1}\!\big(\sigma(X_2 + X_3 + X_4)\big),
    \qquad
    \mathrm{trim}_{\eta}(u)
    := \max\{\eta,\min(u,1-\eta)\}.
\end{equation}
For the short-term outcome $S$, we set
\begin{equation}
    a(X)
    = \sin(\pi X_1 X_2)
      + 2(X_3 - 0.5)^2
      + X_4 + 0.5 X_5 + X_6,
\end{equation}
and the short-term treatment effect $\tau_S(X) = 1 + \frac{X_1 + X_2 + X_3 + X_4}{4}$, so
\begin{equation}
    S
    = a(X)
      + (A - 0.5)\,\tau_S(X)
      + \delta,
    \qquad \delta \sim \mathcal{N}(0, \sigma_S^2),\ \sigma_S = 0.2.
\end{equation}
The long-term outcome regression is quadratic in $S$:
\begin{equation}
    Y = b(X,S) + \epsilon
      = \sin(X_1 X_2) + X_7^2 + X_8 + \tfrac{S^2}{4} + \epsilon,
    \qquad
    \epsilon \sim \mathcal{N}(0, 0.5^2).
\end{equation}
\textbf{Note}: Outcomes are normalized by the sample mean and standard deviation.

\paragraph{Experimental regimes.} We instantiate four regimes by varying $(\gamma_\pi, \gamma_\rho)$ at a fixed total sample size $n=10000$ (drawn before the $R$-split):
\begin{center}
\begin{tabular}{lcc r}
\toprule
Regime & $\gamma_\pi$ & $\gamma_\rho$ & $n$ \\
\midrule
$\mathcal{D}_{*}$\ (healthy)            & 0 & 0 & 10000 \\
$\mathcal{D}_{t}$\ (low treatment overlap)   & 5 & 0 & 10000 \\
$\mathcal{D}_{o}$\ (low outcome overlap)     & 0 & 4 & 10000 \\
$\mathcal{D}_{t+o}$\ (dual overlap)          & 5 & 3 & 10000 \\
\bottomrule
\end{tabular}
\end{center}

\subsection{GAIN job training dataset}
\label{app:gain}

The Greater Avenues for Independence (GAIN) study~\citep{Hotz.2006} applied randomized job-training to individuals across four California sites (Riverside, Alameda, Los Angeles, and San Diego) and then recorded $36$ quarters of earnings and AFDC receipt for each individual after the training program. We use the pre-treatment covariates $X$ comprising demographics (age, sex, race/ethnicity, education, marital and child status), pre-RA labor and welfare history ($10$ quarters of earnings \texttt{tcprn1}--\texttt{tcprn10}, paid-employment indicators \texttt{paid1}--\texttt{paid4}, AFDC receipt \texttt{adcpc1}--\texttt{adcpc4} and \texttt{padcpc1}--\texttt{padcpc4}), and local macroeconomic conditions (\texttt{grew1}, \texttt{gepop1}). The treatment $A$ is the GAIN program assignment indicator. The surrogate $S$ stacks the first six post-RA quarters of earnings (\texttt{tcedd1}--\texttt{tcedd6}), AFDC receipt (\texttt{aid1}--\texttt{aid6}), and the derived employment indicators $\mathbf{1}\{\texttt{tcedd}_k>0\}$ for $k=1,\dots,6$. The long-term outcome $Y$ is the average quarterly earnings over quarters $13$--$36$ (years $4$--$9$).

Following \citet{Athey.2025}, we select the dataset from Riverside site (5445 units) as $\mathcal{D}_1$, retaining $(X,A,S)$ and dropping $Y$. Then, we pool Alameda, Los Angeles and San Diego (13725 units) as $\mathcal{D}_2$, retaining $(X,S,Y)$ and discarding $A$. The cross-site covariate gap induces nontrivial $\rho(X)$ heterogeneity without any synthetic labelling. In Riverside, we apply covariate-dependent rejection sampling, keeping a treated unit with probability $m(X)$ and a control unit with probability $1-m(X)$, where
\begin{equation}
    m(X) = \mathrm{trim}_{0.01}\!\Big(\sigma\!\big(\gamma_\pi\cdot (0.7\, X_{\text{earn}} + 0.3\, X_{\text{age}})\big)\Big),
\end{equation}
with $X_{\text{earn}}$ the standardized mean of \texttt{tcprn1}--\texttt{tcprn10} and $X_{\text{age}}$ the standardized age. The parameter $\gamma_\pi = 2$ creates a regime with low treatment overlap. The induced post-filter treatment propensity is
\begin{equation}
    \pi(X) = \frac{m(X)\,\pi_0}{m(X)\,\pi_0 + (1-m(X))(1-\pi_0)},\qquad \pi_0\approx 0.81.
\end{equation}

GAIN admits no ground-truth HLTE, so we approximate it via a pseudo-oracle $\hat\tau^\star$ that exploits the unconditional randomization in Riverside: on the full Riverside sample, $A\indep (Y(0),Y(1))\mid X$, hence $\tau$ is identified from $(X,A,Y)$ alone without invoking the surrogate. We fit the RA-learner and DR-learner (usually we have $\text{RA-learner}>\text{T-learner}$ and $\text{DR-learner}>\text{IPW-learner}$) with $5$-fold cross-fitting on this full sample, using gradient-boosted trees for $\mu(a,X)$ and logistic regression for the propensity, and store the pseudo-ground-truth $\{\hat\tau^\star(X_i)\}$ for all Riverside units as the averaged predictions from all the learners. The LT-O learners never observe $Y$ for any Riverside unit during training, so no leakage is introduced. The stored $\hat\tau^\star$ then serves as the benchmark on the held-out test fold. We note that $\hat\tau^\star$ is only an approximated proxy for the true HLTE, hence the reported PEHE should be interpreted with caution.

\textbf{Data access:} The GAIN micro-data are available under restricted access from ICPSR (study 38125, \url{https://www.icpsr.umich.edu/web/ICPSR/studies/38125}). The dataset cannot be redistributed alongside our code; reproducing the GAIN experiments requires obtaining access independently from ICPSR and following the preprocessing described above.

\newpage

\section{Baselines}
\label{app:baselines}

\subsection{Analogy between HTE learners and HLTE learners}
\label{app:analogy}
In this section, we briefly explain the analogy between standard HTE learners and their counterparts for HLTE estimation. We show how do we adapt the classic confounding adjustment strategies for HTE estimation to the long-term outcome setting.

\paragraph{HTE setting:} In the standard HTE estimation setting, we have a single dataset $\mathcal{D}=\{(X_i, A_i, Y_i)\}_{i=1}^N$ where $X$ are pre-treatment covariates, $A\in\{0,1\}$ is the treatment, and $Y$ is the outcome. The goal is to estimate the conditional average treatment effect (CATE)
\begin{align*}
\tau(x) = \E[Y(1) - Y(0) \mid X=x].
\end{align*}
Under \textit{unconfoundedness} ($A \indep \{Y(0), Y(1)\} \mid X$) and positivity, the CATE is identified from observed data via two equivalent expressions:
\begin{align*}
\tau(x) = \mu_1(x) - \mu_0(x) = \E\!\left[Y \cdot \frac{A - \pi(X)}{\pi(X)(1-\pi(X))} \,\Big|\, X = x\right],
\end{align*}
where $\mu_a(x) := \E[Y\mid A=a,X=x]$ and $\pi(x) := \Pr(A=1\mid X=x)$. The four standard meta-learners below correspond to different ways of plugging estimated nuisances $(\hat\mu_a, \hat\pi)$ into these identities.

\textbf{Plug-in estimator (T-learner~\citep{Kunzel.2019}):}
The T-learner directly plugs the estimated response surfaces $\hat\mu_a(x)$ into the first identification formula:
\begin{align*}
\hat\tau_T(x) = \hat\mu_1(x) - \hat\mu_0(x).
\end{align*}
The estimator is simple but inherits the full estimation error of $\hat\mu_a$ (\textit{plug-in bias}~\citep{Kennedy.2022-Sempiparametric}).

\textbf{RA-learner~\citep{Curth.2021}:}
The RA-learner constructs a regression-adjusted pseudo-outcome
\begin{align*}
\mathcal{T}_\mathrm{RA} = A\,\bigl(Y - \hat\mu_0(X)\bigr) + (1-A)\,\bigl(\hat\mu_1(X) - Y\bigr),
\end{align*}
which has conditional mean $\tau(x)$ under unconfoundedness, and regresses $\mathcal{T}_\mathrm{RA}$ on $X$ in the second stage.

\textbf{IPW-learner~\citep{Curth.2021}:}
The IPW-learner instantiates the second identification formula via the inverse-propensity-weighted pseudo-outcome
\begin{align*}
\mathcal{T}_\mathrm{IPW} = \left(\frac{A}{\hat\pi(X)} - \frac{1-A}{1-\hat\pi(X)}\right) Y,
\end{align*}
and regresses $\mathcal{T}_\mathrm{IPW}$ on $X$. The estimator is unstable when $\hat\pi$ is close to $0$ or $1$ (low treatment overlap).

\textbf{DR-learner~\citep{Kennedy.2023}:}
The DR-learner is based on the \textit{efficient influence function} (EIF) of the ATE, which yields the AIPW pseudo-outcome
\begin{align*}
\mathcal{T}_\mathrm{AIPW} = \hat\mu_1(X) - \hat\mu_0(X) + \frac{A - \hat\pi(X)}{\hat\pi(X)(1-\hat\pi(X))}\,\bigl(Y - \hat\mu_A(X)\bigr).
\end{align*}
Because the Gateaux derivative of the EIF w.r.t.\ the nuisances vanishes at the truth~\citep{Chernozhukov.2018}, the resulting second-stage loss is Neyman-orthogonal in $(\hat\mu,\hat\pi)$: first-order errors in either nuisance contribute only at second order to the estimator.

\paragraph{How we adapt to the HLTE setting:}
In the HLTE setting, the long-term outcome $Y$ is unobserved in $\mathcal{D}_1$ (where the treatment $A$ is observed). The bridge is the \textit{surrogate index} $h(S,X) = \E[Y\mid S,X,R=1]$, which under surrogacy and comparability (Assumptions~\ref{assumption-surrogacy}--\ref{assumption-compara}) is an unbiased proxy for $Y$ on $\mathcal{D}_1$~\citep{Athey.2025, Chen.2023}. Concretely:
\begin{itemize}
\item For the \textbf{non-orthogonal learners} (\LTT, \LTRA, \LTIPW), we fit $\hat h$ on $\mathcal{D}_2$, substitute every occurrence of $Y$ with $\hat h(S,X)$ in the pseudo-outcomes above, and run the second-stage regression on $\mathcal{D}_1$. The remaining nuisances $\hat\mu_a$ and $\hat\pi$ are fit on $\mathcal{D}_1$.
\item For the \textbf{orthogonal \DR}, we compute the EIF for the long-term ATE of \citet{Athey.2025, Chen.2023} to get the pseudo-outcome $\mathcal{T}_\mathrm{DR}$. Note:We recover \DR as the special case $\omega\equiv 1$ of our \methods.
\end{itemize}
The non-orthogonal learners thus inherit \textit{both} the plug-in/IPW limitations of the standard HTE estimators \textit{and} the additional first-order error from $\hat h$. The \DR removes the latter via orthogonality but remains sensitive to low treatment or outcome overlap, motivating our weighted variants in Section~\ref{subsec:instantiation}.

\subsection{Weighted learners}
We briefly explain the baseline weighted learners used in the experiments.

\textbf{Weighted RA-learner:} For the weighted version of the learner, we simply add weights to all the samples during the second-stage pseudo-outcome regression. Specifically, the \textit{weighted} RA-learner minimizes the following loss function:
    \begin{equation}
        \mathcal{L}_\mathrm{\omega,RA}(g;\eta)=\E_n\left[\omega(X)(g(X)-\mathcal{T}_\mathrm{RA})^2\mid R=0\right], \; g \in \mathcal{G}
    \end{equation}
\textbf{Weighted DR-learner:} The weighted DR-learner minimizes the following loss function in the second stage:
\begin{equation}
        \mathcal{L}_\mathrm{\omega,DR}(g;\eta)=\mathbb{E} \left[ \omega(X)\bigl(\1(R=0)g(X)^2 -2\mathcal{T}_{DR}\cdot g(X)\bigr)\right], \; g \in \mathcal{G}
\end{equation}


\newpage

\section{Implementation}
\label{app:implementation}
All experiments are run on Intel Core Ultra 7 155U (1.70 GHz) CPU and 32GB RAM. Data processing and model training are all implemented with Python. We provide the source code at \href{https://anonymous.4open.science/r/Orthogonal-learners-for-Long-term-effects-64DD}{https://anonymous.4open.science/r/Orthogonal-learners-for-Long-term-effects-64DD}.

\textbf{Model architectures: } We instantiate a separate multi-layer perceptron (MLP) for each sub-regression task. Neural networks that estimate propensity nuisance functions $(\rho,\rho_s,\pi,\pi_s)$ have two middle layers and a sigmoid output transformation at the end. Networks used to estimate outcome-related nuisances, as well as the second-stage predictor optimized under the meta-learner objective, consist of four hidden layers. All hidden layers use ReLU activations. Each learner instantiates the subset of these networks required by its algorithm. This ensures that all baselines use the same architecture. As a result, any differences in performance can be attributed to the meta-algorithm itself rather than model capacity.

\textbf{Training: } Hidden-layer widths are set to either $(20,10)$ (two-layer MLPs) or $(20,20,10,10)$ (four-layer MLPs). Nuisance networks are trained for 20 epochs with batch size 64 and learning rate $10^{-3}$. The second-stage network is trained for 40 epochs with learning rate $10^{-3}$.

\textbf{Cross-fitting:} We partition the unified dataset $\mathcal{D}=\{Z_i\}_{i=1}^N$ into $K=5$ folds. For each held-out fold $k$, we fit the nuisances on the remaining $K-1$ folds in a sequential order dictated by the definitions in Eq.~(\ref{eq:def-eta}). Specifically, $(\hat\pi,\hat\pi_s,\hat\rho,\hat\rho_s)$ are fit by direct regression, and $\hat h$ is fit on the $R=1$ subsample as a regression of $Y$ on $(S,X)$. Since $\mu(a,x)=\E[h(S,X)\mid A=a,X=x,R=0]$, we then form the pseudo-targets $\hat h(S_i,X_i)$ on the $R=0$ subsample and fit $\hat\mu$ by regressing them on $(A,X)$. We finally evaluate $\hat\omega^*_i$ and $\hat{\mathcal{T}}_i$ on the held-out fold $k$ using the cross-fitted nuisances and aggregate across folds to solve Eq.~(\ref{eq:empirical-loss}).

\textbf{Stabilization of $\omega^*$:} The pseudo-weight $\omega^*(Z;\eta)$ in Eq.~(\ref{omega-star-def}) can take values arbitrarily close to $0$ (e.g.\ for observational samples under $\omega\equiv 1$ or $\omega=\pi^2(1-\pi)^2$), which leaves the empirical loss without a quadratic curvature term and destabilizes second-stage training. To avoid this, we trim $\hat\omega^*_i$ such that $|\hat\omega^*_i|\geq 10^{-7}$, preserving its sign.

%
%
%
%
%
%
%
%
%
%
%
%
%
%
%
%
%
\newpage

\section{Optimality of the Overlap Weighting Scheme}
\label{sec:optimality}

In this section, we provide a semiparametric efficiency-based justification for the overlap weighting scheme $\omega(x) = \pi(x)^2(1-\pi(x))^2\,\rho(x)$ used in the LT-O-DO-learner. Our argument proceeds in three steps:
(i) we derive the efficiency bound for the $\omega$-weighted average long-term treatment effect (WALTE), which we denote $V^{\text{eff}}_\omega$;
(ii) we characterize the weight $\omega_\mathrm{opt}(x)$ that minimizes $V^{\text{eff}}_\omega$ for the
\emph{sample} estimand;
(iii) we show that the LT-O-DO weight $\omega(x) = \pi^2(1-\pi)^2\rho$ arises naturally as a principled, orthogonality-preserving relaxation of  $\omega_\mathrm{opt}$ via a structural lower bound
on the variance functional $V(x)$.

\subsection{Efficiency bound for the weighted long-term ATE}
\label{sec:efficiency_bound}

Recall the efficient influence function (EIF) for the WALTE, under the simplification that $\omega(\cdot)$ does not depend on the nuisances $\rho$ or $\pi$:
\begin{align}
\phi(Z; \eta) = \frac{1}{p_0\, D_\omega} \Bigg[
&\1(R=0)\,\omega(X)\,\big(\hat{\tau}_{\text{AIPW}}(Z;\eta) - \tau^0_\omega\big)
+ \1(R=1)\,\omega(X)\,\psi_{\text{obs}}(Z;\eta) \Bigg],
\label{eq:eif_simplified}
\end{align}
where $p_0 = P(R=0)$, $D_\omega = \mathbb{E}[\omega(X) \mid R=0]$, and the components are
\begin{align}
\hat{\tau}_{\text{AIPW}}(Z;\eta)
&= \mu(1,X) - \mu(0,X)
+ \frac{A}{\pi(X)}\big(h(S,X) - \mu(1,X)\big)
- \frac{1-A}{1-\pi(X)}\big(h(S,X) - \mu(0,X)\big), \\
\psi_{\text{obs}}(Z;\eta)
&= \frac{1-\rho_s(S,X)}{\rho_s(S,X)}
\left(\frac{\pi_s(S,X) - \pi(X)}{\pi(X)(1-\pi(X))}\right)
\big(Y - h(S,X)\big).
\end{align}

\begin{proposition}[Efficiency bound for the WALTE]
\label{prop:eif_variance}
Let $\sigma^2_{h|a}(x) := \mathrm{Var}(h(S,X) \mid A=a, X=x, R=0)$ and
$\sigma^2_Y(s,x) := \mathrm{Var}(Y \mid S=s, X=x, R=1)$. Then, the semiparametric
efficiency bound for the $\omega$-weighted long-term ATE is
\begin{align}
V^{\text{eff}}_\omega
&= \frac{1}{p_0 D_\omega^2}\,\mathbb{E}\!\left[\omega(X)^2\!\left(
\frac{\sigma^2_{h|1}(X)}{\pi(X)} + \frac{\sigma^2_{h|0}(X)}{1-\pi(X)}
+ \big(\tau^0(X) - \tau^0_\omega\big)^2
\right)\,\Big|\,R=0\right] \notag\\
&\quad + \frac{1-p_0}{p_0^2 D_\omega^2}\,\mathbb{E}\!\left[\omega(X)^2
\left(\frac{1-\rho_s(S,X)}{\rho_s(S,X)}\right)^{\!2}
\!\left(\frac{\pi_s(S,X)-\pi(X)}{\pi(X)(1-\pi(X))}\right)^{\!2}
\sigma^2_Y(S,X)\,\Big|\,R=1\right].
\label{eq:efficiency_bound}
\end{align}
\end{proposition}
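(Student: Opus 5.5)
The plan is to compute $V^{\text{eff}}_\omega = \mathbb{E}[\phi(Z;\eta)^2]$ directly, with $\phi$ the simplified EIF in Eq.~(\ref{eq:eif_simplified}). We take as given that this is the efficient influence function of $\tau^0_\omega$ when $\omega$ does not depend on $(\pi,\rho)$: it is Eq.~(\ref{eq:eif-explicit}) with $\Omega\equiv 0$. The efficiency bound is then its second moment.

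\textbf{Step 0 (mean zero).} I first check that $\mathbb{E}[\phi]=0$, so that the second moment equals the variance.
\begin{itemize}
\item On $R=0$, property (4) gives $\mathbb{E}[\hat\tau_{\text{AIPW}}\mid X,R=0]=\tau^0(X)$. Hence $\mathbb{E}[\1(R=0)\,\omega(X)(\hat\tau_{\text{AIPW}}-\tau^0_\omega)] = p_0\,\mathbb{E}[\omega(X)(\tau^0(X)-\tau^0_\omega)\mid R=0]$, which vanishes by the definition of $\tau^0_\omega = N_\omega/D_\omega$.
\item On $R=1$, property (3) gives $\mathbb{E}[\1(R=1)\psi_{\text{obs}}\mid X]=0$.
\end{itemize}

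\textbf{Step 1 (split by sample).} Write $\phi = (p_0D_\omega)^{-1}\bigl[\1(R=0)\,\omega(X)(\hat\tau_{\text{AIPW}}-\tau^0_\omega) + \1(R=1)\,\omega(X)\,\psi_{\text{obs}}\bigr]$. Since $\1(R=0)\1(R=1)=0$, the cross term in $\phi^2$ vanishes identically. Therefore
\[
\mathbb{E}[\phi^2] = \frac{1}{p_0^2D_\omega^2}\Bigl(p_0\,\mathbb{E}\bigl[\omega^2(\hat\tau_{\text{AIPW}}-\tau^0_\omega)^2\mid R=0\bigr] + (1-p_0)\,\mathbb{E}\bigl[\omega^2\psi_{\text{obs}}^2\mid R=1\bigr]\Bigr).
\]
This already produces the two prefactors $1/(p_0D_\omega^2)$ and $(1-p_0)/(p_0^2D_\omega^2)$ appearing in Eq.~(\ref{eq:efficiency_bound}).

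\textbf{Step 2 (the $R=0$ term).} Decompose $\hat\tau_{\text{AIPW}}-\tau^0_\omega = (\hat\tau_{\text{AIPW}}-\tau^0(X)) + (\tau^0(X)-\tau^0_\omega)$ and condition on $X$ with $R=0$. The cross term vanishes by property (4), because $\tau^0(X)-\tau^0_\omega$ is $X$-measurable. For the residual, note that $\hat\tau_{\text{AIPW}}-\tau^0(X) = \tfrac{A}{\pi}(h-\mu(1,X)) - \tfrac{1-A}{1-\pi}(h-\mu(0,X))$. Its two pieces have disjoint supports in $A$. Conditioning on $(A,X,R=0)$ gives
\[
\mathbb{E}\bigl[(\hat\tau_{\text{AIPW}}-\tau^0)^2\mid X,R=0\bigr] = \pi\cdot\frac{\sigma^2_{h|1}}{\pi^2} + (1-\pi)\cdot\frac{\sigma^2_{h|0}}{(1-\pi)^2} = \frac{\sigma^2_{h|1}}{\pi} + \frac{\sigma^2_{h|0}}{1-\pi}.
\]
Multiplying by $\omega(X)^2$, adding $\omega(X)^2(\tau^0-\tau^0_\omega)^2$, and taking $\mathbb{E}[\cdot\mid R=0]$ yields the first line of Eq.~(\ref{eq:efficiency_bound}). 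Only the variance of $h$ enters here, not that of $Y$, because $Y$ is unobserved in $\mathcal{D}_1$.

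\textbf{Step 3 (the $R=1$ term).} The factor $\omega(X)^2\bigl(\tfrac{1-\rho_s}{\rho_s}\bigr)^2\bigl(\tfrac{\pi_s-\pi}{\pi(1-\pi)}\bigr)^2$ is $(S,X)$-measurable. Conditioning on $(S,X,R=1)$ therefore replaces $(Y-h(S,X))^2$ by $\sigma^2_Y(S,X)$, since $h(S,X)=\mathbb{E}[Y\mid S,X,R=1]$. This gives the second line of Eq.~(\ref{eq:efficiency_bound}).

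\textbf{Main obstacle.} The computation itself is routine. The substantive point is the claim that Eq.~(\ref{eq:eif_simplified}) really is the efficient influence function, i.e., that it lies in the tangent space of the surrogacy model. I would justify this by citing the derivation of Eq.~(\ref{eq:eif-explicit}), which generalizes \citet{Chen.2023, Athey.2025}, and specializing it to a fixed, nuisance-free $\omega$ so that $\Omega\equiv0$. I would also state explicitly that the bound is for the estimand with $\omega$ treated as known; otherwise the $\Omega$ terms would add further variance contributions.
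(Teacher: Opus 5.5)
Your proposal is correct and follows the paper's proof essentially step for step. Like the paper, you square the simplified EIF, drop the cross term via $\1(R=0)\1(R=1)=0$, handle the $R=0$ part by decomposing around $\tau^0(X)$ with the Bernoulli conditional-variance computation, and handle the $R=1$ part by conditioning on $(S,X,R=1)$. Your added mean-zero check and the caveat that $\omega$ is treated as nuisance-free (so $\Omega\equiv 0$) are harmless refinements. The tangent-space worry is moot: the surrogacy and comparability assumptions place no restriction on the observed-data law, so the unique influence function you use is automatically the efficient one.
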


\begin{proof}
The efficiency bound is $V^{\text{eff}}_\omega = \mathbb{E}[\phi(Z;\eta)^2]$. By squaring Eq.~(\ref{eq:eif_simplified}), the cross-term vanishes since
$\1(R=0)\cdot\1(R=1)=0$, thus leaving two terms.

\paragraph{Experimental term ($R=0$).}
Decompose $\hat{\tau}_{\text{AIPW}} - \tau^0_\omega
= (\hat{\tau}_{\text{AIPW}} - \tau^0(X)) + (\tau^0(X) - \tau^0_\omega)$.
At true nuisances, $\mathbb{E}[\hat{\tau}_{\text{AIPW}} \mid X, R=0] = \tau^0(X)$, so the cross-term vanishes upon conditioning. The conditional variance is computed using
$A \mid X, R=0 \sim \mathrm{Ber}(\pi(X))$, yielding
\begin{equation}
\mathrm{Var}(\hat{\tau}_{\text{AIPW}} \mid X, R=0)
= \frac{\sigma^2_{h|1}(X)}{\pi(X)} + \frac{\sigma^2_{h|0}(X)}{1-\pi(X)}.
\end{equation}

\paragraph{Observational term ($R=1$).}
Since $h(S,X) = \mathbb{E}[Y \mid S, X, R=1]$, we have
$\mathbb{E}[Y - h(S,X) \mid S, X, R=1] = 0$, so conditioning on $(S, X, R=1)$:
\begin{equation}
\mathbb{E}[\psi_{\text{obs}}^2 \mid S,X,R=1]
= \left(\frac{1-\rho_s}{\rho_s}\right)^{\!2}
\left(\frac{\pi_s - \pi}{\pi(1-\pi)}\right)^{\!2} \sigma^2_Y(S,X).
\end{equation}
Combining and accounting for the prefactors $1/(p_0 D_\omega)^2$ and the marginal
probabilities $P(R=0)=p_0$, $P(R=1)=1-p_0$ yields Eq.~(\ref{eq:efficiency_bound}).
\end{proof}

\subsection{Optimal weight for the sample WALTE}
\label{sec:optimal_weight}

Following Crump et al.\ (2006), we target the \emph{sample} WALTE
$\tau_{\omega,S} = \sum_i \tau^0(X_i)\omega(X_i) / \sum_i \omega(X_i)$ rather than its
population counterpart. This eliminates the squared-bias term
$(\tau^0(X) - \tau^0_\omega)^2$ from Eq.~(\ref{eq:efficiency_bound}), since the sample
estimand absorbs $\tau^0_\omega$ as a sample average rather than a population functional.

To unify the two terms, we use the density ratio identity
$(1-p_0)\,f_{X|R=1}(x) = p_0\,f_{X|R=0}(x)\,\rho(x)/(1-\rho(x))$ to rewrite the
observational expectation under the experimental measure. For this, we define
\begin{equation}
\Sigma_o(x) := \mathbb{E}\!\left[
\left(\frac{1-\rho_s}{\rho_s}\right)^{\!2}
\!\left(\frac{\pi_s - \pi}{\pi(1-\pi)}\right)^{\!2}
\sigma^2_Y(S,X) \,\Big|\, X, R=1\right].
\end{equation}
The objective becomes
\begin{equation}
J[\omega] = p_0 \cdot \mathbb{E}\!\left[\omega(X)^2 V(X) \mid R=0\right],
\qquad
V(x) := \underbrace{\frac{\sigma^2_{h|1}(x)}{\pi(x)} + \frac{\sigma^2_{h|0}(x)}{1-\pi(x)}}_{\Sigma_t(x)}
+ \underbrace{\frac{\rho(x)}{1-\rho(x)}\,\Sigma_o(x)}_{\text{outcome overlap cost}},
\label{eq:Vx}
\end{equation}
subject to the normalization $D_\omega = \mathbb{E}[\omega(X) \mid R=0] = 1$.

\begin{proposition}[Efficiency-optimal weight]
\label{prop:optimal_weight}
The weight $\omega_\mathrm{opt}(x)$ minimizing $J[\omega]$ subject to $D_\omega = 1$ is
\begin{equation}
\omega_\mathrm{opt}(x) = \frac{V(x)^{-1}}{\mathbb{E}\!\left[V(X)^{-1} \mid R=0\right]}
\;\propto\; V(x)^{-1},
\label{eq:omega_star}
\end{equation}
and the corresponding minimum efficiency bound is
$V^{\text{eff}}_{\omega_\mathrm{opt}} = \big(p_0\,\mathbb{E}[V(X)^{-1} \mid R=0]\big)^{-1}$.
\end{proposition}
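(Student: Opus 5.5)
This is a constrained quadratic minimization over functions of $x$, and we solve it pointwise. We minimize $J[\omega]=p_0\,\mathbb{E}[\omega(X)^2V(X)\mid R=0]$ subject to the linear constraint $\mathbb{E}[\omega(X)\mid R=0]=1$. We take $V(x)>0$ for almost every $x$, which holds whenever $\Sigma_t(x)>0$.

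The quickest route is the Cauchy--Schwarz inequality under the measure $P_{X\mid R=0}$. Write $1=\mathbb{E}[\omega\mid R=0]=\mathbb{E}[(\omega V^{1/2})\,V^{-1/2}\mid R=0]$. Squaring gives
\begin{equation*}
1\le \mathbb{E}[\omega^2V\mid R=0]\;\mathbb{E}[V^{-1}\mid R=0],
\end{equation*}
hence
\begin{equation*}
J[\omega]\ge p_0\big(\mathbb{E}[V^{-1}\mid R=0]\big)^{-1}.
\end{equation*}
Equality holds if and only if $\omega V^{1/2}$ is proportional to $V^{-1/2}$ almost surely, that is, $\omega\propto V^{-1}$. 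Imposing the normalization fixes the constant and gives $\omega_\mathrm{opt}=V^{-1}/\mathbb{E}[V^{-1}\mid R=0]$.

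As a cross-check, a Lagrangian argument gives the same answer. The pointwise first-order condition $2p_0\omega(x)V(x)=\lambda$ again yields $\omega\propto V^{-1}$, and strict convexity of $J$ on the affine constraint set makes this minimizer unique.

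Next we relate $J$ back to the efficiency bound of Proposition~\ref{prop:eif_variance}. With $D_\omega=1$, the squared-bias term dropped for the sample estimand, and the observational term rewritten via the density-ratio identity, the bound becomes $V^{\text{eff}}_\omega=J[\omega]/p_0^2$. Substituting $\omega_\mathrm{opt}$ then gives $\mathbb{E}[\omega_\mathrm{opt}^2V\mid R=0]=1/\mathbb{E}[V^{-1}\mid R=0]$, and therefore
\begin{equation*}
V^{\text{eff}}_{\omega_\mathrm{opt}}=\big(p_0\,\mathbb{E}[V^{-1}\mid R=0]\big)^{-1}.
\end{equation*}

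The main obstacle is bookkeeping the constants. I would recheck the density-ratio step, where $(1-p_0)f_{X\mid R=1}=p_0 f_{X\mid R=0}\,\rho/(1-\rho)$ is used to merge the $1/p_0$ and $(1-p_0)/p_0^2$ prefactors into the single $V(x)$ of Eq.~(\ref{eq:Vx}). The factor of $p_0$ in the final expression depends entirely on that convention, so it is worth tracking carefully. I would also note the integrability requirement $\mathbb{E}[V^{-1}\mid R=0]<\infty$, which is automatic if $V$ is bounded below, and that the weight is treated as fixed, not nuisance-dependent, as in Eq.~(\ref{eq:eif_simplified}).
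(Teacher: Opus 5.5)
Your proof is correct, but your main argument differs from the paper's. The paper's proof consists only of the Lagrangian step you give as a cross-check. It perturbs $\omega$, sets the first variation to zero, and applies the fundamental lemma of the calculus of variations to get $2p_0\,\omega(x)V(x)=\lambda$, then normalizes. As written, this only identifies a stationary point. It does not check that the point is a minimum (which silently relies on the convexity you mention), and it never derives the stated minimum efficiency bound.

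Your Cauchy--Schwarz argument, $1=\mathbb{E}[(\omega V^{1/2})\,V^{-1/2}\mid R=0]$, gives three things in one step:
\begin{itemize}
\item a global lower bound valid for every admissible $\omega$, including sign-indefinite ones;
\item the equality case $\omega\propto V^{-1}$, and hence uniqueness;
\item the minimal value $\mathbb{E}[\omega_\mathrm{opt}^2V\mid R=0]=1/\mathbb{E}[V^{-1}\mid R=0]$.
\end{itemize}

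Your conversion $V^{\text{eff}}_\omega=J[\omega]/p_0^2$ also checks out. The density-ratio identity turns the $(1-p_0)/p_0^2$ observational term into $p_0^{-1}\mathbb{E}[\omega^2\rho\,\Sigma_o/(1-\rho)\mid R=0]$. So with $D_\omega=1$ the bound is $p_0^{-1}\mathbb{E}[\omega^2V\mid R=0]$, which yields exactly the $(p_0\,\mathbb{E}[V^{-1}\mid R=0])^{-1}$ in the statement. Keeping $J$ and $V^{\text{eff}}$ distinct, as you do, is necessary, since the minimum of $J$ itself is $p_0/\mathbb{E}[V^{-1}\mid R=0]$.

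The Lagrangian route extends more mechanically to extra constraints, but your argument is the more complete proof of the proposition as stated.
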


\begin{proof}
Form the Lagrangian
$\mathcal{L}[\omega, \lambda] = p_0\,\mathbb{E}[\omega(X)^2 V(X) \mid R=0]
- \lambda\big(\mathbb{E}[\omega(X) \mid R=0] - 1\big)$.
Taking the functional derivative under perturbation $\omega \to \omega + \varepsilon\eta$,
\begin{equation}
\frac{\diff \mathcal{L}}{\diff \varepsilon}\bigg|_{\varepsilon=0}
= 2p_0\,\mathbb{E}[\eta(X)\omega(X)V(X) \mid R=0]
- \lambda\,\mathbb{E}[\eta(X) \mid R=0] = 0 \quad \forall\,\eta.
\end{equation}
By the fundamental lemma of the calculus of variations,
$2p_0\,\omega(x)V(x) = \lambda$ a.s., which yields $\omega_\mathrm{opt}(x) \propto V(x)^{-1}$.
The constant is fixed by $\mathbb{E}[\omega_\mathrm{opt}(X) \mid R=0] = 1$.
\end{proof}

\subsection{Lower bound on \texorpdfstring{$V(x)$}{V(x)} and the LT-O-DO weight}
\label{sec:lower_bound}

The optimal weight $\omega_\mathrm{opt}$ depends on \emph{all} nuisance functions
$(\sigma^2_{h|a}, \pi, \rho, \rho_s, \pi_s, \sigma^2_Y)$ through $V(x)$, and therefore
breaks Neyman-orthogonality of the LT-O-learner framework. We now derive a structural
lower bound on $V(x)$ that depends on $\pi$ and $\rho$ in a closed form, motivating the
LT-O-DO weight $\omega(x) = \pi^2(1-\pi)^2\rho$ as a principled approximation to $\omega_\mathrm{opt}$.

\paragraph{Treatment overlap term.}
Since $\sigma^2_{h|1}(1-\pi) + \sigma^2_{h|0}\pi \geq \min\{\sigma^2_{h|1}, \sigma^2_{h|0}\}$,
\begin{equation}
\Sigma_t(x) = \frac{\sigma^2_{h|1}(1-\pi) + \sigma^2_{h|0}\pi}{\pi(1-\pi)}
\;\geq\; \frac{C_t(x)^2}{\pi(x)(1-\pi(x))},
\qquad
C_t(x)^2 := \min\!\big\{\sigma^2_{h|1}(x), \sigma^2_{h|0}(x)\big\}.
\label{eq:Sigma_t_bound}
\end{equation}

\paragraph{Outcome overlap term.}
We separate the $\pi$-dependent factor from the surrogate-dependent factor by writing
\begin{equation}
\Sigma_o(x)
= \frac{1}{\pi(x)^2(1-\pi(x))^2}\,\mathbb{E}\!\left[
\left(\tfrac{1-\rho_s}{\rho_s}\right)^{\!2}\!\tilde m(S,X)\,\Big|\,X,R=1\right],
\qquad
\tilde m(s,x) := (\pi_s(s,x) - \pi(x))^2\,\sigma^2_Y(s,x) \geq 0.
\end{equation}
By the (first) mean value theorem for integrals applied to the nonnegative weighting
function $\big((1-\rho_s)/\rho_s\big)^2$, there exists $\bar{m}(x)$ in the range of
$\tilde m(\cdot, x)$ such that
\begin{equation}
\mathbb{E}\!\left[\left(\tfrac{1-\rho_s}{\rho_s}\right)^{\!2}\tilde m(S,X)\,\Big|\,X,R=1\right]
= \bar{m}(x)\cdot\mathbb{E}\!\left[\left(\tfrac{1-\rho_s}{\rho_s}\right)^{\!2}\,\Big|\,X,R=1\right].
\label{eq:mvt}
\end{equation}
Applying Jensen's inequality to $t \mapsto t^2$,
\begin{equation}
\mathbb{E}\!\left[\left(\tfrac{1-\rho_s}{\rho_s}\right)^{\!2}\,\Big|\,X,R=1\right]
\geq \left(\mathbb{E}\!\left[\tfrac{1-\rho_s}{\rho_s}\,\Big|\,X,R=1\right]\right)^{\!2}.
\label{eq:jensen}
\end{equation}
The first moment admits an exact identity. Using
$f(s|x, R=1) = \rho_s(s,x)\,f(s|x)\,/\,\rho(x)$ and the tower law
$\mathbb{E}[\rho_s(S,X) \mid X] = \mathbb{E}[R \mid X] = \rho(x)$, we yield
\begin{equation}
\mathbb{E}\!\left[\tfrac{1-\rho_s}{\rho_s}\,\Big|\,X,R=1\right]
= \frac{1}{\rho(x)} \int (1 - \rho_s(s,x))\,f(s|x)\,\diff s
= \frac{1 - \rho(x)}{\rho(x)}.
\label{eq:tower_identity}
\end{equation}
Combining Eq.~(\ref{eq:mvt})--Eq.~(\ref{eq:tower_identity}), we arrive at
\begin{equation}
\frac{\rho(x)}{1-\rho(x)}\,\Sigma_o(x)
\geq \frac{\rho}{1-\rho}\cdot \frac{1}{\pi^2(1-\pi)^2}\cdot\frac{(1-\rho)^2}{\rho^2}\,\bar{m}(x)
= \frac{(1-\rho(x))\,\bar{m}(x)}{\rho(x)\,\pi(x)^2(1-\pi(x))^2}
= \frac{C(x)^2}{\rho(x)\,\pi(x)^2(1-\pi(x))^2},
\label{eq:Sigma_o_bound}
\end{equation}
where we define the surrogate informativeness factor
$C(x)^2 := (1-\rho(x))\,\bar{m}(x)$.

\paragraph{Combined lower bound.}
Adding Eq.~(\ref{eq:Sigma_t_bound}) and Eq.~(\ref{eq:Sigma_o_bound}),
\begin{equation}
\boxed{\;
V(x) \;\geq\; \frac{C_t(x)^2}{\pi(x)(1-\pi(x))} + \frac{C(x)^2}{\rho(x)\,\pi(x)^2(1-\pi(x))^2}.
\;}
\label{eq:V_lower_bound}
\end{equation}

\paragraph{Justification of $\omega(x) = \pi^2(1-\pi)^2\rho$.}
Since $\omega_\mathrm{opt} \propto V(x)^{-1}$, the bound in Eq.~(\ref{eq:V_lower_bound}) gives
\begin{equation}
\omega_\mathrm{opt}(x) \;\leq\; \left(\frac{C_t(x)^2}{\pi(1-\pi)} + \frac{C(x)^2}{\rho\,\pi^2(1-\pi)^2}\right)^{\!-1}.
\label{eq:omega_upper}
\end{equation}
Under the joint-overlap-failure regime---where both treatment overlap ($\pi \to 0$ or
$1$) and outcome overlap ($\rho \to 0$) are limited---the second term in
Eq.~(\ref{eq:omega_upper}) dominates and is therefore the binding constraint. Retaining only
the binding term and setting $C(x) \equiv 1$ yields
\begin{equation}
\omega_\mathrm{opt}(x) \;\lessconst\; \rho(x)\,\pi(x)^2(1-\pi(x))^2,
\end{equation}
which we adopt as the LT-O-DO weight
\begin{equation}
\omega(x) = \pi(x)^2(1-\pi(x))^2\,\rho(x).
\end{equation}
This weight depends only on $(\pi, \rho)$ and therefore preserves Neyman-orthogonality of
the LT-O-learner framework. The squared treatment-overlap factor $\pi^2(1-\pi)^2$ arises
because poor treatment overlap amplifies the outcome overlap cost via the
$(\pi_s - \pi)^2/[\pi(1-\pi)]^2$ factor in $\Sigma_o$.

This weight matches the qualitative structure of $\omega_\mathrm{opt}$: it vanishes whenever
\emph{either} treatment overlap or outcome overlap fails, downweighting the same strata
that $\omega_\mathrm{opt}$ would.

\newpage
\section{Linear parametric instantiation: closed form and asymptotic variance}
\label{app:linear}

We instantiate the LT-O-learner with the linear parametric class
$\mathcal G=\{g(x)=\theta^\top x:\theta\in\mathbb R^d\}$.
This case admits a closed-form solution and yields an explicit
asymptotic-variance comparison with the LT-O-DR-learner.
Throughout, $\widehat\omega^{\,*}_i:=\omega^*(Z_i;\hat\eta)$ and
$\widehat{\mathcal{T}}_i:=\mathcal{T}_{\mathrm{LT}}(Z_i;\hat\eta)$ are the plug-in versions of
Eq.~(\ref{omega-star-def}) and Eq.~(\ref{Tau-LT-def}).

\subsection{Closed-form estimator}

\begin{proposition}[Closed-form estimator]\label{prop:closed-form}
Let $\mathbf X\in\mathbb R^{n\times d}$ stack $X_i^\top$ as rows,
$\widehat W=\mathrm{diag}(\widehat\omega^{\,*}_1,\dots,\widehat\omega^{\,*}_n)$, and
$\widehat{\mathcal{T}}=(\widehat{\mathcal{T}}_1,\dots,\widehat{\mathcal{T}}_n)^\top$.
If $\mathbf X^\top \widehat W\,\mathbf X\succ 0$, the empirical loss in Eq.~(\ref{eq:empirical-loss})
is strictly convex, and its unique minimizer is
\begin{equation}
\widehat\theta
=\bigl(\mathbf X^\top \widehat W\,\mathbf X\bigr)^{-1}\mathbf X^\top \widehat{\mathcal{T}}
=\Bigl(\sum_{i=1}^n \widehat\omega^{\,*}_i X_iX_i^\top\Bigr)^{-1}\Bigl(\sum_{i=1}^n \widehat{\mathcal{T}}_i X_i\Bigr).
\label{eq:closed-form}
\end{equation}
\end{proposition}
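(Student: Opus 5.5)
The empirical objective in Eq.~(\ref{eq:empirical-loss}) becomes an explicit quadratic in $\theta$ once we set $g(x)=\theta^\top x$, and I would prove the proposition by writing it out as such. Substituting gives
\begin{equation*}
F(\theta)=\sum_{i=1}^n \widehat\omega^{\,*}_i(\theta^\top X_i)^2-2\widehat{\mathcal{T}}_i\,\theta^\top X_i=\theta^\top\Bigl(\sum_{i=1}^n\widehat\omega^{\,*}_iX_iX_i^\top\Bigr)\theta-2\theta^\top\Bigl(\sum_{i=1}^n\widehat{\mathcal{T}}_iX_i\Bigr).
\end{equation*}
In matrix form this reads $F(\theta)=\theta^\top \mathbf X^\top\widehat W\mathbf X\,\theta-2\theta^\top\mathbf X^\top\widehat{\mathcal{T}}$. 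The first step is simply to check this identity, using $\sum_i \widehat\omega^{\,*}_iX_iX_i^\top=\mathbf X^\top\widehat W\mathbf X$ and $\sum_i\widehat{\mathcal{T}}_iX_i=\mathbf X^\top\widehat{\mathcal{T}}$.

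The second step is convexity. The Hessian of $F$ is the constant matrix $2\,\mathbf X^\top\widehat W\mathbf X$. Under the hypothesis $\mathbf X^\top\widehat W\mathbf X\succ0$, this Hessian is positive definite, so $F$ is strictly convex and coercive on $\mathbb R^d$. A minimizer therefore exists and is unique, and it is characterized by the first-order condition
\begin{equation*}
\nabla F(\theta)=2\mathbf X^\top\widehat W\mathbf X\,\theta-2\mathbf X^\top\widehat{\mathcal{T}}=0.
\end{equation*}
Because the matrix is invertible, solving this equation gives Eq.~(\ref{eq:closed-form}). One can also see uniqueness without calculus by completing the square:
\begin{equation*}
F(\theta)=(\theta-\widehat\theta)^\top \mathbf X^\top\widehat W\mathbf X(\theta-\widehat\theta)-\widehat\theta^\top\mathbf X^\top\widehat W\mathbf X\,\widehat\theta.
\end{equation*}
This displays $\widehat\theta$ as the unique minimizer directly.

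The one point needing care is that the pseudo-weights $\widehat\omega^{\,*}_i$ can be negative, as noted for the \TO and \DO. Consequently $\widehat W$ is not positive semidefinite in general, and convexity cannot be taken for granted. This is exactly why the statement assumes $\mathbf X^\top\widehat W\mathbf X\succ0$ rather than deriving it. I would remark that this condition holds with probability tending to one under Assumption~\ref{core-assumption}, since $\frac1n\mathbf X^\top\widehat W\mathbf X$ converges to $\E[\omega^*XX^\top]=\E[(1-\rho)\omega XX^\top]$ by property (2). No other obstacle arises, because the argument is purely finite-dimensional linear algebra.
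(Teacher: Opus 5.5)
Your proposal is correct and follows essentially the same route as the paper: substitute $g(x)=\theta^\top x$ to obtain the quadratic $\theta^\top\widehat H\theta-2\theta^\top\widehat b$, note that the Hessian $2\widehat H$ is positive definite by hypothesis, and solve the first-order condition (the paper only adds a $1/n$ normalization, which is immaterial). Your completing-the-square identity and your side remark on the sign-indefinite $\widehat\omega^{\,*}_i$ are valid extras; the paper raises the sign issue in a separate remark after the proof rather than inside it.
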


\begin{proof}
Substituting $g(X_i)=\theta^\top X_i$ into Eq.~(\ref{eq:empirical-loss}),
we yield $\widehat L_\omega(\theta)=\theta^\top \widehat H\theta - 2\theta^\top \widehat b$
with $\widehat H=n^{-1}\sum_i\widehat\omega^{\,*}_i X_iX_i^\top$ and
$\widehat b=n^{-1}\sum_i\widehat{\mathcal{T}}_iX_i$. The Hessian $\nabla^2_\theta\widehat L_\omega=2\widehat H$
is positive definite by assumption, so the first-order condition $\widehat H\theta=\widehat b$ has a
unique solution that is the global minimizer.
\end{proof}

\begin{remark}[Sample-level positivity]
For LT-O-LO ($\omega^*=(R-\rho)^2$) and LT-O-DR
($\omega^*=\mathbf 1(R=0)$), $\widehat\omega^{\,*}_i\ge 0$ a.s., so
$\mathbf X^\top\widehat W\mathbf X\succeq 0$, and strict positivity follows from the
usual full-rank condition on $\{X_i\}_{i:\widehat\omega^{\,*}_i>0}$. For LT-O-TO and LT-O-DO, however, the $R=0$ component carries the residual term
$\Omega$ and is not sample-wise sign-definite: LT-O-TO has $\1(R=0)\bigl[\pi^2(1-\pi)^2+2\pi(1-\pi)(1-2\pi)(A-\pi)\bigr]$, and LT-O-DO has
$\1(R=0)\,\rho\bigl[\rho\,\pi^2(1-\pi)^2+2\pi(1-\pi)(1-2\pi)(A-\pi)\bigr]$. The population analog $\mathbb E[\omega^*\mid X]=(1-\rho)\omega\ge 0$ together
with Assumptions~\ref{assumption-positivity},~\ref{core-assumption} ensure $\mathbf X^\top\widehat W\mathbf X\succ 0$ with
probability tending to one. In finite samples, a small ridge $\lambda I$ or
clipping of the residual term $\Omega$ guarantees PD-ness without affecting
the asymptotic statements below.
\end{remark}

\begin{remark}[WLS interpretation]\label{rem:wls}
When $\widehat\omega^{\,*}_i>0$ for every $i$ that contributes a nonzero $\widehat{\mathcal{T}}_i$,
the loss can be completed into a square:
\begin{align}
\widehat L_\omega(\theta)
= \frac{1}{n}\sum_{i=1}^n\widehat\omega^{\,*}_i\bigl(\widetilde Y_i-X_i^\top\theta\bigr)^2
- \frac{1}{n}\sum_{i=1}^n \frac{\widehat{\mathcal{T}}_i^{\,2}}{\widehat\omega^{\,*}_i},
\qquad \widetilde Y_i:=\widehat{\mathcal{T}}_i/\widehat\omega^{\,*}_i,
\end{align}
and $\widehat\theta$ is the WLS fit of $\widetilde Y_i$ on $X_i$ with weights
$\widehat\omega^{\,*}_i$. This applies to LT-O-DO when $\widehat\omega^{\,*}_i>0$ for all $i$.
For LT-O-DR and LT-O-TO, however, $R=1$ observations have $\widehat\omega^{\,*}_i=0$ but
$\widehat{\mathcal{T}}_i\neq 0$, so $\widetilde Y_i$ is undefined and the WLS reduction is invalid.
The correct unified statement is that $\widehat\theta$ solves the linear system
$\widehat H\theta=\widehat b$, which reduces to WLS only in the strictly
positive-weight regime.
\end{remark}

\subsection{Asymptotic variance}

We invoke the standard Z-estimator framework~\citep{Vaart.1998} with
cross-fitting~\citep{Chernozhukov.2018}. Define the score
\begin{equation}
m(Z;\theta,\eta)=\omega^*(Z;\eta)\,XX^\top\theta - \mathcal{T}_{\mathrm{LT}}(Z;\eta)\,X.
\label{eq:score}
\end{equation}

\begin{assumption}[Z-estimation regularity]\label{ass:zest}
Suppose $\tau_0(x)=\theta_0^\top x$ with $\theta_0\in\mathbb R^d$, $d$ fixed, and
that the nuisance estimator $\hat\eta$ is constructed via $K$-fold cross-fitting.
In addition to Assumptions~\ref{nondegen-overlap}--\ref{core-assumption}, assume:
\begin{enumerate}\itemsep0pt
\item[(i)] \emph{Invertibility:} $J(\omega):=\mathbb E[\omega^*(Z;\eta_0)\,XX^\top]\succ 0$.
\item[(ii)] \emph{Bounded second moments:} $\mathbb E[\|X\|^4]<\infty$ and
$\mathbb E[\xi(Z;\eta_0)^2\|X\|^2]<\infty$, where
$\xi(Z;\eta_0):=\mathcal{T}_{\mathrm{LT}}(Z;\eta_0)-\omega^*(Z;\eta_0)\,\tau_0(X)$.
\item[(iii)] \emph{Stochastic equicontinuity:}
$\bigl\|m(\cdot;\theta_0,\hat\eta)-m(\cdot;\theta_0,\eta_0)\bigr\|_{P,2}=o_p(1)$.
\item[(iv)] \emph{Orthogonality remainder;}
$\sqrt n\,\bigl\|\mathbb E\!\bigl[m(Z;\theta_0,\hat\eta)-m(Z;\theta_0,\eta_0)\bigm|\hat\eta\bigr]\bigr\|=o_p(1)$.
\item[(v)] \emph{Convergence of $\widehat J$:}
Let $\widehat J:= \mathbb P_n[\omega^*(Z;\hat\eta)XX^\top]$. Then $\|\widehat J - J(\omega)\|_{\mathrm{op}}=o_p(1)$.

\end{enumerate}
\end{assumption}

Conditions (iii)--(v) are the standard Z-estimator conditions; (iv) is implied by Theorem~\ref{thm:orthogonality}
(Neyman-orthogonality) when nuisance products are $o_p(n^{-1/2})$. We state them explicitly because
the quasi-oracle rate alone does not control the empirical process and bias
remainders of the moment equation.

\begin{lemma}[Asymptotic distribution]\label{lem:asymp-var}
Under Assumption~\ref{ass:zest},
\begin{align}
\sqrt n\bigl(\widehat\theta-\theta_0\bigr)\;\xrightarrow{d}\;
\mathcal N\!\bigl(0,\;\mathcal{V}(\omega)\bigr),
\qquad \mathcal{V}(\omega)=J(\omega)^{-1}\Sigma(\omega)J(\omega)^{-1},
\end{align}
with
\begin{align}
J(\omega)=\mathbb E[(1-\rho(X))\,\omega(X)\,XX^\top],
\qquad
\Sigma(\omega)=\mathbb E[V^\star_\omega(X)\,XX^\top],
\end{align}
and
\begin{align}
V^\star_\omega(X)=\mathrm{Var}\!\bigl(\mathcal{T}_{\mathrm{LT}}(Z;\eta_0)-\omega^*(Z;\eta_0)\tau_0(X)\bigm|X\bigr).
\end{align}
\end{lemma}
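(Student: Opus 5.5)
The plan is the standard Z-estimator argument applied to the linear first-order condition from Proposition~\ref{prop:closed-form}. Write $\widehat J=\mathbb P_n[\omega^*(Z;\hat\eta)XX^\top]$ and $\widehat b=\mathbb P_n[\mathcal{T}_{\mathrm{LT}}(Z;\hat\eta)X]$, so that $\widehat J\widehat\theta=\widehat b$. Subtracting $\widehat J\theta_0$ from both sides gives the exact identity
\begin{align*}
\widehat J\,\sqrt n(\widehat\theta-\theta_0)=-\sqrt n\,\mathbb P_n[m(Z;\theta_0,\hat\eta)].
\end{align*}
The right-hand side is $\sqrt n\,\mathbb P_n[\xi(Z;\hat\eta)X]$, where $\xi(Z;\eta)=\mathcal{T}_{\mathrm{LT}}(Z;\eta)-\omega^*(Z;\eta)\theta_0^\top X$.

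\textbf{Limit of the Jacobian.} By Assumption~\ref{ass:zest}(v), $\widehat J\to J(\omega)$ in operator norm. Properties (1)--(2) at the start of Appendix~\ref{app:proofs} give $\mathbb E[\omega^*\mid X]=(1-\rho(X))\omega(X)$. Hence $J(\omega)=\mathbb E[(1-\rho)\omega XX^\top]$, and it is invertible by (i). So $\widehat J^{-1}\to J(\omega)^{-1}$ with probability tending to one.

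\textbf{Splitting the score.} Decompose
\begin{align*}
\sqrt n\,\mathbb P_n[m(\hat\eta)]=\sqrt n\,\mathbb P_n[m(\eta_0)]+\sqrt n(\mathbb P_n-P)[m(\hat\eta)-m(\eta_0)]+\sqrt n\,P[m(\hat\eta)-m(\eta_0)].
\end{align*}
\begin{enumerate}
\item The first term is the leading term. At the truth, properties (1)--(5) give $\mathbb E[\xi(Z;\eta_0)\mid X]=(1-\rho)\omega(\tau^0-\theta_0^\top X)=0$, since $\tau^0=\theta_0^\top x$. The summands are therefore mean zero with covariance $\mathbb E[\xi^2XX^\top]=\mathbb E[V^\star_\omega(X)XX^\top]=\Sigma(\omega)$, which is finite by (ii). The multivariate CLT then gives convergence to $\mathcal N(0,\Sigma(\omega))$.
\item The second term is an empirical-process term. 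Under cross-fitting, condition on the training folds so that $\hat\eta$ is fixed. Chebyshev's inequality applied per fold, together with (iii), makes this term $o_p(1)$.
\item The third term is the bias remainder, which is $o_p(1)$ by (iv). This is exactly where Theorem~\ref{thm:orthogonality} enters: the first-order Gateaux term vanishes. The second-order remainder is a sum of products of nuisance errors, the same products that appear in the proof of Theorem~\ref{thm:error-bound}, and these are $o_p(n^{-1/2})$.
\end{enumerate}

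\textbf{Conclusion.} Combining the three terms with Slutsky's lemma and the continuous mapping theorem gives $\sqrt n(\widehat\theta-\theta_0)\to\mathcal N(0,J^{-1}\Sigma J^{-1})$.

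The main obstacle is verifying the centering $\mathbb E[\xi(Z;\eta_0)\mid X]=0$ together with the identification of $J(\omega)$. Both depend on the sign-indefinite residual $\Omega$ having conditional mean zero, and on the cross-fold handling of (iii). Since (iii) and (iv) are assumed, the remaining work is mainly bookkeeping across the $K$ folds.
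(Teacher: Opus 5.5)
Your proposal is correct and follows the paper's proof. Both use the linear estimating equation $\widehat J\widehat\theta=\widehat b$, center $\xi(Z;\eta_0)$ via $\mathbb E[\omega^*\mid X]=(1-\rho)\omega$ and $\mathbb E[\mathcal{T}_{\mathrm{LT}}\mid X]=(1-\rho)\omega\,\tau_0$, linearize using conditions (iii)--(v), and apply the CLT under (ii). The only difference is that you spell out the three-term split of $\sqrt n\,\mathbb P_n[m(Z;\theta_0,\hat\eta)]$ and the fold-wise conditional Chebyshev step, which the paper's terse proof leaves implicit.
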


\begin{proof}
At the truth, $m(Z;\theta_0,\eta_0)=-X\,\xi(Z;\eta_0)$. The two identities
$\mathbb E[\omega^*\mid X]=(1-\rho)\omega$ and
$\mathbb E[\mathcal{T}_{\mathrm{LT}}\mid X]=(1-\rho)\omega\,\tau_0$ give
$\mathbb E[\xi\mid X]=0$, hence $\mathbb E[m(Z;\theta_0,\eta_0)]=0$ and
$\Sigma(\omega)=\mathbb E[\xi^2\,XX^\top]=\mathbb E[V^\star_\omega(X)\,XX^\top]$.
Linearity of $m$ in $\theta$ gives $\nabla_\theta m=\omega^*XX^\top$, and by
the tower property,
$\mathbb E[\nabla_\theta m]=J(\omega)$.
The estimating equation $n^{-1}\sum_i m(Z_i;\widehat\theta,\hat\eta)=0$,
combined with linearity and Assumption~\ref{ass:zest}(iii)--(iv), yields the
asymptotic linearization
\begin{align}
\sqrt n(\widehat\theta-\theta_0)
=J(\omega)^{-1}\,\frac{1}{\sqrt n}\sum_{i=1}^n X_i\,\xi(Z_i;\eta_0)+o_p(1),
\end{align}
and the central-limit theorem under Assumption~\ref{ass:zest}(ii) gives the stated normal limit.
\end{proof}

\subsection{Variance comparison in low-overlap regimes}
\label{app:linear-asymvar}
In this section, we prove Theorem~\ref{thm:linear-asymvar-main}.

Direct algebra yields explicit forms for $V^\star_\omega(X)$ in two
canonical cases.

\begin{lemma}[Conditional residual variances]\label{lem:Vstar}
Define
$\Psi(X):=\mathbb E\!\Bigl[\dfrac{(1-\rho_s)^2(\pi_s-\pi)^2(Y-h)^2}{\rho_s^{2}}\Bigm|X,R=1\Bigr]$
and $\sigma_h^2(a,X):=\mathrm{Var}(h(S,X)\mid A=a,X,R=0)$. Then

\begin{align}
V^{\star}_{\omega}(X)\;:=\;\mathrm{Var}\!\bigl(\xi(Z;\eta_0)\bigm| X\bigr)
\;=\;\omega(X)^{2}\bigl[(1-\rho(X))\,\Sigma_{t}(X)+\rho(X)\,\Sigma_{o}(X)\bigr],
\label{eq:vstar-result}
\end{align}
where
\begin{align*}
\Sigma_{t}(X)
&:=\frac{\sigma_{\mu}(1,X)}{\pi(X)}+\frac{\sigma_{\mu}(0,X)}{1-\pi(X)},\\[2pt]
\Sigma_{o}(X)
&:=\mathbb{E}\!\left[\!\left(\tfrac{1-\rho(X,S)}{\rho(X,S)}\right)^{\!2}\!\!\left(\tfrac{\pi(X,S)-\pi(X)}{\pi(X)(1-\pi(X))}\right)^{\!2}\!\sigma_{y}(S,X)\,\Big|\,X,R=1\right].
\end{align*}
Specifically, 
\begin{align}
V^\star_1(X)
&=(1-\rho)\!\left[\frac{\sigma_h^{2}(1,X)}{\pi}+\frac{\sigma_h^{2}(0,X)}{1-\pi}\right]
+\frac{\rho\,\Psi(X)}{\pi^{2}(1-\pi)^{2}},\label{eq:VstarDR}\\[2pt]
V^\star_{\omega}(X)
&=(1-\rho)\rho^{2}\pi(1-\pi)\!\bigl[(1-\pi)\sigma_h^{2}(1,X)+\pi\sigma_h^{2}(0,X)\bigr]
+\rho^{3}\Psi(X),\label{eq:VstarDO}
\end{align}
the second formula corresponding to $\omega(x)=\pi(x)(1-\pi(x))\rho(x)$. The theorem shows that the weight $\pi(x)(1-\pi(x))\rho(x)$ is already sufficient to resolve the overlap issue; stability of the LT-O-DO-learner with the stronger weight $\omega=\pi^{2}(1-\pi)^{2}\rho$ used in the main text follows naturally.
\end{lemma}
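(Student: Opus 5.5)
The plan is to first simplify the residual $\xi(Z;\eta_0)=\mathcal{T}_{\mathrm{LT}}(Z;\eta_0)-\omega^*(Z;\eta_0)\tau_0(X)$ algebraically, and only then take conditional moments. At the true nuisances, $\mu(1,X)-\mu(0,X)=\tau^0(X)=\tau_0(X)$ by the identification lemma. Substituting the definitions in Eq.~(\ref{omega-star-def}) and Eq.~(\ref{Tau-LT-def}), the $\Omega$-contribution to $\mathcal{T}_{\mathrm{LT}}$ is $\tau_0(X)\Omega(Z;\eta_0)$. The $\Omega$-contribution to $\omega^*\tau_0$ is the same quantity, so the two cancel exactly. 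This leaves
\begin{align*}
\xi(Z;\eta_0)=\1(R=0)\,\omega(X)\bigl(\hat\tau_{\mathrm{AIPW}}(Z;\eta_0)-\tau_0(X)\bigr)+\1(R=1)\,\omega(X)\,\psi_{\mathrm{obs}}(Z;\eta_0).
\end{align*}
I expect this cancellation to be the main (though mild) obstacle. It is the only place where the derivative-based correction $\Omega$ could inflate the variance, so it must be verified carefully at $\eta_0$; away from the truth it fails.

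Next I show that $\mathbb E[\xi\mid X]=0$, using properties (3) and (4) of the Proofs section. This gives $V^\star_\omega(X)=\mathbb E[\xi^2\mid X]$. The indicators $\1(R=0)$ and $\1(R=1)$ are disjoint, so the cross term vanishes. Since $\omega(X)$ is $X$-measurable, the remaining two terms are
\begin{align*}
V^\star_\omega(X)=\omega(X)^2\Bigl[(1-\rho(X))\,\mathbb E\bigl[(\hat\tau_{\mathrm{AIPW}}-\tau_0)^2\mid X,R=0\bigr]+\rho(X)\,\mathbb E\bigl[\psi_{\mathrm{obs}}^2\mid X,R=1\bigr]\Bigr].
\end{align*}

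For the $R=0$ term, I write
\begin{align*}
\hat\tau_{\mathrm{AIPW}}-\tau_0=\frac{A}{\pi}(h-\mu_1)-\frac{1-A}{1-\pi}(h-\mu_0).
\end{align*}
Squaring kills the cross term because $A(1-A)=0$. Conditioning on $(A,X,R=0)$ then yields $\sigma_h^2(1,X)/\pi+\sigma_h^2(0,X)/(1-\pi)=\Sigma_t(X)$, where $\sigma_\mu(a,X)$ in the statement is read as $\sigma_h^2(a,X)$.

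For the $R=1$ term, I use the tower property over $S$ together with $\mathbb E[(Y-h)^2\mid S,X,R=1]=\sigma_y(S,X)$. This gives $\Sigma_o(X)$ exactly, and equivalently $\Psi(X)/(\pi^2(1-\pi)^2)$, because $\pi(X)$ factors out of the inner expectation. Together these establish Eq.~(\ref{eq:vstar-result}).

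The special cases then follow by substitution. Setting $\omega\equiv 1$ gives Eq.~(\ref{eq:VstarDR}) directly. For $\omega=\pi(1-\pi)\rho$, we have $\omega^2=\pi^2(1-\pi)^2\rho^2$, and the two terms become:
\begin{itemize}
\item the $\Sigma_t$ term: $(1-\rho)\rho^2\pi(1-\pi)\bigl[(1-\pi)\sigma_h^2(1,X)+\pi\sigma_h^2(0,X)\bigr]$;
\item the $\Sigma_o$ term: $\rho\cdot\rho^2\pi^2(1-\pi)^2\cdot\Psi/(\pi^2(1-\pi)^2)=\rho^3\Psi$.
\end{itemize}
This is Eq.~(\ref{eq:VstarDO}).
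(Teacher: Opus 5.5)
Your proposal is correct and follows the paper's proof. Like the paper, you cancel the two $\Omega$ contributions at $\eta_0$ to reduce $\xi$ to $\1(R=0)\,\omega(X)(\hat\tau_{\mathrm{AIPW}}-\tau_0)+\1(R=1)\,\omega(X)\,\psi_{\mathrm{obs}}$, then supply the DR-style variance computation the paper omits; because both stratum-wise conditional means of $\xi$ vanish, you rightly get equality rather than the lower bound of the DR variance lemma, and your substitutions for $\omega\equiv 1$ and $\omega=\pi(1-\pi)\rho$ check out.
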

\begin{proof}

Substituting the definitions
\begin{align}
\mathcal{T}_{\mathrm{LT}}(Z;\eta_0)
&=\mathbf{1}(R{=}0)\,\omega(X)\,\hat{\tau}_{\mathrm{AIPW}}(Z;\eta_0)
 +\mathbf{1}(R{=}1)\,\omega(X)\,\psi_{\mathrm{obs}}(Z;\eta_0)
 +\tau_{0}(X)\,\Omega(Z;\eta_0),\label{eq:TLT-def}\\
\omega^{*}(Z;\eta_0)
&=\mathbf{1}(R{=}0)\,\omega(X)+\Omega(Z;\eta_0),\label{eq:omegastar-def}
\end{align}
into $\xi$, and using $\tau_{0}(X)=\mu(1,X)-\mu(0,X)$,
\begin{align}
\xi(Z;\eta_0)
&=\mathbf{1}(R{=}0)\,\omega(X)\,\hat{\tau}_{\mathrm{AIPW}}
+\mathbf{1}(R{=}1)\,\omega(X)\,\psi_{\mathrm{obs}}
+\tau_{0}(X)\,\Omega\notag\\
&\quad-\mathbf{1}(R{=}0)\,\omega(X)\,\tau_{0}(X)
-\Omega\,\tau_{0}(X)\notag\\[2pt]
&=\mathbf{1}(R{=}0)\,\omega(X)\bigl[\hat{\tau}_{\mathrm{AIPW}}(Z;\eta_0)-\tau_{0}(X)\bigr]
\;+\;\mathbf{1}(R{=}1)\,\omega(X)\,\psi_{\mathrm{obs}}(Z;\eta_0).\label{eq:D-canceled}
\end{align}
The two $\Omega$ terms cancel exactly. Then, the derivation is similar to the standard DR variance decomposition in Appendix~\ref{app:variance}. For simplicity, we omit the details.
\end{proof}

\begin{remark}[Tower-law identity for $\Psi$]\label{rem:psi}
Using $f(s\mid X,R=1)=f(s\mid X)\rho_s/\rho$ and Jensen,
$\mathbb E[\rho_s^{-1}\mid X,R=1]=\rho^{-1}$ and
$\mathbb E[\rho_s^{-2}\mid X,R=1]\ge\rho^{-2}$. Thus $\Psi(X)\asymp\rho^{-2}$
(here and below, $A\asymp B$ means $cB\le A\le CB$ for some constants $0<c\le C<\infty$)
holds whenever $|\pi_s-\pi|$ and $\mathbb E[(Y-h)^2\mid X,S,R=1]$ are bounded
above and below and $\rho_s$ degenerates uniformly with $\rho$ (i.e.\
$\rho_s\asymp\rho$). We assume this explicitly below.
\end{remark}

\begin{assumption}[Low-overlap sequence]\label{ass:low-overlap}
There exist constants $c_0,c_1,c_3,c_4,p_0,p_1,\bar\sigma,\bar\Psi>0$,
$\lambda_R,\lambda_B>0$, $\varepsilon_0>0$, and a covariate bound $C_X<\infty$
such that, for all $\varepsilon\le\varepsilon_0$:
\begin{enumerate}\itemsep0pt
\item[(a)] \emph{Bounded covariates and nonvanishing experimental-sample mass:}
$\|X\|\le C_X$ and $\rho_\varepsilon\le 1-c_3$ a.s.\ under $P_\varepsilon$.
\item[(b)] \emph{Bulk overlap:} there exists $\mathcal B$ with $P_\varepsilon(\mathcal B)\ge p_1$
on which $\pi_\varepsilon(1-\pi_\varepsilon)\rho_\varepsilon\ge c_0$, and
$\mathbb E_\varepsilon[XX^\top\mathbf 1\{\mathcal B\}]\succeq\lambda_B I$ uniformly.
\item[(c)] \emph{Low-overlap region:} there exists $\mathcal R_\varepsilon$ with
$P_\varepsilon(\mathcal R_\varepsilon)\ge p_0$,
$\mathbb E_\varepsilon[XX^\top\mathbf 1\{\mathcal R_\varepsilon\}]\succeq\lambda_R I$
uniformly, and on $\mathcal R_\varepsilon$ either
$\pi_\varepsilon(1-\pi_\varepsilon)\asymp\varepsilon$ (low-TO scenario), or
$\rho_\varepsilon\asymp\varepsilon$ with $\pi_\varepsilon(1-\pi_\varepsilon)\ge c_4$
(low-LO scenario)
\item[(d)] \emph{Bounded conditional variances:}
$c_1\le\sigma_{h,\varepsilon}^2(a,X)\le\bar\sigma^2$ uniformly.
\item[(e)] \emph{Surrogate-propensity scaling:} in the low-LO scenario,
$\Psi_\varepsilon(X)\asymp\rho_\varepsilon(X)^{-2}$ on $\mathcal R_\varepsilon$;
in the low-TO scenario, $\Psi_\varepsilon$ is bounded above on $\mathcal R_\varepsilon$.
In both scenarios, $\rho_\varepsilon(X)^{3}\,\Psi_\varepsilon(X)\le\bar\Psi$ a.s.\ under $P_\varepsilon$.
\end{enumerate}
\end{assumption}

We call $\varepsilon$ in condition (c) the \textbf{overlap score} as it describes the severity of the overlap issue. In Condition (c), the second-moment bound is the directional non-degeneracy missing from the original statement: it ensures that no direction $v$ is orthogonal to the support of $X$ on $\mathcal R_\varepsilon$.

Theorem~\ref{thm:linear-asymvar-main} is a direct consequence of the following more explicit statement.

\begin{theorem}[Variance stabilization, detailed]\label{thm:linear-asymvar-detail}
Under Assumption~\ref{ass:low-overlap}, for every fixed $v\in\mathbb R^d\setminus\{0\}$,
\begin{align}
v^\top \mathcal{V}_\varepsilon(\mathbf 1)\,v\;\xrightarrow[\varepsilon\downarrow 0]{}\;\infty,
\qquad
\limsup_{\varepsilon\downarrow 0}\,v^\top \mathcal{V}_\varepsilon(\omega_\varepsilon)\,v\;<\;\infty.
\end{align}
In particular,
$v^\top \mathcal{V}_\varepsilon(\mathbf 1)\,v\,/\,v^\top \mathcal{V}_\varepsilon(\omega_\varepsilon)\,v\to\infty$ as $\varepsilon\downarrow 0$.
\end{theorem}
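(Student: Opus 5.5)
The plan is to work directly with the sandwich form from Lemma~\ref{lem:asymp-var}, $\mathcal V_\varepsilon(\omega)=J^{-1}\Sigma J^{-1}$, with $J(\omega)=\mathbb E_\varepsilon[(1-\rho)\omega XX^\top]$ and $\Sigma(\omega)=\mathbb E_\varepsilon[V^\star_\omega(X)XX^\top]$. We plug in the explicit conditional variances from Lemma~\ref{lem:Vstar}: (\ref{eq:VstarDR}) for $\omega\equiv 1$ and (\ref{eq:VstarDO}) for $\omega_\varepsilon=\pi_\varepsilon(1-\pi_\varepsilon)\rho_\varepsilon$. For a fixed $v\neq 0$ we then need two things. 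First, a lower bound on $v^\top\mathcal V(\mathbf 1)v$ that diverges. Second, an upper bound on $v^\top\mathcal V(\omega_\varepsilon)v$ that stays finite. Both reduce to eigenvalue bounds on $J$ and $\Sigma$.

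\textbf{Unweighted case.} By Assumption~\ref{ass:low-overlap}(a), $c_3 I\cdot\mathbb E[XX^\top]\preceq J(\mathbf 1)\preceq \mathbb E[XX^\top]\preceq C_X^2 I$, so $\lambda_{\max}(J(\mathbf 1))\le C_X^2$. Hence $v^\top J^{-1}\Sigma J^{-1}v\ge \lambda_{\min}(\Sigma)\|J^{-1}v\|^2\ge \lambda_{\min}(\Sigma)\|v\|^2/C_X^4$. It then suffices to show $\lambda_{\min}(\Sigma(\mathbf 1))\to\infty$. Since $V^\star_1\ge 0$, we have $\Sigma(\mathbf 1)\succeq \mathbb E[V^\star_1 XX^\top\mathbf 1\{\mathcal R_\varepsilon\}]\succeq(\inf_{\mathcal R_\varepsilon}V^\star_1)\lambda_R I$. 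In the low-TO scenario, the first term of (\ref{eq:VstarDR}) is at least $(1-\rho)c_1/(\pi(1-\pi))\ge c_3c_1/\varepsilon$ up to constants. This uses $1/\pi+1/(1-\pi)=1/(\pi(1-\pi))$ together with (d). In the low-LO scenario, the second term is $\rho\Psi/(\pi^2(1-\pi)^2)\gtrsim \rho\cdot\rho^{-2}=\rho^{-1}\asymp\varepsilon^{-1}$ by (e), since $\pi(1-\pi)\le 1/4$. Either way $\inf_{\mathcal R_\varepsilon}V^\star_1\gtrsim\varepsilon^{-1}\to\infty$.

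\textbf{Weighted case.} Here $v^\top\mathcal V v\le \lambda_{\max}(\Sigma)\|J^{-1}v\|^2\le\lambda_{\max}(\Sigma)\|v\|^2/\lambda_{\min}(J)^2$. For the denominator, restrict to the bulk set: $J(\omega_\varepsilon)\succeq \mathbb E[(1-\rho)\omega_\varepsilon XX^\top\mathbf 1\{\mathcal B\}]\succeq c_3c_0\lambda_B I$ by (a)--(b). This gives a uniform lower eigenvalue bound. For the numerator, bound (\ref{eq:VstarDO}) pointwise. The first term is at most $\rho^2\pi(1-\pi)\bar\sigma^2\le\bar\sigma^2$, because $(1-\pi)\sigma_h^2(1)+\pi\sigma_h^2(0)\le\bar\sigma^2$. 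The second term is $\rho^3\Psi\le\bar\Psi$ by (e). Hence $V^\star_{\omega_\varepsilon}\le\bar\sigma^2+\bar\Psi$, and $\lambda_{\max}(\Sigma)\le(\bar\sigma^2+\bar\Psi)C_X^2$. Combining these gives a $\limsup$ bound independent of $\varepsilon$, and the ratio statement follows immediately from the two limits.

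\textbf{Main obstacle.} The crux is checking that the closed forms in Lemma~\ref{lem:Vstar} are legitimate uniformly along the sequence $P_\varepsilon$, with the $\Omega$-terms cancelling in $\xi$ as shown in (\ref{eq:D-canceled}). The inversion of $J(\mathbf 1)$ should cause no trouble, since we only need an upper eigenvalue bound there. If any constant in (\ref{eq:VstarDR}) is only correct up to the $(1-\rho)$ and $\rho$ mixing factors, I would simply carry those factors along, using $\rho\le 1-c_3$. They do not affect the $\varepsilon^{-1}$ rate on $\mathcal R_\varepsilon$.
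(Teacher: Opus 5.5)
Your proposal is correct and matches the paper's proof in substance. The paper writes $v^\top\mathcal V_\varepsilon(\omega)v=\mathbb E_\varepsilon[V^\star_{\omega,\varepsilon}(X)(X^\top u_\varepsilon)^2]$ with $u_\varepsilon=J_\varepsilon(\omega)^{-1}v$, bounds $\|u_\varepsilon\|$ below via $\|J_\varepsilon\|_{\mathrm{op}}\le C_X^2$ and above via the bulk set $\mathcal B$, and then uses the same $\gtrsim 1/\varepsilon$ bound on $\mathcal R_\varepsilon$ and a uniform $\bar\sigma^2/4+\bar\Psi$ bound for the weighted case, which is your eigenvalue reduction in different notation. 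The one thing the paper adds (its Step~4) is a check that $v^\top\mathcal V_\varepsilon(\omega_\varepsilon)v$ is bounded away from zero, via $V^\star_{\omega_\varepsilon,\varepsilon}\ge c_3c_0^2c_1$ on $\mathcal B$ together with (b), so that the ratio is well defined. Your ``follows immediately from the two limits'' should include that line.
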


\begin{proof}
Substituting $V_\varepsilon(\omega)=J_\varepsilon(\omega)^{-1}\Sigma_\varepsilon(\omega)J_\varepsilon(\omega)^{-1}$:
\begin{equation}
v^\top \mathcal{V}_\varepsilon(\omega)\,v
=\bigl(J_\varepsilon(\omega)^{-1}v\bigr)^\top\Sigma_\varepsilon(\omega)\bigl(J_\varepsilon(\omega)^{-1}v\bigr)
=\mathbb E_\varepsilon\!\bigl[V^\star_{\omega,\varepsilon}(X)\,(X^\top u_\varepsilon)^2\bigr],
\label{eq:directional}
\end{equation}
with $u_\varepsilon:=J_\varepsilon(\omega)^{-1}v$. We bound this quantity for
$\omega=\mathbf 1$ and $\omega=\omega_\varepsilon$ separately.

\smallskip
\emph{Step 1 (Regularity of $J_\varepsilon$).}
For $\omega\in\{\mathbf 1,\omega_\varepsilon\}$ we have $0\le(1-\rho_\varepsilon)\omega\le 1$, so
$\|J_\varepsilon(\omega)\|_{\mathrm{op}}\le C_X^2$ by (a). Hence
$u_\varepsilon:=J_\varepsilon(\omega)^{-1}v$ satisfies the lower bound
$\|u_\varepsilon\|\ge\|v\|/C_X^2$ (used in Step~2). Conversely, (a)--(b) give
\[
J_\varepsilon(\mathbf 1)\succeq c_3\,\mathbb E_\varepsilon[XX^\top\mathbf 1\{\mathcal B\}]\succeq c_3\lambda_B I,
\qquad
J_\varepsilon(\omega_\varepsilon)\succeq c_0c_3\,\mathbb E_\varepsilon[XX^\top\mathbf 1\{\mathcal B\}]\succeq c_0c_3\lambda_B I,
\]
so $\|J_\varepsilon(\omega)^{-1}\|_{\mathrm{op}}\le 1/(c_0c_3\lambda_B)$ and
$\|u_\varepsilon\|\le\|v\|/(c_0c_3\lambda_B)$ uniformly in $\varepsilon$ (used in Step~3).

\smallskip
\emph{Step 2 (DR diverges).} Restrict the integral in Eq.~(\ref{eq:directional})
to $\mathcal R_\varepsilon$. By Lemma~\ref{lem:Vstar} and (a),(c)--(e):
in the low-TO scenario, using $\sigma_{h,\varepsilon}^2(a,X)\ge c_1$ from (d)
and the algebraic identity $1/\pi+1/(1-\pi)=1/[\pi(1-\pi)]$, the first term
of Eq.~(\ref{eq:VstarDR}) satisfies
\begin{equation}
(1-\rho_\varepsilon)\!\left[\tfrac{\sigma_{h,\varepsilon}^2(1,X)}{\pi_\varepsilon}+\tfrac{\sigma_{h,\varepsilon}^2(0,X)}{1-\pi_\varepsilon}\right]
\;\ge\;\tfrac{c_3 c_1}{\pi_\varepsilon(1-\pi_\varepsilon)}\;\asymp\;\tfrac{1}{\varepsilon},
\end{equation}
giving $V^\star_{\mathbf 1,\varepsilon}\gtrsim 1/\varepsilon$ on $\mathcal R_\varepsilon$;
in the low-LO scenario, since $\pi_\varepsilon(1-\pi_\varepsilon)\ge c_4$ on
$\mathcal R_\varepsilon$, the second term of~Eq.~(\ref{eq:VstarDR}) satisfies
$\rho_\varepsilon\Psi_\varepsilon/[\pi_\varepsilon(1-\pi_\varepsilon)]^2\asymp\rho_\varepsilon^{-1}\asymp 1/\varepsilon$.
Either way,
\begin{align}
v^\top \mathcal{V}_\varepsilon(\mathbf 1)\,v
\ge\frac{c}{\varepsilon}\,\mathbb E_\varepsilon\!\bigl[(X^\top u_\varepsilon)^2\mathbf 1\{\mathcal R_\varepsilon\}\bigr]
\ge\frac{c}{\varepsilon}\,\lambda_R\|u_\varepsilon\|^2
\ge\frac{c'}{\varepsilon}\|v\|^2,
\end{align}
where the second inequality uses (c) and the third uses Step~1. Taking $\varepsilon\downarrow 0$ gives divergence.

\smallskip
\emph{Step 3 (DO bounded).} By Lemma~\ref{lem:Vstar}, (a) and (d)--(e),
$V^\star_{\omega_\varepsilon,\varepsilon}(X)$ is uniformly bounded on $\mathcal X$:
the first term of Eq.~(\ref{eq:VstarDO}) is at most $\bar\sigma^2/4$ via
$(1-\rho_\varepsilon)\rho_\varepsilon^2\pi_\varepsilon(1-\pi_\varepsilon)\le 1/4$
and (d), while the second term $\rho_\varepsilon^3\Psi_\varepsilon\le\bar\Psi$
by (e). Setting $\bar V:=\bar\sigma^2/4+\bar\Psi$,
\begin{align}
v^\top \mathcal{V}_\varepsilon(\omega_\varepsilon)\,v
\le \bar V\cdot\mathbb E_\varepsilon\!\bigl[(X^\top u_\varepsilon)^2\bigr]
=\bar V\cdot u_\varepsilon^\top\mathbb E_\varepsilon[XX^\top]u_\varepsilon
\le\bar V C_X^2\|u_\varepsilon\|^2,
\end{align}
which is uniformly bounded by Step~1.

\smallskip
\emph{Step 4 (Ratio well-defined).} On $\mathcal B$ we have
$(1-\rho_\varepsilon)\omega_\varepsilon=(1-\rho_\varepsilon)\pi_\varepsilon(1-\pi_\varepsilon)\rho_\varepsilon\ge c_3 c_0$,
and from Eq.~(\ref{eq:VstarDO}) together with (d),
$V^\star_{\omega_\varepsilon,\varepsilon}\ge c_3 c_0^2 c_1$ on $\mathcal B$.
Combined with $\mathbb E_\varepsilon[XX^\top\mathbf 1\{\mathcal B\}]\succeq\lambda_B I$ from (b),
$v^\top \mathcal{V}_\varepsilon(\omega_\varepsilon)v\ge c_3 c_0^2 c_1\,\lambda_B\|u_\varepsilon\|^2>0$
for $v\ne 0$. Combining Steps~2--4 gives the claim.
\end{proof}

\begin{remark}[Scope of the comparison]
Theorem~\ref{thm:linear-asymvar-main} (detailed in Theorem~\ref{thm:linear-asymvar-detail}) establishes \emph{directional robustness in the low-overlap regime}, not uniform PSD (positive semi-definite) dominance: in the high-overlap bulk,
the smaller $J(\omega)$ relative to $J(\mathbf 1)$ can tilt the comparison
in the opposite direction. The non-degeneracy condition
$\mathbb E[XX^\top\mathbf 1\{\mathcal R_\varepsilon\}]\succeq\lambda_R I$
is what makes the divergence hold for every direction $v$; without it,
one obtains divergence only along $v$ for which $X^\top J(\mathbf 1)^{-1}v$
has non-vanishing $L^2$ mass on the low-overlap region.
\end{remark}

\end{document}